


\documentclass[11pt]{article}

\usepackage{amsthm}
\usepackage{jmlr2e}
\usepackage{mlmath}
\usepackage[shortlabels]{enumitem}
\usepackage{tabularx}
\usepackage{multirow}
\usepackage{booktabs}
\usepackage{subfigure}
\usepackage{caption}

\usepackage{xcolor}

\usepackage{makecell}


\newtheorem{thm}{Theorem}
\newtheorem{cor}{Corollary}

\newtheorem*{prop*}{Proposition}
\newtheorem{prop}{Proposition}
\newtheorem{rmk}{Remark}

\newtheorem*{theorem*}{Theorem}
\newtheorem*{lemma*}{Lemma}
\newtheorem*{property*}{Property}
\newtheorem*{remark*}{Remark}
\newtheorem*{assumption*}{Assumption}

\newcommand{\rnarr}{\textnormal{narr}}
\newcommand{\rwide}{\textnormal{wide}}

\newcommand{\rrank}{\textnormal{rank}}
\newcommand{\rdim}{\textnormal{dim}}
\newcommand{\rspan}{\textnormal{span}}

\newcommand{\TT}{\text{T}} 
 
\newcommand{\vep}{\varepsilon}

\title{Local Linear Recovery Guarantee of Deep Neural Networks at Overparameterization}
\begin{document}

\author{%
    \name Yaoyu Zhang${}^{a}$\thanks{Corresponding author} \email zhyy.sjtu@sjtu.edu.cn \\
    \name Leyang Zhang${}^{b}$ \email leyangz\_hawk@outlook.com \\
    \name Zhongwang Zhang${}^{a}$ \email 0123zzw666@sjtu.edu.cn \\
    \name Zhiwei Bai${}^{a}$ \email bai299@sjtu.edu.cn \\
    \addr ${}^a$School of Mathematical Sciences, Institute of Natural Sciences and MOE-LSC,
    Shanghai Jiao Tong University, Shanghai, 200240, China \\
    \addr ${}^b$ Department of Mathematics,
    University of Illinois Urbana-Champaign, Urbana, IL 61801,
    United States}

\editor{}
\date{\today}

\maketitle

\begin{abstract}
Determining whether deep neural network (DNN) models can reliably recover target functions at overparameterization is a critical yet complex issue in the theory of deep learning. To advance understanding in this area, we introduce a concept we term ``local linear recovery'' (LLR), a weaker form of target function recovery that renders the problem more amenable to theoretical analysis. In the sense of LLR, we prove that functions expressible by narrower DNNs are guaranteed to be recoverable from fewer samples than model parameters. Specifically, we establish upper limits on the optimistic sample sizes, defined as the smallest sample size necessary to guarantee LLR, for functions in the space of a given DNN. Furthermore, we prove that these upper bounds are achieved in the case of two-layer tanh neural networks. Our research lays a solid groundwork for future investigations into the recovery capabilities of DNNs in overparameterized scenarios.
\end{abstract}

\begin{keywords}
deep learning theory, recovery at overparameterization,
local linear recovery guarantee, optimistic sample size.
\end{keywords}

\section{Introduction}
Determining the requisite number of data points for a model to recover a target function is a fundamental problem in regression. This issue is particularly pertinent for deep neural network (DNN) models, where the question arises: Can these models recover target functions using fewer data points than their parameter count, thereby operating effectively in an overparameterized regime? Traditional wisdom suggests that a linear model with $M$ parameters typically needs $M$ data points to reconstruct a linear function. Particularly, in band-limited signal recovery, the Nyquist-Shannon sampling theorem posits that a periodic signal with a maximum frequency of $f$ (represented by $M=2f$ coefficients) can be perfectly reconstructed from $n \geqslant 2f$ uniformly sampled points \citep{shannon1984communication}. These precedents imply that linear models lack recovery guarantee when overparameterized. In contrast, nonlinear DNN models have been empirically shown to generalize effectively even when overparameterized \citep{zhang2016understanding}. Subsequent experiments reinforce this observation, indicating that DNNs can achieve zero generalization error in recovering target functions under overparameterization \citep{zhang2023optimistic}. This study delves into the theoretical underpinnings of target function recovery for DNNs, with the goal of establishing a recovery guarantee in overparameterized scenarios.

Within the spectrum of recovery guarantees, the global recovery guarantee—ensuring target recovery through comprehensive training—is the most robust and practically relevant, especially in overparameterized contexts. However, the intricate and highly nonlinear training dynamics of DNNs render this global guarantee theoretically elusive. To navigate this complexity, we propose a more attainable goal: establishing a weaker form of recovery guarantee for DNNs in overparameterized conditions. Specifically, we introduce the concept of local linear recovery (LLR) guarantee, which scales back the expectation from global recovery from a random initialization to local recovery in the vicinity of an optimal point within the parameter space. While local linear fitting of data is not practically feasible in general, the LLR-guarantee sheds light on the optimistic (i.e., best-possible) performance for global recovery of DNNs, suggesting that without an LLR-guarantee at overparameterization, stronger forms of recovery are unlikely.

Our main contributions are threefold: (i) We introduce the LLR-guarantee and formulate its theoretical framework for both differentiable and analytic models, demonstrating that linear models do not possess an LLR-guarantee when overparameterized. (ii) Employing the Embedding Principle \citep{zhang2021embedding,zhang2022embedding}, we derive upper bounds for the optimistic sample sizes—defined as the smallest sample sizes that ensure an LLR guarantee—for general DNNs. These bounds affirm the LLR-guarantee at overparameterization for all functions expressible by narrower DNNs. (iii) We pinpoint the exact optimistic sample sizes for two-layer fully-connected and convolutional tanh neural networks, which meet their respective upper bounds, thereby illustrating the exactitude of our upper bounds.

\section{Related works}

In our research, we introduce the concept of optimistic sample size, which quantifies the minimum number of training samples necessary to recover a target function under a best-possible condition. This approach diverges significantly from traditional sample complexity, which typically estimates the sample requirement for achieving a specified performance level in the worst-case scenarios \citep{shalev2014understanding}. Recognizing that DNNs often perform substantially better in practice than theoretical worst-case predictions suggest \citep{zhang2016understanding,zhang2021understanding}, our LLR analysis pioneers a framework for estimating sample sizes under the best-possible conditions. Furthermore, our empirical findings reveal that the practical performance of a finely-tuned DNN can approach this optimistic threshold, even in scenarios of substantial overparameterization. Moreover, as demonstrated in \cite{zhang2024implicit}, techniques such as dropout can further enhance the network's performance to recover target functions.

Our theoretical LLR framework, which is predicated on the linearization of DNNs, stands in stark contrast to linear analyses based on the NTK/lazy training/linear regime \citep{jacot2018neural,arora19fine,chizat2019lazy,luo2021phase}. The main differences are as follows: (i) NTK analysis linearizes around a random initial point, whereas LLR analysis considers linearization around a target point, i.e., a global minimizer with zero generalization error. (ii) NTK pertains to the linear training behavior of DNNs with large initial weights, whereas the optimistic sample sizes from LLR analysis are empirically linked to the performance of nonlinear training dynamics exhibited at small initializations. According to the phase diagrams in Refs. \citep{luo2021phase,zhou2022empirical}, NTK analysis corresponds to the linear regimes, while our LLR results associate to the condensed regimes, where nonlinear condensation dynamics are instrumental in achieving near-optimism performance.

The discovery of the embedding principle \citep{zhang2021embedding,zhang2022embedding,fukumizu2019semi,csimcsek2021geometry,bai2022embedding} has shed light on the analysis of critical points within the loss landscapes of DNNs and has forged connections between the loss landscapes of networks of varying widths. Notably, \cite{zhang2022embedding} introduces a comprehensive suite of critical embedding operators that map the parameter space of a narrower DNN to that of a wider one while preserving both the output function and the criticality of the network. This powerful analytical tool further illuminates the hierarchical model rank structure inherent in DNNs. Leveraging these critical embeddings, which inherently maintain the model rank, we derive in this work an upper bound for the optimistic sample size applicable to general DNNs.

\section{Theory of local linear recovery}
\subsection{Assumptions and definitions}
\begin{assumption*}
	\textcolor{black}{\\
		(i) We consider differentiable (w.r.t. both inputs and parameters) models with $1$-d output $f_{\vtheta}(\cdot)=f(\cdot;\vtheta):\sR^d\times\sR^M\to \sR$. \\
		(ii) We consider target functions $f^*$ expressible by the model $f_{\vtheta}$, i.e., $f^*\in\fF:=\{f_{\vtheta}(\cdot)|\vtheta\in\sR^M\}$. And the training data $S=\{(\vx_i\in\sR^d,y_i=f^*(\vx_i)\in\sR)\}_{i=1}^n$ is sampled from $f^*$. \\
        (iii) The loss function $\ell(\cdot,\cdot)$ is a continuously differentiable distance function and the empirical loss $L_{S}(\vtheta)=\Exp_S\ell(f_{\vtheta}(\vx),y)=\frac{1}{n}\sum_{i=1}^{n}\ell(f(\vx_i;\vtheta),y_i)$.}
\end{assumption*}

Under the above general assumptions, this work focuses on the recovery of target function $f^*$ for the following regression problem when $n\leqslant M$: 
\begin{equation}
    \mathrm{min}_{\vtheta\in\sR^M}\frac{1}{n}\sum_{i=1}^{n}\ell(f_{\vtheta}(\vx_i),f^*(\vx_i)). \label{eq:minF_main}
\end{equation}
We say $f^*$ is successfully recovered when the solution $\vtheta^*$ obtained by an algorithm satisfies $f_{\vtheta^*}=f^*$. For convenience, we define the target set $\fM_{f^*}$ as follows, by which $f^*$ is recovered when $\vtheta^*\in\fM_{f^*}$.

\begin{definition}[target set]\label{def:target}
Suppose we have a model $f_{\vtheta}(\cdot)=f(\cdot;\vtheta):\sR^d\times\sR^M\to \sR$ with model function space $\fF=\{f_{\vtheta}(\cdot)|\vtheta\in\sR^M\}$. For any target function $f'\in\fF$, the target set $\fM_{f'}:=\{\vtheta|f(\cdot;\vtheta)=f'\}$.
\end{definition}

We note that analyzing the recovery of $f^*$ at overparameterization through global training presents significant difficulties due to two primary challenges: (i) the optimization challenge, where it remains uncertain whether the training process can avoid local minima and saddle points to converge to a global minimum \citep{sun2020global}; (ii) the challenge of infinite solutions, where the target set $\mathcal{M}_{f^*}$ is embedded within an approximately $(M-n)$-dimensional manifold of global minima with complex geometry \citep{cooper2018loss,zhang2023structure}, making it difficult to ascertain if global training can precisely reach $\mathcal{M}_{f^*}$. To bypass these obstacles, we introduce a more theoretically tractable variant of recovery as follows.
\begin{definition}[local linear recovery (LLR) guarantee]\label{def:LLR-guarantee}
    Suppose we have a differentiable model $f_{\vtheta}(\cdot)$ with $M$ parameters,  a target function $f^*\in \fF:=\{f_{\vtheta}(\cdot)\}$ and training data $S=\{(\vx_i, f^*(\vx_i))\}_{i=1}^{n}$. \\
    (a) \textbf{LLR-guarantee:} We say $f^*$ has local linear recovery (LLR) guarantee (by model $f_{\vtheta}$ from $S$) if the following condition holds: There exists $\vtheta'\in\fM_{f^*}$ such that
\begin{equation}
    f^*=\mathrm{argmin}_{g\in\widetilde{\fT}_{\vtheta'}}\frac{1}{n}\sum_{i=1}^{n}\ell(g(\vx_i),f^*(\vx_i)), \label{eq:minF1}
\end{equation}
where $\widetilde{\fT}_{\vtheta'}=\{f(\cdot;\vtheta')+\va^\TT\nabla_{\vtheta}f(\cdot;\vtheta')|\va\in\sR^M\}$ is the tangent function hyperplane at $\vtheta'$. If the above condition holds, we say $f^*$ has LLR-guarantee at $\vtheta'$ (by model $f_{\vtheta}$ from $S$). \\
(b) \textbf{$n$-sample LLR-guarantee:} We say a function $f^*\in\fF$ has $n$-sample LLR-guarantee if there exists a $n$-sample dataset $S=\{(\vx_i\in\sR^d,f^*(\vx_i)\in\sR)\}_{i=1}^n$ such that $f^*$ has LLR-guarantee. Furthermore, if $n<M$, then we say $f^*$ has LLR-guarantee at overparameterization.\\
(c) \textbf{$n$-sample LLR-guarantee a.e.:} We say a function $f^*\in\fF$ has $n$-sample LLR-guarantee a.e. (almost everywhere) if the following condition holds: For inputs $X=[\vx_1,\cdots,\vx_n]\in \sR^{d\times n}$ a.e. with respect to $\fL^{d\times m}$  Lebesgue measure, $f^*$ has LLR-guarantee from $S=\{(\vx_i, f^*(\vx_i))\}_{i=1}^{n}$.
\end{definition}

\begin{remark} [LLR-guarantee vs. LLR-guarantee a.e.]
    (i) Although $n$-sample LLR-guarantee seems to be significantly weaker than $n$-sample LLR-guarantee a.e., we will show later that any $n$-sample LLR-guarantee automatically upgrades to $n$-sample LLR-guarantee a.e. for analytic models, e.g., neural networks with tanh, sigmoid or GELU activation. For the pursuit of generality, we mainly focus on the $n$-sample LLR-guarantee in this work. (ii) In general, $n$-sample LLR-guarantee informs the potential of recovering the target from $n$ samples in practice, whereas no $n$-sample LLR-guarantee is a strong indication that $n$ samples are not sufficient to recover the target.
\end{remark}

\begin{prop}\label{prop:greater_n-sample LLR}
     Suppose we have a differentiable model $f_{\vtheta}(\cdot)$ with $M$ parameters. For any target function $f^*\in \fF$, if it has $n$-sample LLR-guarantee, then it has $n'$-sample LLR-guarantee for any $n'\geqslant n$.
\end{prop}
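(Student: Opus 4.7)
The plan is to start from the dataset $S=\{(\vx_i,f^*(\vx_i))\}_{i=1}^{n}$ and the point $\vtheta'\in\fM_{f^*}$ that witness the $n$-sample LLR-guarantee, and then augment $S$ with any $n'-n$ additional labeled points $(\vx_{n+1},f^*(\vx_{n+1})),\ldots,(\vx_{n'},f^*(\vx_{n'}))$ drawn from $f^*$, forming $S'$. I would then show that $f^*$ continues to have LLR-guarantee at the same $\vtheta'$ from $S'$, which by definition yields the $n'$-sample LLR-guarantee. The freedom to pick the extra inputs arbitrarily is allowed because Definition~\ref{def:LLR-guarantee}(b) only asks for the existence of some $n'$-sample dataset.

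The key observation is a simple monotonicity fact powered by the distance assumption on $\ell$: since $\vtheta'\in\fM_{f^*}$, we have $f^*=f(\cdot;\vtheta')\in\widetilde{\fT}_{\vtheta'}$ (take $\va=0$), and $f^*$ achieves empirical loss $0$ on both $S$ and $S'$, which is the minimum possible value by non-negativity of $\ell$. Consequently, any candidate minimizer $g^\star\in\widetilde{\fT}_{\vtheta'}$ of the $S'$-objective must satisfy $\ell(g^\star(\vx_i),f^*(\vx_i))=0$ for every $i=1,\ldots,n'$; because $\ell$ is a distance, this forces $g^\star(\vx_i)=f^*(\vx_i)$, in particular for the original indices $i=1,\ldots,n$. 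Hence $g^\star$ also achieves empirical loss $0$ on $S$, so $g^\star$ is a minimizer of the $S$-objective over $\widetilde{\fT}_{\vtheta'}$.

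Invoking the hypothesized $n$-sample LLR-guarantee at $\vtheta'$, the only minimizer of the $S$-objective over $\widetilde{\fT}_{\vtheta'}$ is $f^*$, so $g^\star=f^*$ as a function. Therefore $f^*$ is the unique minimizer of the $S'$-objective over $\widetilde{\fT}_{\vtheta'}$, which means $f^*$ has LLR-guarantee at $\vtheta'$ from $S'$, establishing the $n'$-sample LLR-guarantee.

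I do not anticipate any real obstacle here. The entire argument is a shrinking-of-solution-set observation that relies only on (i) non-negativity and the zero-iff-equal property of a distance loss, and (ii) the fact that $f^*\in\widetilde{\fT}_{\vtheta'}$ so that the minimum value $0$ is attained. No genericity, measure-theoretic, or smoothness considerations enter, since we may choose the augmentation inputs freely; the proof is essentially bookkeeping around Definition~\ref{def:LLR-guarantee}.
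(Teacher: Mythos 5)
Your argument is correct and is essentially identical to the paper's own proof: both augment the witnessing dataset $S$ with arbitrary extra labeled points, note that $f^*$ attains the minimum value $0$ on $S'$, and conclude that any minimizer of the $S'$-objective is also a minimizer of the $S$-objective, hence equals $f^*$ by the assumed uniqueness. No differences worth noting.
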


\begin{proof}
    When $f^*$ has $n$-sample LLR-guarantee, there exists $S=\{(\vx_i,f^*(\vx_i))\}_{i=1}^{n}$ and $\vtheta^*\in\fM_{f^*}$ such that the solution set 
$$\phi_{S}^*:=\mathrm{argmin}_{f\in\widetilde{\fT}_{\vtheta^*}}\mathbb{E}_{(\vx,y)\in S}\ell(f(\vx),y)=\{f^*\}.$$
For $S'=S\cup \{(\vx'_i,f^*(\vx'_i))\}_{i=n+1}^{n'}$ with any $\vx'_i\in\sR^M$ for $i\in [n+1:n']$, let $$\phi_{S'}^*:=\mathrm{argmin}_{f\in\widetilde{\fT}_{\vtheta^*}}\mathbb{E}_{(\vx,y)\in S'}\ell(f(\vx),y).$$ 
Obvious, $f^*\in\phi_{S'}^*$, by which the empirical loss attains $0$ on $S'$. Then, any global minimizer $f\in\phi_{S'}^*$ also attains $0$ empirical loss on $S'$, thus $0$ empirical loss on $S$. Therefore,  $\phi_{S'}^*\subset \phi_{S}^*=\{f^*\}$, which yields $$\phi_{S'}^*= \phi_{S}^*=\{f^*\}.$$ 
By Definition \ref{def:LLR-guarantee}, $f^*$ has $n'$-sample LLR-guarantee.
\end{proof}

Proposition. \ref{prop:greater_n-sample LLR} signifies the importance of understanding the minimum sample size with LLR-guarantee for a target function, which is rigorously defined below as an optimistic sample size.
\begin{definition}[optimistic sample size]\label{def:opt_sample_size}
     Suppose we have a differentiable model $f_{\vtheta}(\cdot)$. For any function $f^*\in \fF$, if $f^*$ has LLR-guarantee from $n$ samples but not $n-1$ samples, then its optimistic sample size $$O_{f_{\vtheta}}(f^*)=n.$$
\end{definition}

\subsection{General LLR theory}
In this section, we present the theoretical results of LLR for the regression problem of general differentiable models. In particular, we establish the quantitative relation between the LLR-guarantee and the model rank, by which estimating the optimistic sample size of a target converts to an estimation of its model rank.

\begin{definition}[model rank]
Given any differentiable (in parameters) model $f_{\vtheta}$, the model rank for any $\vtheta^*\in \sR^M$ is defined as
\begin{equation}
\rrank_{f_{\vtheta}}(\vtheta^*):=\rdim\left(\rspan\left\{\partial_{\theta_i} f(\cdot;\vtheta^*)\right\}_{i=1}^M\right),
\end{equation}
where $\rspan\left\{ \phi_i(\cdot)\right\}_{i=1}^M=\{\sum_{i=1}^M a_i\phi_i(\cdot)|a_1,\cdots,a_M\in\sR\}$ and $\rdim(\cdot)$ returns the dimension of a linear function space.
\end{definition}

\begin{definition}[empirical tangent matrix and empirical model rank]\label{def:emp_mr}
Given any differentiable model $f_{\vtheta}$ and training data $S=\{(\vx_i,y_i)\}_{i=1}^n$, at any parameter point $\vtheta^*$,  $\nabla_{\vtheta}f(\mX;\vtheta^*)=[\nabla_{\vtheta}f(\vx_1;\vtheta^*),\cdots, \nabla_{\vtheta}f(\vx_n;\vtheta^*)]$ is referred to as the empirical tangent matrix. Then the empirical model rank is defined as follows
$$\rrank_{S}(\vtheta^*)=\rrank(\nabla_{\vtheta}f(\mX;\vtheta^*)).$$
\end{definition}

\begin{lemma}[LLR condition]\label{lem:lin_stab_c}
$f^*$ has local linear recovery guarantee at $\vtheta^*$ by model $f_{\vtheta}$ from $S$ if and only if $\rrank_{S}(\vtheta^*)=\rrank_{f_{\vtheta}}(\vtheta^*)$.
\end{lemma}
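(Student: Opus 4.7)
The plan is to translate the LLR condition at $\vtheta^*$ into a statement about kernels of two linear maps on $\sR^M$ and then read off the rank equality. Since $\vtheta^* \in \fM_{f^*}$, every element of $\widetilde{\fT}_{\vtheta^*}$ has the form $g_{\va} := f^* + \va^\TT \nabla_{\vtheta} f(\cdot;\vtheta^*)$ for some $\va \in \sR^M$, so minimizing the empirical loss over $\widetilde{\fT}_{\vtheta^*}$ reduces to minimizing over $\va \in \sR^M$.

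Next I would exploit that $\ell$ is a continuous distance. Taking $\va = 0$ gives $g_{\va} = f^*$, which already achieves empirical loss zero, so the infimum is $0$ and the set of minimizing $\va$ is exactly the set where every data residual vanishes, i.e.\ $K_{\mathrm{emp}} := \{\va \in \sR^M \mid \va^\TT \nabla_{\vtheta} f(\mX;\vtheta^*) = 0\}$. Simultaneously, the set of $\va$ that give the function $f^*$ itself is $K_{\mathrm{fun}} := \{\va \in \sR^M \mid \va^\TT \nabla_{\vtheta} f(\cdot;\vtheta^*) \equiv 0\}$. The LLR condition at $\vtheta^*$, namely that $f^*$ is the \emph{unique} minimizer \emph{as a function}, is therefore equivalent to: every minimizing $\va$ lies in $K_{\mathrm{fun}}$, i.e.\ $K_{\mathrm{emp}} \subseteq K_{\mathrm{fun}}$. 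Since evaluation at finitely many points cannot turn a nonzero linear combination of gradients into a nonzero function, the reverse inclusion $K_{\mathrm{fun}} \subseteq K_{\mathrm{emp}}$ is automatic, so LLR at $\vtheta^*$ holds iff $K_{\mathrm{emp}} = K_{\mathrm{fun}}$.

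Finally I would close the loop via the rank–nullity identities
\[
\rrank_{S}(\vtheta^*) = M - \rdim K_{\mathrm{emp}}, \qquad \rrank_{f_{\vtheta}}(\vtheta^*) = M - \rdim K_{\mathrm{fun}},
\]
the first being the standard rank–nullity for the matrix $\nabla_{\vtheta}f(\mX;\vtheta^*) \in \sR^{M \times n}$, and the second being the analogous identity for the linear map $\va \mapsto \va^\TT \nabla_{\vtheta} f(\cdot;\vtheta^*)$ from $\sR^M$ into the function space, whose image has dimension $\rrank_{f_{\vtheta}}(\vtheta^*)$ by definition of model rank. Combined with $K_{\mathrm{fun}} \subseteq K_{\mathrm{emp}}$, the dimensional equality is equivalent to the set equality, giving the stated characterization.

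The only mildly delicate step is the function-space rank–nullity used in the last paragraph: one must observe that $\rspan\{\partial_{\theta_i} f(\cdot;\vtheta^*)\}_{i=1}^M$ is precisely the image of the map $\va \mapsto \va^\TT \nabla_{\vtheta} f(\cdot;\vtheta^*)$, so the standard finite-dimensional rank–nullity still applies even though the codomain is infinite-dimensional. Everything else is a direct unpacking of the LLR definition and elementary linear algebra; I do not anticipate a serious obstacle.
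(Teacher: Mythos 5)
Your proof is correct and follows essentially the same route as the paper's: both reduce the LLR condition to the equality of the two kernels $K_{\mathrm{emp}}$ and $K_{\mathrm{fun}}$ (using that $\ell$ is a distance to identify the minimizing set with $K_{\mathrm{emp}}$, and the automatic inclusion $K_{\mathrm{fun}}\subseteq K_{\mathrm{emp}}$), then pass to the rank statement via rank--nullity. Your presentation as a single chain of equivalences, with the explicit remark that for nested subspaces equal dimension is equivalent to equality, is a slightly cleaner packaging of the same argument.
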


\begin{proof}
Define 
\[
    \Tilde{R}_S(\va) = \frac{1}{n}\sum_{i=1}^n \ell(f(\vx_i, \vtheta^*) + \va^\TT \nabla_{\vtheta}f(\vx_i;\vtheta^*), f^*(\vx_i)). 
\]
First assume that $f^*$ has LLR guarantee at $\vtheta^*$ by model $f_{\vtheta}$ from $S$. Then $f(\cdot; \vtheta^*) = f^*$ by Definition \ref{def:LLR-guarantee}, and this function is unique. This uniqueness implies that for any $\va \in \ker \nabla_{\vtheta} f(\vX; \vtheta^*)$ we must have 
\[
    f(\cdot; \vtheta^*) + \va^\TT \nabla_{\vtheta} f(\cdot; \vtheta^*) = f^*. 
\]
Equivalently, $\va \in \ker \nabla_{\vtheta} f(\cdot; \vtheta^*)$. This shows $\ker \nabla_{\vtheta} f(\vX; \vtheta^*) \subseteq \ker \nabla_{\vtheta} f(\cdot; \vtheta^*)$. But clearly $\ker \nabla_{\vtheta} f(\cdot; \vtheta^*) \subseteq \ker \nabla_{\vtheta} f(\vX; \vtheta^*)$, so the two kernels are equal. It follows that 
\[
    \rrank_S(\vtheta^*) = \rrank\left( \nabla_{\vtheta} f(\vX; \vtheta^*) \right) = \rrank_{f_{\vtheta}}(\vtheta^*)
\]

Conversely, assume that $\rrank_S(\vtheta^*) = \rrank_{f_{\vtheta}}(\vtheta^*)$. Then, similar as above, we have $\ker \nabla_{\vtheta} f(\cdot; \vtheta^*) = \ker \nabla_{\vtheta} f(\vX; \vtheta^*)$. Because $\ell$ is a distance function, for any $\va \in \sR^M$ with $\Tilde{R}_S(\va) = R_S(\vtheta^*) = 0$, we must have 
\[
    \va^\TT \nabla_{\vtheta} f(\vx_i; \vtheta^*) = 0, \quad \forall\, 1\leqslant i \leqslant n. 
\]
Equivalently, $\va \in \ker \nabla_{\vtheta} f(\vX; \vtheta^*)$ and thus $\va^\TT f(\cdot; \vtheta^*) = 0$. This shows 
\[
    f^* = f(\cdot; \vtheta^*) = \mathrm{argmin}_{g\in\widetilde{\fT}_{\vtheta'}}\frac{1}{n}\sum_{i=1}^{n}\ell(g(\vx_i),f^*(\vx_i))
\]
is well-defined, i.e., $f^*$ has LLR guarantee at $\vtheta^*$. 

\end{proof}

\begin{cor}[phase transition of LLR-guarantee at a target point]\label{cor:critical_rank_theta}
    For any $\vtheta'\in\fM_{f^*}$, if training data size $n<\rrank_{f_{\vtheta}}(\vtheta')$, $f^*$ has no local linear recovery guarantee at $\vtheta'$. Otherwise, if $n\geqslant\rrank_{f_{\vtheta}}(\vtheta')$, $f^*$ has $n$-sample LLR-guarantee, i.e., there exits an $n$-sample dataset $S'=\{(\vx_i,f^*(\vx_i))\}_{i=1}^{n}$ such that $f^*$ has local linear recovery guarantee at $\vtheta'$.
\end{cor}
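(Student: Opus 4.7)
The plan is to reduce both assertions to the rank criterion of Lemma \ref{lem:lin_stab_c}, together with the elementary inequality
\[
\rrank_S(\vtheta') \;\leqslant\; \min\{n,\, \rrank_{f_{\vtheta}}(\vtheta')\}
\]
valid for any $n$-sample dataset $S$ and any parameter point. The bound by $n$ is immediate because $\nabla_{\vtheta}f(\mX;\vtheta')$ has only $n$ columns. The bound by $r := \rrank_{f_{\vtheta}}(\vtheta')$ follows by viewing the $M$ rows of this matrix as evaluations of the $M$ functions $\partial_{\theta_j}f(\cdot;\vtheta')$ at the inputs: any linear dependence among these functions descends to a linear dependence among the corresponding row vectors, so the row rank of the empirical tangent matrix is at most the dimension $r$ of the span of those functions.

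For the first assertion, when $n < r$ the above inequality forces $\rrank_S(\vtheta') \leqslant n < r$, so the equality required by Lemma \ref{lem:lin_stab_c} fails, and $f^*$ has no LLR-guarantee at $\vtheta'$, regardless of how the inputs are chosen.

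For the second assertion, given $n \geqslant r$, the task is to build an $n$-sample dataset $S'$ at which $\rrank_{S'}(\vtheta') = r$, after which Lemma \ref{lem:lin_stab_c} delivers the LLR-guarantee at $\vtheta'$. The core step is to choose the first $r$ inputs $\vx_1, \ldots, \vx_r$ so that the $M \times r$ evaluation matrix already attains rank $r$; any remaining $n - r$ inputs may then be appended arbitrarily, since adjoining columns cannot decrease the rank while the rank stays capped at $r$. To produce those $r$ inputs I would fix a basis $g_1, \ldots, g_r$ of $\rspan\{\partial_{\theta_j}f(\cdot;\vtheta')\}_{j=1}^M$ and argue by induction on $r$ that there exist inputs with $\det[g_j(\vx_i)]_{i,j=1}^r \neq 0$. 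In the inductive step, with $\vx_1,\ldots,\vx_{r-1}$ already producing a nonzero $(r-1) \times (r-1)$ minor, the cofactor expansion of the full determinant along its last row, viewed as a function of $\vx_r$, becomes a linear combination of $g_1,\ldots,g_r$ whose coefficient on $g_r$ equals precisely that nonzero minor; linear independence of the $g_j$ then guarantees this combination is not the zero function, so some $\vx_r$ yields a nonvanishing full determinant.

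The main obstacle is this inductive construction realizing the model rank empirically with exactly $r$ inputs; once that linear-algebraic fact is secured, the corollary follows immediately from Lemma \ref{lem:lin_stab_c} and the trivial rank bound.
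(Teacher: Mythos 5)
Your proposal is correct. The first half (no LLR-guarantee when $n<\rrank_{f_{\vtheta}}(\vtheta')$) is identical to the paper's: the trivial bound $\rrank_S(\vtheta')\leqslant\min\{n,\rrank_{f_{\vtheta}}(\vtheta')\}$ plus Lemma \ref{lem:lin_stab_c}. Where you diverge is the existence half. The paper argues by contradiction through a maximality device: it sets $M'$ to be the largest empirical rank achievable over all input configurations, observes that at a maximizing configuration any vector in the kernel of the empirical tangent matrix must also kill the tangent functions identically (otherwise one could append a point and raise the rank), concludes the two kernels coincide, and derives $M'=\rrank_{f_{\vtheta}}(\vtheta')$, contradicting $M'<\rrank_{f_{\vtheta}}(\vtheta')$. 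You instead give a direct constructive argument: pick a basis $g_1,\dots,g_r$ of the tangent span and build $r$ inputs one at a time, certifying nondegeneracy at each step by a cofactor expansion whose leading coefficient is the previously secured nonzero minor, with linear independence of the $g_j$ guaranteeing a good next input exists; remaining inputs are appended freely. Both are sound. Your route is more explicit and makes visible that exactly $r=\rrank_{f_{\vtheta}}(\vtheta')$ inputs suffice to realize the model rank empirically, which is also the mechanism behind Lemma \ref{lem:zero-measure} in the analytic setting; the paper's route is shorter and avoids choosing a basis, at the cost of being non-constructive. The one point you should make explicit when writing this up is that $\rrank\bigl(\nabla_{\vtheta}f(\mX;\vtheta')\bigr)$ equals the rank of the evaluation matrix $[g_j(\vx_i)]$, which holds because the $M$ tangent functions and the basis $g_1,\dots,g_r$ generate the same row space; this is routine but is the hinge between your determinant construction and the empirical model rank of Definition \ref{def:emp_mr}.
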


\begin{proof}
    For $n<\rrank_{f_{\vtheta}}(\vtheta')$, $\rrank_{S}(\vtheta')\leqslant n<\rrank_{f_{\vtheta}}(\vtheta')$. By Lemma \ref{lem:lin_stab_c}, $f^*$ has no local linear recovery guarantee at $\vtheta'$. On the other hand, for $n\geqslant\rrank_{f_{\vtheta}}(\vtheta')$, we claim that there exist $\vX = (\vx_1, ...,\vx_n)$ with $\rrank\left( \nabla_{\vtheta} f(\vX; \vtheta^*) \right) = \rrank_{f_{\vtheta}}(\cdot; \vtheta')$. Suppose this is not true. Let 
    \[
        M' = \max_{\vX \in \sR^{d\times n}} \rrank \left(\nabla_{\vtheta} f(\vX; \vtheta')\right)
    \]
    and let $\vx_1, ..., \vx_{M'}$ be such that 
    \[
        \rrank \left[ \nabla_{\vtheta} f((\vx_1,..., \vx_{M'}); \vtheta') \right] = M'. 
    \]
    So in particular $M' < \rrank_{f_{\vtheta}}(\cdot; \vtheta') \leqslant M$. Given $a \in \sR^M$ such that $\nabla_{\vtheta} f(\vx_i; \vtheta) = 0$ for all $1 \le i \le M'$, we must have $a \in \ker \nabla_{\vtheta} f(\vtheta'; \cdot)$. Therefore, 
    \[
        \ker \nabla_{\vtheta} f(\vtheta'; \cdot) \subseteq \ker \nabla_{\vtheta} f((\vx_1,..., \vx_{M'}); \vtheta') \subseteq \ker \nabla_{\vtheta} f(\vtheta'; \cdot). 
    \]
    This means 
    \[
        \dim \ker \nabla_{\vtheta} f(\vtheta'; \cdot) = \dim \ker \nabla_{\vtheta} f(\vX; \vtheta') = M-M'
    \]
    and thus $\rrank_{f_{\vtheta}}(\vtheta') = M'$, a contradiction. So there must exist $\vX = (\vx_1, ...,\vx_n)$ with $\rrank\left( \nabla_{\vtheta} f(\vX; \vtheta^*) \right) = \rrank_{f_{\vtheta}}(\cdot; \vtheta')$. By Lemma \ref{lem:lin_stab_c}, $f^*$ has $n$-sample guarantee. 

\end{proof}

We address above the LLR-guarantee at a target point. In the following, we further address the LLR-guarantee for recovering a target function.

\begin{definition}[model rank for function]\label{def:mr_f}
The model rank for any function $f^*\in\fF$ is defined as 
\begin{equation}
    \rrank_{f_{\vtheta}}(f^*):=\min_{\vtheta'\in\fM_{f^*}}\rrank_{f_{\vtheta}}(\vtheta').
\end{equation}
\end{definition}

\begin{thm}[optimistic sample size estimate]\label{thm:opt_sample_size}
Suppose we have a differentiable model $f_{\vtheta}$. For any function $f^*\in\fF$, we have $$O_{f_{\vtheta}}(f^*)=\rrank_{f_{\vtheta}}(f^*).$$
\end{thm}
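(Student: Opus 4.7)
The plan is to prove the two inequalities $O_{f_{\vtheta}}(f^*) \geqslant \rrank_{f_{\vtheta}}(f^*)$ and $O_{f_{\vtheta}}(f^*) \leqslant \rrank_{f_{\vtheta}}(f^*)$ separately, essentially by packaging the phase transition statement in Corollary~\ref{cor:critical_rank_theta} together with the minimization over $\fM_{f^*}$ in Definition~\ref{def:mr_f}.

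For the lower bound $O_{f_{\vtheta}}(f^*) \geqslant \rrank_{f_{\vtheta}}(f^*)$, I would take $n < \rrank_{f_{\vtheta}}(f^*)$ and aim to show that $f^*$ has no $n$-sample LLR-guarantee. By Definition~\ref{def:mr_f}, the inequality $n < \rrank_{f_{\vtheta}}(\vtheta')$ holds uniformly for every $\vtheta'\in \fM_{f^*}$. Fix any $n$-sample dataset $S=\{(\vx_i,f^*(\vx_i))\}_{i=1}^n$; then $\rrank_S(\vtheta') \leqslant n < \rrank_{f_{\vtheta}}(\vtheta')$, so by Lemma~\ref{lem:lin_stab_c} $f^*$ has no LLR-guarantee at $\vtheta'$. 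Since this fails for every $\vtheta' \in \fM_{f^*}$ and every $S$ of size $n$, the definition of $n$-sample LLR-guarantee is not met, forcing $O_{f_{\vtheta}}(f^*) \geqslant \rrank_{f_{\vtheta}}(f^*)$.

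For the upper bound $O_{f_{\vtheta}}(f^*) \leqslant \rrank_{f_{\vtheta}}(f^*)$, set $n = \rrank_{f_{\vtheta}}(f^*)$. Because the model rank takes integer values in $\{0,1,\dots,M\}$, the minimum in Definition~\ref{def:mr_f} is actually attained: there exists $\vtheta^*\in \fM_{f^*}$ with $\rrank_{f_{\vtheta}}(\vtheta^*)=n$. Applying Corollary~\ref{cor:critical_rank_theta} to this $\vtheta^*$ with sample size $n \geqslant \rrank_{f_{\vtheta}}(\vtheta^*)$ yields an $n$-sample dataset $S$ at which $f^*$ has LLR-guarantee at $\vtheta^*$. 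So $f^*$ has $n$-sample LLR-guarantee, hence $O_{f_{\vtheta}}(f^*) \leqslant n = \rrank_{f_{\vtheta}}(f^*)$. Combining with the previous direction gives the claimed equality.

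There is no real obstacle here: both directions reduce to the pointwise phase transition already contained in Corollary~\ref{cor:critical_rank_theta}, once one observes that LLR-guarantee for $f^*$ (as opposed to at a fixed $\vtheta'$) quantifies existentially over $\vtheta'\in\fM_{f^*}$, exactly matching the $\min$ in the definition of $\rrank_{f_{\vtheta}}(f^*)$. The only subtlety worth flagging is the attainment of the minimum in Definition~\ref{def:mr_f}, which is immediate from integer-valuedness and boundedness of the rank; no continuity or compactness argument on $\fM_{f^*}$ is required.
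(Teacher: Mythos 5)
Your proposal is correct and follows essentially the same route as the paper's proof: both directions reduce to the pointwise phase transition of Corollary~\ref{cor:critical_rank_theta} combined with the existential/universal quantification over $\fM_{f^*}$ matching the $\min$ in Definition~\ref{def:mr_f}. Your explicit remark that the minimum is attained by integer-valuedness is a small point the paper leaves implicit, but the argument is otherwise identical.
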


\begin{proof}
    When the sample size $n<\rrank_{f_{\vtheta}}(f^*)$,  for any $\vtheta'\in\fM_{f^*}$, $n<\rrank_{f_{\vtheta}}(\vtheta')$. By Corollary \ref{cor:critical_rank_theta}, $f^*$ has no LLR-guarantee at $\vtheta'$. Therefore, $f^*$ has no $n$-sample LLR-guarantee. When the sample size $n\geqslant\rrank_{f_{\vtheta}}(f^*)$, there exists $\vtheta' \in\fM_{f^*}$, such that $n\geqslant\rrank_{f_{\vtheta}}(f^*)=\rrank_{f_{\vtheta}}(\vtheta')$.  By Corollary \ref{cor:critical_rank_theta}, $f^*$ has $n$-sample LLR-guarantee. By Definition \ref{def:opt_sample_size}, we have  $$O_{f_{\vtheta}}(f^*)=\rrank_{f_{\vtheta}}(f^*).$$
\end{proof}

The subsequent two corollaries provide a rigorous formulation of two intuitive assertions about the best-possible cases of regression problem \eqref{eq:minF_main}: (i) $M$ samples are sufficient for recovery; (ii) linear models generally cannot be recovered at overparameterization.

\begin{cor}[generic upper bound for optimistic sample size]\label{cor:generic_ub}
    Suppose we have a differentiable model $f_{\vtheta}$. For any function $f^*\in\fF$, we have $$O_{f_{\vtheta}}(f^*)\leqslant M.$$
\end{cor}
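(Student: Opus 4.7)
The plan is to chain together the results already established: Theorem 1 identifies the optimistic sample size with the model rank of the function, and the model rank of a function is defined as a minimum of parameter-wise model ranks, each of which is the dimension of a span of $M$ functions and hence at most $M$.

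Concretely, first I would invoke Theorem \ref{thm:opt_sample_size} to replace $O_{f_{\vtheta}}(f^*)$ by $\rrank_{f_{\vtheta}}(f^*)$. Next, by Definition \ref{def:mr_f}, we can pick any $\vtheta' \in \fM_{f^*}$ (nonempty since $f^* \in \fF$) and obtain $\rrank_{f_{\vtheta}}(f^*) \leqslant \rrank_{f_{\vtheta}}(\vtheta')$. Finally, by the definition of model rank, $\rrank_{f_{\vtheta}}(\vtheta') = \rdim(\rspan\{\partial_{\theta_i} f(\cdot;\vtheta')\}_{i=1}^M) \leqslant M$, since the span is generated by $M$ elements. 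Combining these three inequalities yields the claim.

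There is no real obstacle here; the entire argument is a direct bookkeeping consequence of the preceding theorem and definitions. The only point worth stating explicitly is that $\fM_{f^*}$ is nonempty — this is guaranteed by the assumption $f^* \in \fF$, which by definition means $f^* = f(\cdot;\vtheta')$ for some $\vtheta' \in \sR^M$, so that $\vtheta' \in \fM_{f^*}$ and the minimum defining $\rrank_{f_{\vtheta}}(f^*)$ is taken over a nonempty set.
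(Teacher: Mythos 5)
Your proposal is correct and is essentially identical to the paper's own proof: both reduce the claim via Theorem \ref{thm:opt_sample_size} to the bound $\rrank_{f_{\vtheta}}(\vtheta')\leqslant M$, which holds because the model rank at any parameter point is the dimension of a span of $M$ functions. The explicit remark that $\fM_{f^*}$ is nonempty is a harmless extra detail the paper leaves implicit.
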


\begin{proof}
    Because $\rrank_{f_{\vtheta}}(\vtheta)\leqslant M$ for any $\vtheta\in\sR^M$, $\rrank_{f_{\vtheta}}(f^*)\leqslant M$ for any $f^*\in\fF$. Therefore, by Theorem \ref{thm:opt_sample_size}, $O_{f_{\vtheta}}(f^*)=\rrank_{f_{\vtheta}}(f^*)\leqslant M$.
\end{proof}

\begin{cor}[no LLR-guarantee for linear models at overparameterization]
    For a linear model $f_{\vtheta}(\vx)=\sum_{i=1}^{M} \theta_i \phi_i(\vx)$ with $\vtheta=[\theta_1,\cdots,\theta_M]^\TT$, if its basis functions are linearly independent, i.e., $\rdim(\rspan\{\phi_i(\cdot)\}_{i=1}^{M})=M$, then $$O_{f_{\vtheta}}(f^*)\equiv M,$$ i.e., none of the functions in the model function space $\fF$ has LLR-guarantee at overparameterization.
\end{cor}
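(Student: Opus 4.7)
The plan is to apply Theorem \ref{thm:opt_sample_size} directly, which reduces the problem to computing $\rrank_{f_{\vtheta}}(f^*)$ for every $f^*\in\fF$ and showing it equals $M$. The crucial feature of a linear model is that the parameter gradients are constant functions of $\vtheta$: for every $\vtheta\in\sR^M$ and every $i\in\{1,\ldots,M\}$, $\partial_{\theta_i} f(\cdot;\vtheta)=\phi_i(\cdot)$. Hence the span in the definition of model rank is just $\rspan\{\phi_i\}_{i=1}^M$, independent of the base point.

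First I would observe that, by the linear-independence hypothesis on $\{\phi_i\}_{i=1}^M$,
\[
\rrank_{f_{\vtheta}}(\vtheta)=\rdim\bigl(\rspan\{\partial_{\theta_i} f(\cdot;\vtheta)\}_{i=1}^M\bigr)=\rdim\bigl(\rspan\{\phi_i\}_{i=1}^M\bigr)=M
\]
for every $\vtheta\in\sR^M$. Then for any $f^*\in\fF$, taking the minimum over $\vtheta'\in\fM_{f^*}$ (which is nonempty since $f^*\in\fF$) in Definition \ref{def:mr_f} yields $\rrank_{f_{\vtheta}}(f^*)=M$.

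Second, I would invoke Theorem \ref{thm:opt_sample_size} to conclude $O_{f_{\vtheta}}(f^*)=\rrank_{f_{\vtheta}}(f^*)=M$. Since by Definition \ref{def:LLR-guarantee}(b) LLR-guarantee at overparameterization means $n$-sample LLR-guarantee for some $n<M$, and since Proposition \ref{prop:greater_n-sample LLR} combined with $O_{f_{\vtheta}}(f^*)=M$ shows that no such $n$ works, no function in $\fF$ enjoys LLR-guarantee at overparameterization.

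There is essentially no main obstacle here; the proof is a one-line application of the theory once the constancy of the tangent basis is noted. The only minor care is to verify that $\fM_{f^*}$ is nonempty (immediate since $f^*\in\fF$) so the minimum in Definition \ref{def:mr_f} is well-defined, and to make sure the linear-independence assumption is invoked at the right place (to rule out the degenerate case where $\rdim(\rspan\{\phi_i\}_{i=1}^M)<M$, which would otherwise permit recovery with fewer than $M$ samples).
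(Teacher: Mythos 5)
Your proposal is correct and follows essentially the same route as the paper: observe that $\partial_{\theta_i} f(\cdot;\vtheta)=\phi_i(\cdot)$ is independent of $\vtheta$, so $\rrank_{f_{\vtheta}}(\vtheta)=M$ everywhere by linear independence, hence $\rrank_{f_{\vtheta}}(f^*)=M$ and Theorem \ref{thm:opt_sample_size} gives $O_{f_{\vtheta}}(f^*)=M$. The extra remarks on the nonemptiness of $\fM_{f^*}$ and the definition of overparameterized LLR-guarantee are fine but not needed beyond what the paper states.
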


\begin{proof}
    For any $\vtheta^*\in\sR^M$, we have
    \begin{equation*}
        \rrank_{f_{\vtheta}}(\vtheta^*)=\rdim\left(\rspan\left\{\partial_{\theta_i} f(\cdot;\vtheta^*)\right\}_{i=1}^M\right)
        =\rdim\left(\rspan\left\{\phi_i(\vx)\right\}_{i=1}^M\right)
        =M.
    \end{equation*}
    Therefore, $O_{f_{\vtheta}}(f^*)=\rrank_{f_{\vtheta}}(f^*)= M$ for any $f^*\in\fF$.
\end{proof}

\subsection{LLR-guarantee for analytic models}
In this subsection, we exploit the characteristics of analytic functions to enhance the LLR guarantee, extending it to an almost everywhere (a.e.) guarantee for analytic models.
\begin{lemma}\label{lem:zero-measure}
Given $m$ linearly independent analytic functions $\phi_1(\vx),\cdots,\phi_m(\vx)$ with $\phi_i:\sR^d\to\sR$ for all $i\in[m]$, $\rrank(\mPhi(\mX))= m$ almost everywhere (a.e.) with respect to   $\fL^{d\times m}$  Lebesgue measure, where 
    \begin{equation*}
    \mPhi(\mX) := \left[\begin{matrix}
             \phi_1(\vx_1) &... &\phi_m(\vx_1) \\ 
             \vdots                                  &\ddots &\vdots \\
             \phi_1(\vx_m) &... &\phi_m(\vx_m) 
             \end{matrix}\right].
    \end{equation*}
\end{lemma}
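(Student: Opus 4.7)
The plan is to reduce the full-rank condition to the non-vanishing of the determinant of $\mPhi(\mX)$, exploit that this determinant is an analytic function of the entries of $\mX$, and then invoke the classical fact that the zero set of a nonzero real-analytic function has Lebesgue measure zero. Throughout, $\mX = (\vx_1, \ldots, \vx_m) \in \sR^{d \times m}$ will be viewed as a single variable.

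First I would observe that $D(\mX) := \det \mPhi(\mX)$ is real-analytic on $\sR^{d\times m}$, since it is a polynomial combination of the analytic entries $\phi_j(\vx_i)$. Hence $\{ \mX : D(\mX) = 0 \}$ either equals all of $\sR^{d \times m}$ or has Lebesgue measure zero — this is the standard fact that a nonzero real-analytic function on a connected open set has a zero set of measure zero (proved by induction on $d \cdot m$, slicing by coordinate hyperplanes and reducing to the one-variable fact that zeros of a nonzero analytic function on $\sR$ are isolated). Since $\rrank(\mPhi(\mX)) = m$ exactly when $D(\mX) \ne 0$, it suffices to show $D \not\equiv 0$.

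The main content is then to produce a single point $\mX^*$ with $D(\mX^*) \ne 0$, and I would do this by induction on $m$ using the linear independence hypothesis. For $m = 1$, the function $\phi_1$ is nonzero, so there exists $\vx_1^*$ with $\phi_1(\vx_1^*) \ne 0$. For the inductive step, assume $\phi_1,\ldots,\phi_{m-1}$ admit points $\vx_1^*,\ldots,\vx_{m-1}^*$ making the corresponding $(m-1)\times(m-1)$ submatrix nonsingular. Define
\begin{equation*}
g(\vx) := \det \left[\begin{matrix}
\phi_1(\vx_1^*) & \cdots & \phi_m(\vx_1^*) \\
\vdots & \ddots & \vdots \\
\phi_1(\vx_{m-1}^*) & \cdots & \phi_m(\vx_{m-1}^*) \\
\phi_1(\vx) & \cdots & \phi_m(\vx)
\end{matrix}\right].
\end{equation*}
Cofactor expansion along the last row gives $g(\vx) = \sum_{j=1}^m c_j \phi_j(\vx)$, where $c_m$ is (up to sign) the nonzero determinant of the inductive submatrix. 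If $g \equiv 0$, then $\sum_j c_j \phi_j \equiv 0$ with $c_m \ne 0$, contradicting linear independence of $\phi_1, \ldots, \phi_m$. Hence there exists $\vx_m^*$ with $g(\vx_m^*) \ne 0$, i.e.\ $D(\mX^*) \ne 0$ at $\mX^* = (\vx_1^*, \ldots, \vx_m^*)$.

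Combining the two ingredients, $D$ is a nonzero real-analytic function on $\sR^{d \times m}$, so its zero set is Lebesgue-null; on its complement $\rrank(\mPhi(\mX)) = m$, which is the claim. The main obstacle is the inductive step: one must be careful to set up $g$ so that linear independence produces a nonzero cofactor coefficient, and to justify invoking the measure-zero property of analytic zero sets in several variables (which I would cite or sketch by the iterated one-dimensional argument above).
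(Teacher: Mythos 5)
Your proposal is correct and follows essentially the same route as the paper's proof: show $\det\mPhi(\cdot)$ is analytic, show it is not identically zero using linear independence, and conclude via the measure-zero property of zero sets of nonzero real-analytic functions (the paper cites Mityagin for this). The only difference is that you spell out, via induction and cofactor expansion, the existence of a point where the determinant is nonzero, a step the paper asserts without detail.
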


\begin{proof}
Clearly, $\det(\mPhi(\cdot)):\sR^{d\times m}\to\sR$ is an analytic function over $\sR^{d\times m}$. In addition, because $\{\phi_i\}_{i=1}^m$ are linearly independent, there exists $\mX\in\sR^{d\times m}$ such that $\det(\mPhi(\cdot))\neq 0$, i.e., $\det(\mPhi(\cdot))$ is not constant zero. By the property of real analytic function \citep{Mityagin2020}, $\rrank(\mPhi(\mX))= m$ a.e. with respect to $\fL^{d\times m}$  Lebesgue measure.
\end{proof}

\begin{cor}\label{cor:zero-measure}
Given $m$ analytic functions $\phi_1(\vx),\cdots,\phi_m(\vx)$ with $\phi_i:\sR^d\to\sR$ for all $i\in[m]$ and $\rdim(\rspan(\{\phi_i(\cdot)\}_{i=1}^m))=r$, $\rrank(\mPhi(\mX))= \min\{n,r\}$ a.e. with respect to $\fL^{d\times n}$ Lebesgue measure.
\end{cor}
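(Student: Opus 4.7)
The plan is to reduce the corollary to Lemma \ref{lem:zero-measure} by passing to a basis of $\rspan(\{\phi_i(\cdot)\}_{i=1}^m)$ and then applying a Fubini-style argument to lift the a.e.\ statement from a submatrix to the full matrix $\mPhi(\mX)$.

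First I would choose indices $i_1<\cdots<i_r$ such that $\phi_{i_1},\ldots,\phi_{i_r}$ form a basis of $\rspan(\{\phi_i\}_{i=1}^m)$, and write every other $\phi_j$ as a linear combination $\phi_j=\sum_{k=1}^r c_{jk}\phi_{i_k}$. Consequently, for any $\mX\in\sR^{d\times n}$ the $n\times m$ matrix $\mPhi(\mX)$ factors as $\mPhi(\mX)=\mPsi(\mX)\mC$, where $\mPsi(\mX)\in\sR^{n\times r}$ is the evaluation matrix of the basis functions and $\mC\in\sR^{r\times m}$ has rank $r$. Hence $\rrank(\mPhi(\mX))=\rrank(\mPsi(\mX))\leqslant\min\{n,r\}$, and the corollary reduces to showing that $\rrank(\mPsi(\mX))=\min\{n,r\}$ a.e.

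Next I would split into the two cases. If $n\leqslant r$, I would pick any $n$ of the basis functions, say $\phi_{i_1},\ldots,\phi_{i_n}$; these are linearly independent, so Lemma \ref{lem:zero-measure} applied to them gives that the $n\times n$ submatrix of $\mPsi(\mX)$ formed by these $n$ columns has nonzero determinant a.e.\ in $\sR^{d\times n}$, which forces $\rrank(\mPsi(\mX))=n=\min\{n,r\}$ a.e. If $n>r$, I would consider the first $r$ rows of $\mPsi(\mX)$, i.e., evaluations at $(\vx_1,\ldots,\vx_r)$. By Lemma \ref{lem:zero-measure} applied to $\phi_{i_1},\ldots,\phi_{i_r}$ (which are linearly independent), the $r\times r$ matrix obtained from these $r$ rows is nonsingular for almost every $(\vx_1,\ldots,\vx_r)\in\sR^{d\times r}$. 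Invoking Fubini's theorem, the set of $\mX=(\vx_1,\ldots,\vx_n)\in\sR^{d\times n}$ for which this $r\times r$ minor vanishes has $\fL^{d\times n}$-measure zero, so $\rrank(\mPsi(\mX))\geqslant r$ a.e.; combined with the upper bound $\rrank(\mPsi(\mX))\leqslant r$, equality holds a.e.

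The only genuinely non-routine step is the Fubini argument lifting an a.e.\ statement about $r$ out of $n$ sample points to one about all $n$ points, but since ``$r\times r$ minor vanishes'' is the vanishing locus of an analytic function on $\sR^{d\times r}$, the measure-zero property is preserved by taking product with $\sR^{d\times(n-r)}$; everything else is linear algebra and a clean appeal to the previous lemma.
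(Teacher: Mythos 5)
Your proposal is correct and follows essentially the same route as the paper's proof: reduce to Lemma \ref{lem:zero-measure} by selecting $\min\{n,r\}$ linearly independent functions and exhibiting a nonsingular square submatrix a.e. Your write-up is in fact slightly more careful than the paper's, since you make explicit both the factorization $\mPhi(\mX)=\mPsi(\mX)\mC$ justifying the upper bound and the Fubini-type lifting of the a.e.\ statement from $\sR^{d\times r}$ to $\sR^{d\times n}$ in the case $n>r$, a step the paper leaves implicit.
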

\begin{proof}
It is obvious that $\rrank(\mPhi(\mX))\leqslant \min\{n,r\}$.
For $n\leqslant r$, we can always pick $n$ independent functions from $\{\phi_i(\cdot)\}_{i=1}^m$. By Lemma \ref{lem:zero-measure}, $\mPhi(\mX)$ has a rank-$n$ submatrix of $\mPhi(\mX)$ a.e. with respect to Lebesgue measure.
For $n>r$, we have that the submatrix of the first $r$ rows of $\mPhi(\mX)$ has rank $r$ a.e. by Lemma \ref{lem:zero-measure}. Therefore, $\rrank(\mPhi(\mX))= \min\{n,r\}$ a.e. with respect to $\fL^{d\times n}$ Lebesgue measure.
\end{proof}

\begin{thm}[LLR-guarantee a.e. for analytic models]\label{appthm:phase-transition}
Given any analytic model $f_{\vtheta}$ (w.r.t. both inputs and parameters), if target function $f^*\in\fF$ has $n$-sample LLR-guarantee, then $f^*$ has $n$-sample LLR-guarantee a.e.
\end{thm}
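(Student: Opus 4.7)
The plan is to combine Lemma \ref{lem:lin_stab_c} with the analytic rank result of Corollary \ref{cor:zero-measure}, keeping the witness parameter fixed while letting the inputs vary. Suppose $f^*$ has $n$-sample LLR-guarantee. By Definition \ref{def:LLR-guarantee}(b) and Lemma \ref{lem:lin_stab_c}, there exist a point $\vtheta^*\in\fM_{f^*}$ and an input matrix $X_0\in\sR^{d\times n}$ such that
\[
    \rrank\bigl(\nabla_{\vtheta}f(X_0;\vtheta^*)\bigr)=\rrank_{f_{\vtheta}}(\vtheta^*)=:r.
\]
Since the empirical tangent matrix has only $n$ columns, this forces $n\geqslant r$, a bookkeeping point that turns out to be essential later.

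Next, I would freeze this $\vtheta^*$ and regard the input matrix as the variable. Because $f$ is jointly analytic in $(\vx,\vtheta)$, each of the $M$ functions $\phi_i(\cdot):=\partial_{\theta_i}f(\cdot;\vtheta^*)$ is analytic on $\sR^d$, and by definition of model rank, $\rdim(\rspan\{\phi_i\}_{i=1}^{M})=r$. For an arbitrary input $X=(\vx_1,\dots,\vx_n)\in\sR^{d\times n}$, the empirical tangent matrix is precisely the evaluation matrix $[\phi_i(\vx_j)]_{i\in[M],\,j\in[n]}$, which fits exactly the setup of Corollary \ref{cor:zero-measure}.

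Applying that corollary to the system $\{\phi_i\}_{i=1}^{M}$ with sample size $n$ and span dimension $r$, I conclude that for $\fL^{d\times n}$-almost every $X$,
\[
    \rrank\bigl(\nabla_{\vtheta}f(X;\vtheta^*)\bigr)=\min\{n,r\}=r=\rrank_{f_{\vtheta}}(\vtheta^*),
\]
where the second equality uses the inequality $n\geqslant r$ established above. Lemma \ref{lem:lin_stab_c} then converts this rank identity back into an LLR-guarantee at $\vtheta^*$ from $S=\{(\vx_j,f^*(\vx_j))\}_{j=1}^{n}$ for a.e.\ $X$, which is exactly the statement of $n$-sample LLR-guarantee a.e.\ in Definition \ref{def:LLR-guarantee}(c).

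The argument is a short assembly of previously established pieces, so I do not anticipate any substantial obstacle. The only subtlety worth flagging is the inequality $n\geqslant r$: without it, Corollary \ref{cor:zero-measure} would yield rank $n<r$ on a generic input and the LLR condition of Lemma \ref{lem:lin_stab_c} would fail. Analyticity of $\phi_i$ in $\vx$ (inherited from joint analyticity of $f$) and the fact that $\vtheta^*$ can be held fixed across the a.e.\ statement are the other points I would explicitly verify.
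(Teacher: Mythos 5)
Your proposal is correct and follows essentially the same route as the paper: fix a parameter point in $\fM_{f^*}$, apply Corollary \ref{cor:zero-measure} to the analytic tangent functions to get the generic rank $\min\{n,r\}=r$, and convert back via Lemma \ref{lem:lin_stab_c}. The only (harmless) difference is how you obtain $n\geqslant r$ — you read it off the column count of the empirical tangent matrix at the witness point, whereas the paper routes through $n\geqslant O_{f_{\vtheta}}(f^*)$ and a minimal-rank parameter point; both are valid.
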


\begin{proof}
    If target function $f^*\in\fF$ has $n$-sample LLR-guarantee, by Definition \ref{def:opt_sample_size}, $n\geqslant O_{f_{\vtheta}}(f^*)$. In addition, there exists $\vtheta^*\in\fM_{f^*}$ such that $\rrank_{f_{\vtheta}}(\vtheta^*)=O_{f_{\vtheta}}(f^*)$. Then by Corollary \ref{cor:zero-measure}, for $\mX=[\vx_1,\cdots,\vx_n]$ a.e. with respect to $\fL^{d\times n}$ Lebesgue measure with dataset $S=\{(\vx_i,f^*(\vx_i))\}_{i=1}^{n}$, $\rrank_S(\vtheta^*)=\min\{\rrank_{f_{\vtheta}}(\vtheta^*),n\}=\rrank_{f_{\vtheta}}(\vtheta^*)$. By the LLR condition Lemma \ref{lem:lin_stab_c}, $f^*$ has $n$-sample LLR-guarantee a.e.
\end{proof}

\section{Upper bounds of optimistic sample sizes for DNNs\label{appsec:DNN}}
Building on the LLR theory, the task of estimating optimistic sample sizes for target functions necessitates the elucidation of the model rank across the space of functions. This task is notably challenging, particularly for DNNs with three or more layers, due to the complexity inherent in disentangling the linear dependencies among the compositional tangent functions. Fortunately, we observe that the critical embedding operators introduced in Refs. \citep{zhang2021embedding,zhang2022embedding} can preserve the tangent function space and the model rank when transitioning from a parameter point in a narrower DNN to one in a wider DNN. This observation allows us to constrain the model rank of a target function within a wide DNN by referencing the smallest DNN capable of representing it regardless of the depth. The formal mathematical exposition of these results is provided below.

\subsection{DNNs and the Embedding Principle}

In this subsection, we briefly recapitulate the key elements of Embedding Principle in Refs. \citep{zhang2021embedding,zhang2022embedding}.
Consider $L$-layer ($L\geq 2$) fully-connected DNNs with a general differentiable activation function. We regard the input as the $0$-th layer and the output as the $L$-th layer. Let $m_l$ be the number of neurons in the $l$-th layer. In particular, $m_0=d$ and $m_L=d'$. For any $i,k\in \sN$ and $i<k$, we denote $[i:k]=\{i,i+1,\ldots,k\}$. In particular, we denote $[k]:=\{1,2,\ldots,k\}$.
Given weights $W^{[l]}\in \sR^{m_l\times m_{l-1}}$ and bias $b^{[l]}\in\sR^{m_{l}}$ for $l\in[L]$, we define the collection of parameters $\vtheta$ as a $2L$-tuple (an ordered list of $2L$ elements) whose elements are matrices or vectors
\begin{equation}
    \vtheta=\Big(\vtheta|_1,\cdots,\vtheta|_L\Big)=\Big(\mW^{[1]},\vb^{[1]},\ldots,\mW^{[L]},\vb^{[L]}\Big).
\end{equation}
where the $l$-th layer parameters of $\vtheta$ is the ordered pair $\vtheta|_{l}=\Big(\mW^{[l]},\vb^{[l]}\Big),\quad l\in[L]$.
We may misuse the notation and identify $\vtheta$ with its vectorization $\mathrm{vec}(\vtheta)\in \sR^M$ with $M=\sum_{l=0}^{L-1}(m_l+1) m_{l+1}$.

Given $\vtheta\in \sR^M$, the neural network function $\vf_{\vtheta}(\cdot)$ is defined recursively. First, we write $\vf^{[0]}_{\vtheta}(\vx)=\vx$ for all $\vx\in\sR^d$. Then for $l\in[L-1]$, $\vf^{[l]}_{\vtheta}$ is defined recursively as 
$\vf^{[l]}_{\vtheta}(\vx)=\sigma (\mW^{[l]} \vf^{[l-1]}_{\vtheta}(\vx)+\vb^{[l]})$.
Finally, we denote
\begin{equation}
    \vf_{\vtheta}(\vx)=\vf(\vx,\vtheta)=\vf^{[L]}_{\vtheta}(\vx)=\mW^{[L]} \vf^{[L-1]}_{\vtheta}(\vx)+\vb^{[L]}.
\end{equation}
For notational simplicity, we may drop the subscript $\vtheta$ in $\vf^{[l]}_{\vtheta}$, $l\in[0:L]$.

We formally define the notion of wider/narrower as follows.
\begin{definition}[\textbf{Wider/narrower DNN}]\label{def:narr_wide}
    We write $\mathrm{NN}(\{m_l\}_{l=0}^{L})$ for a fully-connected neural network with width $(m_0,\ldots,m_L)$.
    Given two $L$-layer ($L\geq 2$) fully-connected neural networks $\mathrm{NN}(\{m_l\}_{l=0}^{L})$ and $\mathrm{NN}'(\{m'_l\}_{l=0}^{L})$, if $m'_0=m_0$, $m'_L=m_L$, and for any $l\in[L-1]$, $m'_l\geq m_l$ and $K=\sum_{l=1}^{L-1}(m'_l-m_l)>0$, then we say that $\mathrm{NN}'(\{m'_l\}_{l=0}^{L})$ is wider or $K$-neuron wider than $\mathrm{NN}(\{m_l\}_{l=0}^{L})$ and $\mathrm{NN}(\{m_l\}_{l=0}^{L})$ is narrower or $K$-neuron narrower than $\mathrm{NN}'(\{m'_l\}_{l=0}^{L})$. 
\end{definition}

\begin{thm}[Embedding Principle, Theorem 4.2 in Ref.  \citep{zhang2022embedding}]\label{thm:embeddingPrinciple}
Given any NN and any  $K$-neuron wider NN, there exists a $K$-step composition embedding $\fP$ satisfying  that:
For any given data $S$, loss function $\ell(\cdot,\cdot)$, activation function $\sigma(\cdot)$,  given any   critical point $\vtheta^{\rc}_{\rnarr}$ of the narrower NN,   $\vtheta^{\rc}_{\rwide}:=\fP(\vtheta^{\rc}_{\rnarr})$ is still a critical point of the  $K$-neuron wider NN with the same output function, i.e., $\vf_{\vtheta^{\rc}_{\rnarr}}=\vf_{\vtheta^{\rc}_{\rwide}}$.
\end{thm}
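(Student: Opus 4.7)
The plan is to reduce the $K$-neuron case to the $K=1$ case by composition: I would first construct a one-neuron widening embedding that preserves both the output function and criticality, then define $\fP$ as a composition of $K$ such one-step embeddings that walk from the narrower width profile $(m_0,\ldots,m_L)$ to the wider profile $(m'_0,\ldots,m'_L)$ one new neuron at a time. Since function preservation and the property ``image of a critical point is critical'' are both closed under composition of maps between parameter spaces, the inductive step is immediate once the base case is in hand.

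For the base case, suppose the wide NN has exactly one more neuron in some layer $l_0\in[1:L-1]$. I would construct $\fP$ as a neuron-splitting operator: pick an existing neuron index $j\in[m_{l_0}]$ and define the wide parameters by copying everything from $\vtheta^{\rc}_{\rnarr}$ except that (a) I append a new $(m_{l_0}+1)$-th row to $\mW^{[l_0]}$ equal to row $j$ of $\mW^{[l_0]}_{\rnarr}$, (b) I set the new entry $b^{[l_0]}_{m_{l_0}+1}=b^{[l_0]}_{j,\rnarr}$, and (c) I split the outgoing weights as $\mW^{[l_0+1]}_{\rwide}[:,j]+\mW^{[l_0+1]}_{\rwide}[:,m_{l_0}+1]=\mW^{[l_0+1]}_{\rnarr}[:,j]$, with the cleanest choice being to leave the $j$-th column unchanged and set the new column to zero. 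A straightforward induction on $l$ then shows $\vf^{[l]}_{\vtheta^{\rc}_{\rwide}}=\vf^{[l]}_{\vtheta^{\rc}_{\rnarr}}$ for $l<l_0$, that at layer $l_0$ the wide hidden activation equals the narrow one on its first $m_{l_0}$ coordinates with the new coordinate equal to $f^{[l_0]}_{j,\rnarr}(\vx)$, and that the outgoing-weight splitting cancels exactly at layer $l_0+1$; the remaining layers then agree trivially, giving $\vf_{\vtheta^{\rc}_{\rwide}}=\vf_{\vtheta^{\rc}_{\rnarr}}$.

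The main technical step is criticality preservation. I would compute each partial derivative of $L_S$ at $\vtheta^{\rc}_{\rwide}$ by backpropagation and match it to a partial derivative at $\vtheta^{\rc}_{\rnarr}$, which vanishes by hypothesis. For any parameter that is a literal copy of a narrow parameter, both the forward activations and the backpropagated error signals agree with those of the narrow network coordinate-for-coordinate, so the wide derivative equals the narrow derivative and is zero. The nontrivial checks concern the parameters attached to the newly inserted neuron. For the new outgoing column $\mW^{[l_0+1]}[:,m_{l_0}+1]$, the derivative factors as the sensitivity at layer $l_0+1$ times $f^{[l_0]}_{j,\rnarr}(\vx)$, which reproduces the narrow derivative for the block $\mW^{[l_0+1]}[:,j]$ and is therefore zero. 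For the new row of $\mW^{[l_0]}$ and the new bias, the derivative factors as $\sigma'$ of the new pre-activation (equal to $\sigma'$ at neuron $j$ in the narrow net) times the backpropagated error $\sum_i \mW^{[l_0+1]}_{i,m_{l_0}+1}\,\delta^{[l_0+1]}_i$, which vanishes because the new outgoing column is zero; for a general split this same factor reduces to a scalar multiple of the narrow error signal into neuron $j$, again zero at criticality.

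Composing $K$ such one-neuron embeddings in any order compatible with the target widths yields the desired $K$-step composition embedding $\fP$. The main obstacle I anticipate is not conceptual but bookkeeping: one must track how neuron indices introduced in intermediate networks are re-indexed by subsequent splits and verify that the chain-rule identities survive these relabelings. The computation itself, however, is local to one adjacent layer-pair at each stage, so the composition strategy decouples a potentially tangled multilayer argument into $K$ essentially identical one-neuron verifications.
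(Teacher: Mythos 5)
Your proposal is correct and follows essentially the same route as the source of this result: the theorem is imported from \cite{zhang2022embedding} (Theorem 4.2) rather than proved in this paper, and the paper's own discussion (and Figure~\ref{fig:onestep}) describes exactly the strategy you use, namely composing $K$ one-step neuron-splitting embeddings, each of which preserves the output function by the cancellation of the split outgoing weights and preserves criticality by the backpropagation bookkeeping you carry out. The only cosmetic slip is the phrase ``scalar multiple of the narrow error signal \ldots{} again zero at criticality'' --- it is the narrow \emph{gradient} with respect to the incoming row of neuron $j$ (not the error signal itself) that vanishes, but your computation makes clear that this is what you mean.
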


\begin{figure}
    \centering
    \includegraphics[width=0.8\textwidth]{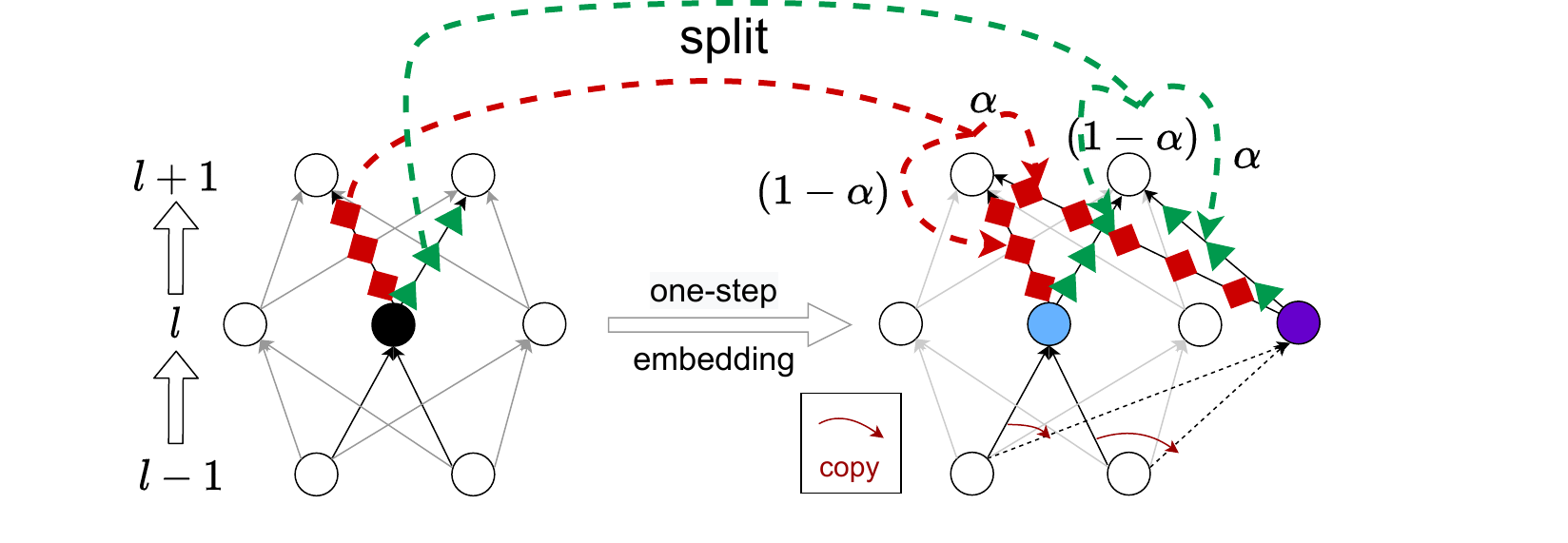} 
    \label{fig:onestep}
\caption{(Figure 2 in \cite{zhang2021embedding}) Illustration of one-step splitting embedding. The black neuron in the left network is split into the blue and purple neurons in the right network. The red (green) output weight of the black neuron in the left net is splitted into two red (green) weights in the right net with ratio $\alpha$ and $(1-\alpha)$, respectively.    \label{fig:onestep}}
\end{figure}

The cornerstone of the Embedding Principle's proof lies in constructing a series of critical embeddings that map any given DNN to a wider one. Figure \ref{fig:onestep} illustrates a typical type of critical embedding, specifically the 'splitting' embedding. This process involves dividing a single neuron into two, while simultaneously preserving both the neural network's output function and its criticality. It is readily apparent that this splitting embedding can be seamlessly adapted to Convolutional Neural Networks (CNNs) and Residual Networks (ResNets), suggesting that the Embedding Principle is applicable to these architectures as well. Consequently, it is reasonable to infer that all subsequent results that depend on the presence of critical embeddings for fully-connected neural networks can be extended to CNNs and ResNets. Additionally, \cite{zhang2022embedding} describes other varieties of critical embeddings, such as the null embedding and the general compatible embedding.

\subsection{Upper bounding optimistic sample size via critical mappings\label{appsec:ub_estimate}}

In this subsection, we first provide a general definition of critical mappings, by which the previously proposed critical embeddings (see \cite{zhang2022embedding} Definition 4.2 for details) are special cases. Then, we prove Lemma \ref{lem:ub_rank}, showing that uncovering critical mappings is an important means for obtaining an upper bound estimate of the optimistic sample size. This general result combined with the embedding principle of DNNs directly provides an upper bound estimate of optimistic sample sizes for general DNNs in Theorem \ref{thm:upper_rank_DNN}. We also illustrate the upper bounds for a general depth-$L$ DNN without bias terms in Table. \ref{tab:partial_rank}.

\begin{definition}[critical mapping]
Given differentiable model A $f_{\vtheta_A}=f(\cdot;\vtheta_A)$ with $\vtheta_A\in\sR^{M_A}$ and differentiable model B $g_{\vtheta_B}=g(\cdot;\vtheta_B)$ with $\vtheta_B\in\sR^{M_B}$, $\fP:\sR^{M_A}\to\sR^{M_B}$ is a critical mapping from model $A$ to $B$ if given any $\vtheta\in \sR^{M_A}$, we have\\
(i) output preserving: $f_{\vtheta}=g_{\fP(\vtheta)}$;\\
(ii) criticality preserving: for any data $S=\{(\vx_i,y_i)\}_{i=1}^n$ with mean-squared error (MSE) empirical risk function $R_S(h)=\frac{1}{2}\sum_{i=1}^n(h(\vx_i)-y_i)^2$, if $\nabla_{\vtheta}R_S(f_{\vtheta})=\vzero$, then $\nabla_{\vtheta}R_S(g_{\fP(\vtheta)})=\vzero$.
\end{definition}

\begin{lemma}[upper bound of optimistic sample size]\label{lem:ub_rank}
Given two models $f_{\vtheta_A}=f(\cdot;\vtheta_A)$ with $\vtheta_A\in\sR^{M_A}$ and $g_{\vtheta_B}=g(\cdot;\vtheta_B)$ with $\vtheta_B\in\sR^{M_B}$, if there exists a critical mapping $\fP$ from model $A$ to $B$, then the optimistic sample size $O_g(f^*)\leqslant O_f(f^*)\leqslant M_A$ for any $f^*\in\fF_{A}$.
\end{lemma}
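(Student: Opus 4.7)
The plan is to convert the claim into a pointwise model-rank comparison via Theorem~\ref{thm:opt_sample_size}, so that $O_f(f^*)=\rrank_f(f^*)$ and $O_g(f^*)=\rrank_g(f^*)$. The generic bound $O_f(f^*)\leqslant M_A$ is then immediate from Corollary~\ref{cor:generic_ub}, so the main task is $\rrank_g(f^*)\leqslant\rrank_f(f^*)$. I will pick $\vtheta^*\in\sR^{M_A}$ with $f_{\vtheta^*}=f^*$ attaining the minimum in Definition~\ref{def:mr_f}, so that $\rrank_f(\vtheta^*)=\rrank_f(f^*)$. The output-preserving property gives $g_{\fP(\vtheta^*)}=f^*$, placing $\fP(\vtheta^*)$ in the target set of $g$ and yielding $\rrank_g(f^*)\leqslant\rrank_g(\fP(\vtheta^*))$. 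The entire lemma then reduces to the pointwise inequality
\[
\rrank_g(\fP(\vtheta^*))\leqslant\rrank_f(\vtheta^*),
\]
which is the content the criticality-preserving axiom must supply.

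Set $V_f:=\rspan\{\partial_{\theta_i^A}f(\cdot;\vtheta^*)\}_{i=1}^{M_A}$ and $V_g:=\rspan\{\partial_{\theta_j^B}g(\cdot;\fP(\vtheta^*))\}_{j=1}^{M_B}$, two finite-dimensional function spaces on $\sR^d$ whose dimensions are $\rrank_f(\vtheta^*)$ and $\rrank_g(\fP(\vtheta^*))$ respectively. I will then pick $\vX=(\vx_1,\ldots,\vx_n)$, with $n$ large enough, so that the restriction map $\phi\mapsto(\phi(\vx_1),\ldots,\phi(\vx_n))$ is injective on $V_g$; for such $\vX$ we have $\dim V_g|_{\vX}=\dim V_g$, while $\dim V_f|_{\vX}\leqslant\dim V_f$ trivially. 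Now for any $\vr\in\sR^n$ with $\vr\perp V_f|_{\vX}$, set $y_i:=f(\vx_i;\vtheta^*)-r_i$. The MSE gradient at $\vtheta^*$ expands as $\sum_i r_i\nabla_\vtheta f(\vx_i;\vtheta^*)$, which vanishes by the orthogonality; criticality preservation then forces $\sum_i r_i\nabla_{\vtheta_B}g(\vx_i;\fP(\vtheta^*))=\vzero$, i.e., $\vr\perp V_g|_{\vX}$. Taking orthogonal complements gives $V_g|_{\vX}\subseteq V_f|_{\vX}$, and hence
\[
\rrank_g(\fP(\vtheta^*))=\dim V_g|_{\vX}\leqslant\dim V_f|_{\vX}\leqslant\rrank_f(\vtheta^*),
\]
closing the argument.

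The main technical obstacle is the existence of $\vX$ that makes the restriction injective on $V_g$; this is routine but deserves care, since the criticality axiom is stated for arbitrary data rather than a distinguished sample. I will verify it by adding points greedily: after $k$ points the kernel of the restriction has dimension at least $\dim V_g-k$, and whenever it is nontrivial it contains a nonzero function, which must be nonvanishing somewhere in $\sR^d$ and so supplies the next separating coordinate. This is the same maneuver already used in the proof of Corollary~\ref{cor:critical_rank_theta}. Everything else is a clean unpacking of the output- and criticality-preserving axioms, exploiting the freedom to vary the labels $y_i$ to probe every direction in $V_f|_{\vX}^\perp$; no assumption on $\fP$ beyond those two axioms is needed.
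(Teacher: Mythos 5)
Your proof is correct and follows essentially the same route as the paper: reduce to the pointwise rank inequality $\rrank_g(\fP(\vtheta^*))\leqslant\rrank_f(\vtheta^*)$ at a rank-minimizing $\vtheta^*\in\fM_{f^*}$, then vary the labels $y_i$ so that criticality preservation forces the empirical tangent space of $g$ at $\fP(\vtheta^*)$ into that of $f$ at $\vtheta^*$. The only difference is cosmetic: you phrase the inclusion via orthogonal complements of restricted function spaces and explicitly construct a sample $\vX$ that separates $V_g$, whereas the paper argues with kernels of the empirical tangent matrices and a terser ``infinite data limit'' step.
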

\begin{rmk}
If $M_B\gg M_A$, this upper bound estimate is highly informative, indicating target recovery capability at heavy overparameterization for model $B$.  Importantly, this lemma establishes the relation between our rank stratification and previous studies about the critical embedding for the DNN loss landscape analysis. As a result, the critical embedding intrinsic to the DNN architecture not only benefits optimization as studied in previous works, but also profoundly benefits the recovery/generalization performance.
\end{rmk}
\begin{proof}
 By Definition \ref{def:mr_f}, for any $f^*\in\fF_{f}$, there exists $\vtheta^*\in\sR^{M_A}$ such that $\rrank_{f_{\vtheta}}(\vtheta^*)=\rrank_{f_{\vtheta}}(f^*)$. Then, $g_{\fP(\vtheta^*)}=f_{\vtheta^*}=f^*$. Because $R_S(\vtheta)=\frac{1}{2}\sum_{i=1}^n(f_{\vtheta}(\vx_i)-y_i)^2$, we have 
 \[
    \nabla_{\vtheta}R_S(f_{\vtheta^*})=\sum_{i=1}^n (y_i-f^*(\vx_i))\nabla_{\vtheta^*}f_{\vtheta^*}(\vx_i)
 \]
 and $\nabla_{\fP(\vtheta)}R_S(g_{\fP(\vtheta^*)})=\sum_{i=1}^n (y_i-f^*(\vx_i))\nabla_{\fP(\vtheta)}g_{\fP(\vtheta^*)}(\vx_i)$. Because $\fP$ is criticality preserving for arbitrary data $S$, we have $\ker(\nabla_{\vtheta}f_{\vtheta^*}(\mX))\subseteq\ker(\nabla_{\fP(\vtheta)}g_{\fP(\vtheta^*)}(\mX))$ for any $\mX:=[\vx_1,\cdots,\vx_n]$. Here, $\nabla_{\vtheta}f_{\vtheta^*}(\mX)=[\nabla_{\vtheta}f_{\vtheta^*}(\vx_1),\cdots,\nabla_{\vtheta}f_{\vtheta^*}(\vx_n)]$. Because $\rrank_S(\fP(\vtheta^*))+\rdim(\ker(\nabla_{\fP(\vtheta)}g_{\fP(\vtheta^*)}(\mX)))=\rrank_S(\vtheta^*)+\rdim(\ker(\nabla_{\vtheta}f_{\vtheta^*}))=n$, we have $\rrank_S(\fP(\vtheta^*))\leqslant \rrank_S(\vtheta^*)$ for any data $S$ (See Definition \ref{def:emp_mr} for $\rrank_{S}(\cdot)$). Taking the infinite data limit, we obtain $\rrank_g(\fP(\vtheta^*))\leqslant\rrank_f(\vtheta^*)\leqslant M_A$. Therefore, $\rrank_g(f^*)\leqslant\rrank_f(f^*)\leqslant M_A$ for any $f^*\in\fF_{f}$. By Theorem \ref{thm:opt_sample_size}, $O_g(f^*)\leqslant O_f(f^*)\leqslant M_A$.
\end{proof}

 As a direct consequence of Lemma \ref{lem:ub_rank} and Theorem \ref{thm:embeddingPrinciple}, we obtain the following theorem.
\begin{thm}[upper bound of optimistic sample size for DNNs]\label{thm:upper_rank_DNN}
Given any NN with $M_{\rwide}$ parameters, for any function in the function space of a narrower NN with $M_{\rnarr}$ parameters and for any $f^*\in\fF_{\rnarr}$, we have $O_{f_{\vtheta_{\rwide}}}(f^*)\leqslant O_{f_{\vtheta_{\rnarr}}}(f^*)\leqslant M_{\rnarr}$.
\label{thm:app_embedding}
\end{thm}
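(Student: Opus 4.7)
The plan is to combine the Embedding Principle (Theorem \ref{thm:embeddingPrinciple}) with the upper-bound lemma (Lemma \ref{lem:ub_rank}) in a direct way. Specifically, I would identify the narrower NN as ``model $A$'' and the wider NN as ``model $B$'' in the statement of Lemma \ref{lem:ub_rank}, and exhibit a critical mapping $\fP:\sR^{M_{\rnarr}}\to\sR^{M_{\rwide}}$ between them; the conclusion of Theorem \ref{thm:upper_rank_DNN} then falls out by invoking the lemma.

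First I would apply Theorem \ref{thm:embeddingPrinciple} with $K=\sum_{l=1}^{L-1}(m'_l-m_l)$ equal to the neuron-width difference between the wider and narrower networks (recall Definition \ref{def:narr_wide}). This produces a $K$-step composition embedding $\fP:\sR^{M_{\rnarr}}\to\sR^{M_{\rwide}}$. I would then verify that this $\fP$ meets the two requirements of a critical mapping: (i) the output-preserving property $f_{\vtheta}=g_{\fP(\vtheta)}$ holds for every $\vtheta\in\sR^{M_{\rnarr}}$ (not merely at critical points), because the composition embedding is built by the splitting construction illustrated in Figure \ref{fig:onestep}, which is a parameter-level identity of the network functions by design; (ii) the criticality-preserving property is exactly the conclusion of Theorem \ref{thm:embeddingPrinciple}, which is asserted for any data $S$ and any (continuously differentiable) loss, so in particular for the MSE risk used in the critical-mapping definition.

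Once $\fP$ is confirmed to be a critical mapping from the narrower model $A=f_{\vtheta_{\rnarr}}$ to the wider model $B=f_{\vtheta_{\rwide}}$, I would directly invoke Lemma \ref{lem:ub_rank}: for every $f^*\in\fF_{\rnarr}$, it yields
\begin{equation*}
O_{f_{\vtheta_{\rwide}}}(f^*)\leqslant O_{f_{\vtheta_{\rnarr}}}(f^*)\leqslant M_{\rnarr},
\end{equation*}
which is exactly the desired inequality. The second inequality also matches Corollary \ref{cor:generic_ub} applied to the narrower network, providing a sanity check.

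I do not expect a serious obstacle here, since both Theorem \ref{thm:embeddingPrinciple} and Lemma \ref{lem:ub_rank} have already been proved. The one subtle point worth being explicit about is that the critical-mapping definition demands criticality preservation for \emph{all} data $S$ simultaneously, whereas one might initially read Theorem \ref{thm:embeddingPrinciple} as concerning a fixed dataset; the quantifier structure of that theorem (``for any given data $S$'') resolves this, so the same single $\fP$ works uniformly. Beyond this bookkeeping the argument is essentially a one-line invocation of the two cited results.
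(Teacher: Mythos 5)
Your proposal is correct and follows essentially the same route as the paper: the paper's proof likewise obtains a critical mapping $\fP:\sR^{M_{\rnarr}}\to\sR^{M_{\rwide}}$ from Theorem \ref{thm:embeddingPrinciple} and then invokes Lemma \ref{lem:ub_rank} to conclude. Your extra remarks on verifying the output-preserving and data-uniform criticality-preserving properties are sound bookkeeping that the paper leaves implicit.
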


\begin{proof}
    By Theorem \ref{thm:embeddingPrinciple}, there exists a critical mapping $\fP:\sR^{M_\rnarr}\to\sR^{M_\rwide}$ from the narrower NN to the given NN. Then, by Lemma \ref{lem:ub_rank}, $O_{f_{\vtheta_{\rwide}}}(f^*)\leqslant O_{f_{\vtheta_{\rnarr}}}(f^*)\leqslant M_{\rnarr}$.
\end{proof}

It should be noted that while Theorem \ref{thm:upper_rank_DNN} is proven here for fully-connected DNNs, the results are readily extendable to CNNs and ResNets by employing their respective critical embeddings, such as the splitting embedding. Theorem \ref{thm:upper_rank_DNN} provides a significantly more precise upper bound for optimistic sample sizes than the generic upper bound presented in Corollary \ref{cor:generic_ub}, as demonstrated for an $L$-layer DNN in Table \ref{tab:partial_rank}. Furthermore, from this theorem, we can infer two corollaries that tackle two fundamental issues in the theory of deep learning: (i) the guarantee of LLR at overparameterization for DNNs; and (ii) the benefits derived from the layer-wise architectural design of DNN models.

\begin{table}[htb]
\centering
\renewcommand{\arraystretch}{2.0} 
\begin{tabular}{|c|c|c|}
\hline
\multicolumn{3}{|c|}{
model:                                                                  $f_{\boldsymbol{\theta}}(\boldsymbol{x}) = \boldsymbol{W}^{[L]}\sigma(\cdots\sigma(\boldsymbol{W}^{[1]}\boldsymbol{x})\cdots)$, $\mW^{[l]}=\sR^{m_l\times m_{l-1}}$,$m_L=1$, $m_0=d$}   \\ \hline
$f^*$                                                                  &    generic bound (Cor. \ref{cor:generic_ub})      & our bound (Thm. \ref{thm:upper_rank_DNN})                                                                                                                                                   \\ \hline
$\mathcal{F}_{\{1, 1, \cdots, 1\}}$                             & $M$                              & $d+L-1$                                                                                                                                                                   \\ \hline
$\vdots$                                                        & \vdots                             & $\vdots$                                                                                                                                                                  \\ \hline
$\mathcal{F}_{\{m_i'\}_{i=1}^{L-1}}, 1\leq m_i' \leq m_i$         & $M$                          & $dm_1'+m_1'm_2'+\cdots+m_{L-2}'m_{L-1}' + m_{L-1}'$                                                                                                                 \\ \hline
$\vdots$                                                       & \vdots                               & $\vdots$                                                                                                                                                                  \\ \hline
$\mathcal{F}_{\{m_i\}_{i=1}^{L-1}}$                           & $M$                                & $M$                                                                                                                             \\ \hline
\end{tabular}
\vspace{5pt}
\caption{Upper bound of optimistic sample size for a general $L$-layer fully-connected DNN with width-${\{m_i\}_{i=1}^{L-1}}$. For the simplicity of presentation, we consider the DNN without bias terms. Its total number of parameters $M=dm_1+m_1m_2+\cdots+m_{L-2}m_{L-1} + m_{L-1}$. $\fF_{\{m_i\}_{i=1}^{L-1}}$ denotes the function space of the $L$-layer DNN with width-$\{m_i\}_{i=1}^{L-1}$ for hidden layers. \label{tab:partial_rank}}
\end{table}

\begin{cor}[LLR-guarantee at overparameterization for DNNs]\label{cor:LLR-op-DNN}
    For a DNN model, all functions expressible by any narrower DNNs have LLR-guarantee at overparameterization.
\end{cor}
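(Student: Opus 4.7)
The plan is to apply Theorem \ref{thm:upper_rank_DNN} directly, together with a simple parameter-count inequality. Fix the wide DNN with $M_{\rwide}$ parameters and consider any narrower DNN with $M_{\rnarr}$ parameters whose function space $\fF_{\rnarr}$ contains the target $f^*$. By Definition \ref{def:narr_wide}, being strictly narrower means $K=\sum_{l=1}^{L-1}(m^{\rwide}_l-m^{\rnarr}_l)>0$, so at least one hidden layer has more neurons in the wide network.

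The first thing I would verify is the strict parameter inequality $M_{\rnarr}<M_{\rwide}$. Writing $M=\sum_{l=1}^L(m_{l-1}+1)m_l$ (and analogously for the wide net), the assumption $m^{\rwide}_l\geqslant m^{\rnarr}_l$ for all hidden layers together with at least one strict inequality (in some hidden layer $l^*$) gives a strict increase in the contributions from layers $l^*$ and $l^*+1$ and a nondecrease elsewhere. This yields $M_{\rnarr}<M_{\rwide}$; this is a routine check and requires no deep argument.

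Next, I would invoke Theorem \ref{thm:upper_rank_DNN}, which gives the upper bound $O_{f_{\vtheta_{\rwide}}}(f^*)\leqslant M_{\rnarr}$ for every $f^*\in\fF_{\rnarr}$. By Definition \ref{def:opt_sample_size} (together with Proposition \ref{prop:greater_n-sample LLR}), this means $f^*$ has $n$-sample LLR-guarantee for every $n\geqslant M_{\rnarr}$. In particular, taking $n=M_{\rnarr}$, we obtain an $n$-sample LLR-guarantee with $n=M_{\rnarr}<M_{\rwide}$. By Definition \ref{def:LLR-guarantee}(b) this is precisely LLR-guarantee at overparameterization for the wide model, finishing the proof.

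There is essentially no obstacle: all the substantive work has been done in Lemma \ref{lem:ub_rank} and Theorem \ref{thm:upper_rank_DNN} via the Embedding Principle, and the present corollary only combines these with the trivial parameter-count inequality. The only minor care needed is to confirm that ``narrower'' in Definition \ref{def:narr_wide} forces a strict drop in the total parameter count (rather than merely a non-increase), which follows immediately from $K>0$.
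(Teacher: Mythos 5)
Your proposal is correct and follows essentially the same route as the paper's own proof: invoke Theorem \ref{thm:upper_rank_DNN} to get $O_{f_{\vtheta_{\rwide}}}(f^*)\leqslant M_{\rnarr}$, then combine with $M_{\rnarr}<M_{\rwide}$ and the definitions of optimistic sample size and LLR-guarantee at overparameterization. Your extra verification that $K>0$ forces a strict drop in the total parameter count is a detail the paper asserts without proof, but it changes nothing substantive.
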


\begin{proof}
    We denote $f_{\vtheta_\rwide}$ as the DNN model and $M_{\rwide}$ as its total parameter size.  For any narrower DNN $f_{\vtheta_\rnarr}$ (see Definition \ref{def:narr_wide}), its parameter size $M_{\rnarr}<M_{\rwide}$. For any $f^*\in\fF_{\rnarr}$, by Theorem \ref{thm:upper_rank_DNN}, the optimistic sample size $O_{f_{\vtheta_{\rwide}}}(f^*)\leqslant O_{f_{\vtheta_{\rnarr}}}(f^*)\leqslant M_{\rnarr}<M_{\rwide}$. By the Definition \ref{def:opt_sample_size} of the optimistic sample size and Definition \ref{def:LLR-guarantee} of the LLR-guarantee, $f^*$ has LLR-guarantee at overparameterization.
\end{proof}

As far as we know, this is the first recovery guarantee at overparametrization for general DNNs. Though LLR-guarantee is a relatively weak form of recovery guarantee, it sets a solid ground for further improving the guarantee to stronger types of recovery, e.g., local or even global recovery. In particular, we reasonably conjecture that having LLR-guarantee at overparameterization is a necessary condition for having any stronger type of recovery guarantee. 

\begin{cor}[free expressiveness in width]\label{cor:free_expr}
    The optimistic sample size of a target function expressible by any DNN never increases as the DNN gets wider.
\end{cor}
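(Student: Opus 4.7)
The plan is to derive the corollary as an immediate consequence of Theorem \ref{thm:upper_rank_DNN}, since that theorem already contains the inequality we need embedded in its statement. Fix any DNN, call it $\mathrm{NN}_{\rnarr}$, and any wider DNN $\mathrm{NN}_{\rwide}$ in the sense of Definition \ref{def:narr_wide}. Let $f^*\in\fF_{\rnarr}$ be an arbitrary target expressible by the narrower network.

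First, I would observe that the wider network also expresses $f^*$, so that $O_{f_{\vtheta_{\rwide}}}(f^*)$ is well-defined: by the Embedding Principle (Theorem \ref{thm:embeddingPrinciple}) there exists a critical mapping $\fP$ from $\mathrm{NN}_{\rnarr}$ to $\mathrm{NN}_{\rwide}$ that is output-preserving, hence any $\vtheta\in\fM_{f^*}$ for the narrower net yields $\fP(\vtheta)\in\fM_{f^*}$ for the wider net, giving $f^*\in\fF_{\rwide}$. Second, Theorem \ref{thm:upper_rank_DNN} already states the chain $O_{f_{\vtheta_{\rwide}}}(f^*)\leqslant O_{f_{\vtheta_{\rnarr}}}(f^*)\leqslant M_{\rnarr}$ for any such $f^*$; retaining only the first inequality gives precisely the assertion that the optimistic sample size does not increase as the DNN gets wider.

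There is essentially no obstacle: the nontrivial content — existence of an output- and criticality-preserving map and the resulting model-rank comparison — was already packaged in Lemma \ref{lem:ub_rank} and used to prove Theorem \ref{thm:upper_rank_DNN}. The only bookkeeping is to make sure the "wider/narrower" relation is compared pairwise, which is covered by Definition \ref{def:narr_wide}. As a minor note, the same argument transfers to CNNs and ResNets by invoking the corresponding splitting embeddings noted after Theorem \ref{thm:embeddingPrinciple}, so the monotonicity in width is an architecture-agnostic feature of the embedding-principle hierarchy.
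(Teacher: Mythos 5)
Your proposal is correct and matches the paper's own proof, which likewise obtains the corollary by simply reading off the first inequality $O_{f_{\vtheta_{\rwide}}}(f^*)\leqslant O_{f_{\vtheta_{\rnarr}}}(f^*)$ from Theorem \ref{thm:upper_rank_DNN}. Your additional remark that the critical mapping guarantees $f^*\in\fF_{\rwide}$, so that the wider network's optimistic sample size is well-defined, is a small bit of bookkeeping the paper leaves implicit but is entirely consistent with its argument.
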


\begin{proof}
    For any DNN $f_{\vtheta_\rnarr}$, any wider DNN $f_{\vtheta_\rwide}$ and a target function $f^*\in\fF_{\rnarr}$, by Theorem \ref{thm:upper_rank_DNN}, the optimistic sample size $O_{f_{\vtheta_{\rwide}}}(f^*)\leqslant O_{f_{\vtheta_{\rnarr}}}(f^*)$. 
\end{proof}

Corollary \ref{cor:free_expr} ensures that the potential for recovery in the sense of LLR is not compromised when employing wider neural networks. This provides theoretical justification for the application of large DNNs in modeling even simple target functions.

\section{Optimistic sample sizes for two-layer tanh NNs}\label{sec: Optimistic sample sizes for two-layer NN}

Theorem \ref{thm:upper_rank_DNN} prompts the investigation of two pertinent questions: (i) the tightness of the established upper bound, and (ii) the methodology for determining the precise value of the optimistic sample size. In the subsequent analysis, we specifically address these issues for two-layer NNs with tanh activation functions. We provide exact estimates of the optimistic sample sizes that reach the upper limits as delineated by Theorem \ref{thm:upper_rank_DNN}, with the results detailed in Table \ref{table:FFN}. Furthermore, we extend our estimation to the optimistic sample sizes for two-layer tanh-CNNs, both with and without the implementation of weight-sharing. A comparative analysis of the optimistic sample sizes for fully-connected NNs versus CNNs reveals that superfluous connections among neurons lead to an increase in the optimistic sample size, which in turn adversely affects the fitting performance in terms of LLR.

\begin{table}[htb]
\centering
\renewcommand{\arraystretch}{2.} 
\begin{tabular}{|c|c|c|c|}
\hline
\multicolumn{4}{|c|}{model: $f_{\boldsymbol{\theta}}(\boldsymbol{x}) = \sum_{i=1}^{m}a_i\tanh(\boldsymbol{w}_i^\TT \boldsymbol{x}), \boldsymbol{x}\in \mathbb{R}^d, \boldsymbol{\theta} = (a_i, \boldsymbol{w}_i)_{i=1}^m$}                                         \\ \hline
$f^*$ & generic bound (Cor. \ref{cor:generic_ub}) & upper bound (Thm. \ref{thm:upper_rank_DNN})&  $O_{f_{\vtheta}}(f^*)$ (Thm. \ref{thm:opt_tanhFNN})                        \\ \hline
$\{0(\cdot)\}$& $0$ & $0$ & $0$                            \\ \hline
$\mathcal{F}^{\mathrm{NN}}_1\backslash \{0(\cdot)\}$&$m(d+1)$&$d+1$&$d+1$                                          \\ \hline
$\vdots$ &$\vdots$ &$\vdots$ &  $\vdots$                                                       \\ \hline
\makecell[c]{$\mathcal{F}^{\mathrm{NN}}_k\backslash \mathcal{F}^{\mathrm{NN}}_{k-1}$}& $m(d+1)$ & $k(d+1)$& $k(d+1)$      \\ \hline
$\vdots$&$\vdots$ &$\vdots$ & $\vdots$                                                  \\ \hline
\makecell[c]{$\mathcal{F}^{\mathrm{NN}}_m\backslash \mathcal{F}^{\mathrm{NN}}_{m-1}$}& $m(d+1)$& $m(d+1)$& $m(d+1)$ \\ \hline
\end{tabular}
\vspace{5pt}
\caption{Results of the optimistic sample sizes for two-layer width-$m$ tanh-NN. Here $\mathcal{F}^{\mathrm{NN}}_k:=\{\sum_{i=1}^k a^*_i\sigma(\boldsymbol{w}_i^{*\TT}\boldsymbol{x})| a_i\in\sR,\vw_i\in\sR^d\}$  denotes the function space of the width-$k$ tanh-NN. \label{table:FFN}}
\end{table}

\subsection{Theoretical preparation\label{appsec:two-layer_thm}}

The exact estimation of model rank hinges on unraveling the linear dependencies among tangent functions. The subsequent results concerning linear independence form the bedrock for estimating the model rank in two-layer neural networks.

\begin{prop}[linear independence of neurons]\label{prop: linear independence of neurons}
Let $\sigma: \sR \to \sR$ be any analytic function such that $\sigma^{(n_j)}(0) \neq 0$ for an infinite sequence of distinct indices $\{n_j\}_{j=1}^\infty$. Given $d \in \sN$ and $m$ distinct weights $\vw_1, ..., \vw_m \in \sR^d \backslash \{\vzero\}$, such that $\vw_k\neq\pm \vw_j$ for all $1 \leqslant k < j \leqslant m$. Then $\{\sigma(\vw_i^\TT \vx), \sigma'(\vw_i^\TT \vx)x_1, ..., \sigma'(\vw_i^\TT \vx)x_d\}_{i=1}^m$ is a linearly independent function set.  
\end{prop}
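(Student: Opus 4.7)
Assume a vanishing linear combination
\[
\sum_{i=1}^m \bigl[a_i \sigma(\vw_i^\TT \vx) + \sigma'(\vw_i^\TT \vx)\, \vb_i^\TT \vx\bigr] = 0 \quad \text{for all } \vx \in \sR^d,
\]
where $\vb_i = (b_{i,1},\ldots,b_{i,d})^\TT$. The goal is $a_i = 0$ and $\vb_i = \vzero$ for all $i$. The plan is to (a) reduce this $d$-dimensional identity to a one-parameter identity by probing along a generic line, (b) convert it into infinitely many polynomial constraints via the Taylor expansion of $\sigma$ at $0$, and (c) peel off contributions one index at a time through a dominant-term asymptotic that exploits the infinitely many nonzero derivatives $\sigma^{(n_j)}(0)$.

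For step (a), pick $\vu \in \sR^d$ outside the finite union of hyperplanes $\{\vw_i^\TT \vu = 0\}$, $\{(\vw_i - \vw_j)^\TT \vu = 0\}$, $\{(\vw_i + \vw_j)^\TT \vu = 0\}$ (nonempty since the $\vw_i$ are nonzero and pairwise distinct up to sign). Setting $\alpha_i := \vw_i^\TT \vu$ and $\beta_i := \vb_i^\TT \vu$, the numbers $|\alpha_1|,\ldots,|\alpha_m|$ are then distinct and nonzero; after relabeling, assume $|\alpha_1| > \cdots > |\alpha_m| > 0$. Substituting $\vx = t\vu$ gives
\[
g(t) := \sum_{i=1}^m \bigl[a_i \sigma(\alpha_i t) + \beta_i t\, \sigma'(\alpha_i t)\bigr] \equiv 0.
\]

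For step (b), expand $\sigma(z) = \sum_n c_n z^n$ with $c_n = \sigma^{(n)}(0)/n!$ near $0$. A direct computation shows that the coefficient of $t^n$ in $g(t)$, for $n \geq 1$, is $c_n\bigl[\sum_i a_i \alpha_i^n + n \sum_i \beta_i \alpha_i^{n-1}\bigr]$. Since $g$ is analytic near $0$ and identically zero, each such coefficient vanishes, and for every $n = n_j$ with $c_{n_j} \neq 0$ we obtain the constraint
\[
E_n := \sum_{i=1}^m a_i \alpha_i^n + n \sum_{i=1}^m \beta_i \alpha_i^{n-1} = 0.
\]

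For step (c), divide $E_n$ by $\alpha_1^n$ to get $a_1 + (n/\alpha_1)\beta_1 + \sum_{i \geq 2}\bigl[a_i(\alpha_i/\alpha_1)^n + (n/\alpha_1)\beta_i(\alpha_i/\alpha_1)^{n-1}\bigr] = 0$. Because $|\alpha_i/\alpha_1| < 1$ for $i \geq 2$ and $n\rho^n \to 0$ whenever $|\rho| < 1$, the sum over $i \geq 2$ vanishes as $n = n_j \to \infty$. Hence $a_1 + (n/\alpha_1)\beta_1 \to 0$ along a subsequence tending to infinity, which forces $\beta_1 = 0$ and then $a_1 = 0$. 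Removing the $i=1$ term and repeating the argument with $\alpha_2, \alpha_3, \ldots$ yields $a_i = \beta_i = 0$ for every $i$ by induction. Finally, $\beta_i = \vb_i^\TT \vu = 0$ holds for every $\vu$ in the above full-measure set, which contains $d$ linearly independent vectors, so $\vb_i = \vzero$, i.e., $b_{i,k} = 0$ for all $k$.

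The main obstacle is step (c): the hypothesis supplies nonvanishing Taylor coefficients only along a subsequence $\{n_j\}$, so one cannot simultaneously enforce $E_n = 0$ for all $n$, nor invoke a Vandermonde-style argument in a single stroke. The dominant-term asymptotic is precisely what lets a mere subsequence of constraints suffice, and the role of the sign condition $\vw_k \neq \pm \vw_j$ (which rules out trivial cancellations forced by an odd activation such as $\tanh$) enters by ensuring the $|\alpha_i|$ can be chosen distinct.
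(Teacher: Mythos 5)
Your proposal is correct and follows essentially the same route as the paper's proof: restrict to a generic line avoiding the hyperplanes $\{\vw_i^\TT\vu=0\}$ and $\{(\vw_i\pm\vw_j)^\TT\vu=0\}$, Taylor-expand at the origin, extract the coefficient constraints along the subsequence $\{n_j\}$ with $c_{n_j}\neq 0$, peel off coefficients one at a time via the dominant-term limit (with the linear-in-$n$ factor forcing $\beta_i=0$ before $a_i=0$), and finally recover $\vb_i=\vzero$ from the fact that the admissible directions $\vu$ span $\sR^d$. No gaps; the only cosmetic difference is that the paper phrases the last step as a linear function of $\ve$ vanishing on an open set rather than on a spanning set.
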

\begin{proof}
For $x$ sufficiently close to $0 \in \sR$, we can write $\sigma(x) = \sum_{j=0}^\infty c_j x^j$, where $c_j = \sigma^{(j)}(0)/(j!)$. Then, $\sigma'(x) = \sum_{j=1}^\infty j c_j x^{j-1}$ . Suppose that the set is not linearly independent. Choose not-all-zero constants $\{\alpha_i\}_{i=1}^m$ and $\{\beta_{i1}, ..., \beta_{id}\}_{i=1}^m$ such that 
\begin{equation*}
    \vx \mapsto \sum_{i=1}^m \left( \alpha_i \sigma(\vw_i^\TT \vx) + \sum_{t=1}^d \beta_{it} \sigma'(\vw_i^\TT \vx)x_t \right) 
\end{equation*} 
is a zero map on $\sR^d$, where $x_t$ denotes the $t$-th component of input. For $k, j, i \in [d]$, define the sets 
\begin{align*}
    A_{k,j} &:= \{\vx \in \sR^d | \left< \vx, \vw_k \pm \vw_j \right> = 0\} \\ 
    B_i     &:= \{\vx \in \sR^d | \left<\vx, \vw_i \right> = 0\}. 
\end{align*} 
Clearly, each $A_{k,j}$ is the union of two linear subspaces of dimension $(d-1)$, while each $B_i$ is a possibly empty affine subspace of dimension $(d-1)$. Thus, 
\begin{equation*}
    E := \left( \cup_{1\leqslant k,j \leqslant m} A_{k,j} \right) \cup \left( \cup_{i=1}^d B_i \right)
\end{equation*} 
has $\fL^{d}$ Lebesgue measure zero. Let $\ve \in \sR^d \backslash E$. Denote $p_i := \left< \vw_i, \ve \right>$ for each $i \in [m]$. Since $p_i \neq p_j$ and $p_i + p_j \neq 0$ whenever $i \neq j$, we can, without loss of generality, assume that $|p_1| > |p_2| > ... > |p_m|>0$. For any sufficiently small $\vep$ and any $i, t$ we have 
\begin{align*}
    \sigma(\vw_i^\TT (\vep \ve)) &= \sum_{j=0}^\infty (c_j p_i^j) \vep^j, \\ 
    \sigma'(\vw_i^\TT (\vep\ve))(\vep\ve)_t &= e_t \sum_{j=1}^\infty (j c_j p_i^{j-1}) \vep^j.  
\end{align*}
Thus, for sufficiently small $\vep$, 
\begin{equation}\label{linear independence of neurons eq1}
\begin{aligned}
    \sum_{i=1}^m \left( \alpha_i \sigma(\vw_i^\TT(\vep\ve)) + \sum_{t=1}^d \beta_{it} \sigma'(\vw_i^\TT (\vep\ve))(\vep\ve)_t\right) 
    &= \left(\sum_{i=1}^m \alpha_i \right) c_0 + \sum_{j=1}^\infty c_j \sum_{i=1}^m \left( \alpha_i + \frac{1}{p_i}\sum_{t=1}^d j\beta_{it} e_t \right) p_i^j \vep^j \\
    &= 0.
\end{aligned}
\end{equation}
We have $c_j \sum_{i=1}^m \left( \alpha_i + \frac{1}{p_i}\sum_{t=1}^d j\beta_{it} e_t \right) p_i^j = 0$ for all $j \in \sN$. In particular, for any $j \geq 2$, since $n_j \geq 1$ and $c_{n_j} \neq 0$, we have $\sum_{i=1}^m \left( \alpha_i + \frac{1}{p_i} \sum_{t=1}^d n_j\beta_{it} e_t \right) p_i^{n_j} = 0$, which yields 
\begin{equation*}
    \alpha_1 + \frac{1}{p_1}\sum_{t=1}^d n_j\beta_{1t} e_t = - \sum_{i=2}^m \left( \alpha_i + \frac{1}{p_i}\sum_{t=1}^d n_j\beta_{it} e_t \right) \frac{p_i^{n_j}}{p_1^{n_j}}. 
\end{equation*}
If $m = 1$, by taking limits $j\to \infty$, we have $\alpha_1 = \sum_{t=1}^d \beta_{1t}e_t = 0$.

Otherwise, since $|p_1| > |p_i|$ for any $2 \leqslant i \leqslant m$, it follows that, by taking limits $j\to \infty$, 
\begin{equation*}
     \lim_{j\rightarrow \infty}\left(\alpha_1 + \frac{1}{p_1}\sum_{t=1}^d n_j \beta_{1t} e_t \right) = \lim_{j\rightarrow \infty}- \sum_{i=2}^m \left( \alpha_i + \frac{1}{p_i}\sum_{t=1}^d n_j\beta_{it} e_t \right) \frac{p_i^{n_j}}{p_1^{n_j}} = 0. 
\end{equation*}
Thus, we also have $\alpha_1 = \sum_{t=1}^d \beta_{1t}e_t = 0$. For $m > 2$, we may rewrite Eq. \eqref{linear independence of neurons eq1} as 
\begin{equation*}
    \alpha_2 +\frac{1}{p_2} \sum_{t=1}^d n_j \beta_{2t} e_t = - \sum_{i=3}^m \left( \alpha_i + \frac{1}{p_i}\sum_{t=1}^d n_j \beta_{it} e_t \right) \frac{p_i^{n_j}}{p_2^{n_j}}
\end{equation*} 
for each $j \geq 2$, and take limits as we do above to deduce that $\alpha_2 + \frac{1}{p_2}\sum_{t=1}^d n_j \beta_{2t} e_t = 0$. By repeating this procedure for  at most $m$ times, we conclude that $\alpha_i + \frac{1}{p_i}\sum_{t=1}^d n_j \beta_{it} e_t = 0$ for all $i \in [m]$. Then, $\alpha_i = \sum_{t=1}^d \beta_{it}e_t = 0$ for any $i\in [m]$. For each $i$, $\sum_{t=1}^d \beta_{it}e_t'$ is a linear function of $\ve'$ on the open set $\sR^d\backslash E$ which vanishes on a neighborhood of $\ve$, we must have $\alpha_i = \beta_{it} = 0$ for any $i \in [m], t \in [d]$. Therefore, $\{\sigma(\vw_i^\TT \vx), \sigma'(\vw_i^\TT \vx)x_1, ..., \sigma'(\vw_i^\TT \vx)x_d\}_{i=1}^m$ must be a linearly independent set. 
\end{proof}

\begin{cor}[model rank estimate for two-layer NNs]\label{cor:mr_theta_2layerFNN}
Let $\sigma = \tanh$. Given $d \in \sN$, weights $\vw_1, ..., \vw_m \in \sR^d$, $a_1, ..., a_m \in \sR$, we have 
\begin{equation*}
    \rdim(\rspan \{\sigma(\vw_i^{\TT} \vx), a_i \sigma'(\vw_i^{\TT} \vx)x_1, ..., a_i \sigma'(\vw_i^{\TT} \vx)x_d \}_{i=1}^m)=m_{\vw} + m_{a} d,
\end{equation*} 
where $m_{\vw}=\frac{1}{2}|\{\vw_i,-\vw_i|\vw_i\neq \vzero, i\in[m]\}|$ indicating the number of independent neurons, $m_{a}=\frac{1}{2}|\{\vw_i,-\vw_i|\vw_i\neq \vzero, a_i\neq 0, i\in[m]\}| + |\{\vw_i|\vw_i= \vzero, a_i\neq 0, i\in[m]\}|$ indicating the number of independent effective neurons. Here, $|\cdot|$ is the cardinality  of a set, i.e., number of different elements in a set. 
\label{cor:dimension}
\end{cor}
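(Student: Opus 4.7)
The strategy is to first collapse the obvious collinearities in the generating set using the parity properties of $\tanh$, reducing to a canonical set whose cardinality is exactly $m_{\vw}+m_a d$, and then to adapt the proof of Proposition \ref{prop: linear independence of neurons} to show this canonical set is linearly independent.

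Concretely, because $\tanh$ is odd we have $\sigma(-y) = -\sigma(y)$, and because $\tanh'$ is even we have $\sigma'(-y) = \sigma'(y)$; moreover $\tanh(0)=0$ and $\tanh'(0)=1$. Pick one representative $\vu_k$ from each equivalence class $\{\vw,-\vw\}$ of nonzero weights appearing among the $\vw_i$, so there are $p=m_{\vw}$ of them. Let $e_k=1$ if some $i$ with $\vw_i\in\{\vu_k,-\vu_k\}$ has $a_i\neq 0$ (else $e_k=0$), and $z=1$ if some $i$ with $\vw_i=\vzero$ has $a_i\neq 0$ (else $z=0$); by definition $m_a=\sum_k e_k + z$. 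I would first verify by direct substitution that the span of the original generators equals the span of the reduced set
\begin{equation*}
    \{\sigma(\vu_k^\TT \vx)\}_{k=1}^{p} \;\cup\; \{\sigma'(\vu_k^\TT \vx)\,x_t : e_k=1,\, t\in[d]\} \;\cup\; \{x_t : z=1,\, t\in[d]\},
\end{equation*}
which has cardinality $p + (\sum_k e_k)\, d + z d = m_{\vw}+m_a d$. Both inclusions of spans are routine: within each nonzero class, the generators are $\pm \sigma(\vu_k^\TT \vx)$ and $a_i \sigma'(\vu_k^\TT \vx) x_t$, and the zero-weight case kills $\sigma(\vzero^\TT\vx)$ but turns $a_i \sigma'(\vzero^\TT\vx) x_t$ into $a_i x_t$.

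For linear independence of the reduced set, I would extend the argument of Proposition \ref{prop: linear independence of neurons}. Suppose that
\begin{equation*}
    \sum_{k=1}^p \alpha_k \sigma(\vu_k^\TT \vx) + \sum_{k,t} \beta_{kt}\,\sigma'(\vu_k^\TT \vx)\,x_t + \sum_t \gamma_t\, x_t \equiv 0,
\end{equation*}
where the $\beta_{kt}$ and $\gamma_t$ are constrained to be zero outside the index sets above. Evaluate at $\vx = \vep\ve$ for $\ve$ outside the same exceptional measure-zero set $E$ used in Proposition \ref{prop: linear independence of neurons}, and expand in $\vep$. The crucial observation is that $\gamma_t x_t$ contributes only at order $\vep^1$, whereas $\sigma(\vu_k^\TT \vx)$ and $\sigma'(\vu_k^\TT\vx)x_t$ produce every odd order. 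Hence for each odd $n_j\geqslant 3$ with $\sigma^{(n_j)}(0)\neq 0$, the $\vep^{n_j}$-equation is literally the one from Proposition \ref{prop: linear independence of neurons} with no $\gamma$-contribution. Ordering $|p_1|>|p_2|>\cdots>|p_p|$ with $p_k=\vu_k^\TT\ve$ and taking $n_j\to\infty$ yields, exactly as in that proposition, $\alpha_k=0$ and $\sum_t\beta_{kt}e_t=0$ for every $k$. Substituting back, the $\vep^1$-equation collapses to $\sum_t\gamma_t e_t=0$, and varying $\ve$ over a set of positive measure forces $\gamma_t=0$ and $\beta_{kt}=0$.

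The main obstacle is the linear independence step, but it is essentially a one-line extension of Proposition \ref{prop: linear independence of neurons}: since the extra terms $x_t$ appear only at the lowest Taylor order, they do not interfere with the high-order asymptotic argument that pins down $\alpha_k$ and $\sum_t\beta_{kt}e_t$, and the lowest-order equation then isolates $\gamma_t$. The first step of quotienting by $\vw\sim -\vw$ and carving off the zero-weight case is routine bookkeeping, but care is required to spell out the indicator $z$ for the zero-weight contribution, since $\sigma(\vzero^\TT\vx)=0$ but $\sigma'(\vzero^\TT\vx)x_t=x_t$ is nonzero---this asymmetry is precisely why $m_a$ has the extra additive term for the zero-weight class.
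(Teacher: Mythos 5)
Your proposal is correct, and its overall skeleton---quotient the generators by the symmetry $\vw\sim-\vw$ using that $\tanh$ is odd and $\tanh'$ is even, count the surviving representatives to get $m_{\vw}+m_a d$, and then reduce linear independence to Proposition~\ref{prop: linear independence of neurons}---matches the paper's proof. Where you genuinely differ is the treatment of the zero-weight neurons. The paper applies Proposition~\ref{prop: linear independence of neurons} only to the nonzero-weight representatives and then disposes of the pure linear functions $x_1,\dots,x_d$ arising from $\sigma'(\vzero^{\TT}\vx)x_t$ with a one-line appeal to ``the nonlinearity of $\tanh$''; strictly speaking this requires showing that no nontrivial combination of the nonlinear generators equals a linear function, which the paper does not spell out. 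You instead re-open the Taylor-expansion argument of the proposition and carry the terms $\gamma_t x_t$ along, observing that they contribute only at order $\vep^1$ while the nonlinear generators contribute at every odd order $n_j$ with $\sigma^{(n_j)}(0)\neq 0$, so the high-order equations are untouched, the limit argument still forces $\alpha_k=0$ and $\sum_t\beta_{kt}e_t=0$, and the order-one equation then isolates $\gamma_t$. This buys you a self-contained and fully rigorous justification of the joint independence (effectively patching the paper's hand-wave), at the cost of repeating a slightly extended version of the proposition's proof rather than citing it as a black box. Both routes land on the same count $m_{\vw}+m_a d$.
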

\begin{proof}
Note that $\sigma = \tanh$ is analytic and $\sigma^{(2n+1)}(0) \neq 0$ for all $n$. Because $\tanh$ is an odd function, we have $\tanh(x) = -\tanh(-x)$ and $\tanh(0) = 0$. Therefore, given $\vw_i, \vw_j \neq \vzero$ with $\vw_i = \pm \vw_j$, $\rspan \{\sigma(\vw_i^\TT x)\} = \rspan\{\sigma(\vw_j^\TT x)\}$ and $\rspan \{\sigma'(\vw_i^\TT x)x_1, ..., \sigma'(\vw_i^\TT x)x_d\} = \rspan \{\sigma'(\vw_j^\TT x)x_1, ..., \sigma'(\vw_j^\TT x)x_d\}$. Since there are $m_{\vw}$ different non-zero weights, by Proposition~\ref{prop: linear independence of neurons} we have 
\begin{equation*}
    \dim \left( \rspan \{\sigma(\vw_i^\TT x)\}_{i=1}^m \right)= m_{\vw}. 
\end{equation*}
Furthermore, note that 
\begin{equation*}
    \rspan \{a_i \sigma'(\vw_i^\TT x)x_1, ..., a_i \sigma'(\vw_i^\TT x)x_d \}_{i=1}^m = \rspan \{\sigma'(\vw_i^\TT x)x_1, ..., \sigma'(\vw_i^\TT x)x_d: a_i \neq 0, i \in [m]\}. 
\end{equation*}
Thus, by Proposition~\ref{prop: linear independence of neurons}, 
\begin{align*}
    &\,\,\,\,\,\,\,\dim \left( \rspan \{\sigma'(\vw_i^\TT x)x_1, ..., \sigma'(\vw_i^\TT x)x_d: \vw_i \neq \vzero, a_i \neq 0, i \in [m]\} \right) \\ 
    &= \frac{1}{2}|\{\vw_i,-\vw_i|\vw_i\neq \vzero, a_i\neq 0, i\in[m]\}| \cdot d. 
\end{align*}
Now suppose that $\vw_j = \vzero \in \sR^d$ for some $j \in [m]$. Since $\sigma'(\vw^\TT x) = \sigma'(0) \neq 0$ for all $x \in \sR^d$, 
\begin{equation*}
    \rspan \{\sigma'(\vw_j^\TT x)x_1, ..., \sigma'(\vw_j^\TT x)x_d\} = \rspan \{x_1, ..., x_d\} 
\end{equation*} 
which consists only of linear functions. By the nonlinearity of $\tanh$, we conclude that 
\begin{equation*}
    \dim \left( \rspan \{\sigma'(\vw_i^\TT x)x_1, ..., \sigma'(\vw_i^\TT x)x_d\} \right) = m_a d 
\end{equation*} 
and thus $\dim \left( \rspan \{\sigma(\vw_i^\TT x), \sigma'(\vw_i^\TT x)x_1, ..., \sigma'(\vw_i^\TT x)x_d \} \right) = m_{\vw} + m_a d$ as desired. 
\end{proof}

The above corollary directly gives rise to the following result.
\begin{cor}
\label{cor:app_DNN}
Let $\sigma = \tanh$. Given distinct weights $\vw_1, ..., \vw_m \in \sR^d\backslash\{\boldsymbol{0}\}$ satisfying $\vw_k\neq\pm \vw_j$ for $k\neq j$ , and $a_1, ..., a_m \in \sR\backslash\{0\}$, we have 
\begin{equation*}
    \rdim(\rspan \{\sigma(\vw_i^{\TT} \vx), a_i \sigma'(\vw_i^{\TT} \vx)x_1, ..., a_i \sigma'(\vw_i^{\TT} \vx)x_d \}_{i=1}^m)=m(d+1).
\end{equation*}
\end{cor}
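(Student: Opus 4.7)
The plan is to reduce the statement directly to Corollary~\ref{cor:mr_theta_2layerFNN}, which already gives the dimension of the same span as $m_{\vw} + m_a d$ under mild hypotheses on $\{\vw_i\}$ and $\{a_i\}$. All that remains is to verify that under the strengthened assumptions of the present corollary (every $\vw_i \neq \vzero$, $\vw_k \neq \pm\vw_j$ for $k \neq j$, and every $a_i \neq 0$), both $m_{\vw}$ and $m_a$ attain their maximal value $m$.

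First I would unpack the definition $m_{\vw} = \tfrac{1}{2}|\{\vw_i, -\vw_i : \vw_i \neq \vzero, i \in [m]\}|$. Because every $\vw_i$ is nonzero and no two weights coincide up to sign, the set $\{\vw_1, -\vw_1, \ldots, \vw_m, -\vw_m\}$ consists of $2m$ distinct elements, so $m_{\vw} = m$. Next I would compute $m_a$: its second summand vanishes since no $\vw_i$ equals $\vzero$, and the first summand $\tfrac{1}{2}|\{\vw_i, -\vw_i : \vw_i \neq \vzero,\ a_i \neq 0, i\in[m]\}|$ equals $\tfrac{1}{2}\cdot 2m = m$ because the additional restriction $a_i \neq 0$ excludes no index. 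Hence $m_a = m$ as well.

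Plugging these into the formula of Corollary~\ref{cor:mr_theta_2layerFNN} gives $m_{\vw} + m_a d = m + md = m(d+1)$, which is the claimed dimension. I do not anticipate any genuine obstacle: the content is entirely bookkeeping against the definitions of $m_{\vw}$ and $m_a$ in the parent corollary, and the nontrivial analytic input (Proposition~\ref{prop: linear independence of neurons}) has already been absorbed into that corollary's statement.
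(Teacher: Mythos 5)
Your proposal is correct and matches the paper's intent exactly: the paper gives no separate proof, stating only that Corollary~\ref{cor:mr_theta_2layerFNN} ``directly gives rise'' to this result, and your bookkeeping verification that $m_{\vw}=m_a=m$ under the stated hypotheses is precisely the omitted reduction.
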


Next we consider the estimate of the model rank for CNNs which are widely used in practice. Here we consider the case where the input has two-dimensional indices, which is the most general case for the image input. The following two propositions can be directly generalized to the model rank estimate of CNNs with an input of one index dimension. 

\begin{definition}[ineffective neurons/kernels]\label{def: ineffective neurons/kernels}
    For two-layer tanh NNs, we say a neuron/kernel is output-ineffective if its output weight array is zero but input weight array is nonzero. We say a neuron/kernel is input-ineffective if its input weight array is zero but output weight array is nonzero. We say a neuron/kernel is null if both input and out weight arrays are zero. All these neurons/kernels are ineffective. Neurons/kernels that are not ineffective are effective.
\end{definition}

\begin{remark}
    Estimating the model rank at parameter points with input-ineffective neurons is complicated for CNNs. Luckily, we notice that model rank of a target function $f^*\in\fF$ can be obtained by considering only $\vtheta'\in\fM_{f^*}$  with no input-ineffective neurons. This is because any $\vtheta'\in\fM_{f^*}$ with input-ineffective neurons associates to a $\vtheta''\in\fM_{f^*}$ such that (i) input-ineffective neurons at $\vtheta'$ are replaced by null neurons (ii) $\rrank_{f_{\vtheta}}(\vtheta'')\leqslant \rrank_{f_{\vtheta}}(\vtheta')$. Therefore, we only estimate the model rank for CNNs at the parameter points with no input-ineffective neurons in the following propositions.
\end{remark}

\begin{prop}[model rank estimate for CNNs (with weight sharing)]\label{prop:CNN_ws}
    Given $m \in \sN$, $d \in \sN$ and $s \in [d]$. For any $l \in [m]$, let $\mK_l$ be a $(s \times s)$ matrix. Consider CNNs with stride = 1. For a tanh-CNN $f_{\vtheta}$ with weight sharing, 
    \begin{equation*}
        f_{\vtheta} (\mI) = \sum_{l=1}^m \sum_{i,j=1}^{d+1-s} a_{lij} \tanh \left(\sum_{\alpha, \beta} I_{i+s-\alpha, j+s-\beta} K_{l;\alpha, \beta}\right), \mI \in \sR^{d\times d}, 
    \end{equation*}
    its model rank at $\vtheta = (a_{lij}, \mK_{l})_{l, i, j}$ with no input-ineffective neurons is given by
    $$\rrank_{f_{\vtheta}}(\vtheta)=m_a s^2 + m_K(d+1-s)^2,$$
 where $m_{K}=\frac{1}{2}|\{\mK_l,-\mK_l|l\in[m],\mK_l\neq \vzero\}|$ indicating the number of independent kernels, $m_{a}=\frac{1}{2}\sum_{\mK\in \fK}\operatorname{dim}(\rspan\{a_{l,:,:}\}_{l\in h(\mK)})$ indicating the number of independent effective neurons. Here $\fK = \{\mK_l, -\mK_l|l\in [m],\mK_l\neq \vzero\}$, $h$ is a function over $\fK$ s.t. for each $\mK\in \fK, h(\mK) = \{l|l\in [m], \mK_l = \pm\mK\}$. $|\cdot|$ is the cardinality  of a set, i.e., number of different elements in a set, and $a_{l,:,:}$ denotes the $(d+1-s) \times (d+1-s)$ matrix whose entries are $a_{lij}$'s.
\end{prop}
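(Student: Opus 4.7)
The plan is to express the tangent functions explicitly, vectorize the convolutional output so that each patch inner product $(\mK_l*\mI)_{ij}$ becomes a single ``neuron'' $\vw_{l,i,j}^{\TT}\mathrm{vec}(\mI)$ in a fully-connected net, and then invoke Proposition~\ref{prop: linear independence of neurons} on the induced weight vectors. Concretely, for $l\in[m]$ and $(i,j)\in[1:d+1-s]^2$ let $\vw_{l,i,j}\in\sR^{d^2}$ be the vector obtained by placing the entries of $\mK_l$ at the coordinates corresponding to the $(i,j)$-th $s\times s$ patch of $\mI$ and zeros elsewhere. The tangent functions are then
\[
\partial_{a_{lij}}f_{\vtheta}(\mI)=\tanh(\vw_{l,i,j}^{\TT}\mathrm{vec}(\mI)),\qquad
\partial_{K_{l;\alpha,\beta}}f_{\vtheta}(\mI)=\sum_{i,j}a_{lij}\tanh'(\vw_{l,i,j}^{\TT}\mathrm{vec}(\mI))\,I_{i+s-\alpha,j+s-\beta}.
\]
Since $\tanh$ is odd and $\tanh'$ is even, kernels in the same $\pm$-equivalence class generate the same tangent content (up to sign for $a$-derivatives, exactly for $K$-derivatives), so one may pick a single representative per class before doing the dimension count.

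The next step is to verify that the weight vectors $\vw_{l,i,j}$ indexed by class representatives $l$ and patch positions $(i,j)\in[1:d+1-s]^2$ are pairwise distinct and no two are negatives of each other: distinct patches have distinct supports, and distinct $\pm$-classes give distinct nonzero values inside the common support. Proposition~\ref{prop: linear independence of neurons} then yields that $\{\tanh(\vw_{l,i,j}^{\TT}\mathrm{vec}(\mI))\}$ together with $\{\tanh'(\vw_{l,i,j}^{\TT}\mathrm{vec}(\mI))\,I_{p,q}\}_{p,q}$ form a linearly independent family across all such indices. I then count $V_a:=\rspan\{\partial_{a_{lij}}f_{\vtheta}\}$ and $V_K:=\rspan\{\partial_{K_{l;\alpha,\beta}}f_{\vtheta}\}$ separately. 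For $V_a$, collapsing by $\pm$-classes gives $\dim V_a=m_K(d+1-s)^2$ directly. For $V_K$, for each representative $\mK$ and each $(\alpha,\beta)\in[s]^2$ the generated subspace equals the set of functions $\sum_{i,j}c_{ij}\tanh'((\mK*\mI)_{ij})I_{i+s-\alpha,j+s-\beta}$ with $c\in\rspan\{a_{l,:,:}\}_{l\in h(\mK)}$; by injectivity of the map $c\mapsto\sum_{i,j}c_{ij}\tanh'(\cdots)I_{\cdots}$ (itself a consequence of the linear independence), this subspace has dimension $\dim\rspan\{a_{l,:,:}\}_{l\in h(\mK)}$. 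Summing over $(\alpha,\beta)$ and taking a direct sum over class representatives gives $\dim V_K=s^2\sum_{[\mK]}\dim\rspan\{a_{l,:,:}\}_{l\in h(\mK)}=s^2 m_a$, where the last equality uses that the $\tfrac12$ in the definition of $m_a$ cancels the double-counting of classes in $\fK$. Finally $V_a\cap V_K=\{0\}$ because the two subspaces separate into ``$\tanh$'' and ``$\tanh'\cdot$input'' types, which are linearly independent by the same proposition, yielding the claimed rank $m_a s^2+m_K(d+1-s)^2$.

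The main obstacle is the bookkeeping needed to legitimately invoke Proposition~\ref{prop: linear independence of neurons}: one must verify that the induced $\vw_{l,i,j}$'s are pairwise distinct and no two are negatives across the class-reduced index set. Because distinct patches can overlap, ruling out accidental $\vw_{l,i,j}=\pm\vw_{l',i',j'}$ requires exhibiting a coordinate in the symmetric difference of the two patches' supports where the kernel value is nonzero, which in turn uses both $\mK_l\neq\vzero$ and the $\pm$-distinctness of the class representatives. A secondary subtlety is correctly absorbing output-ineffective and null kernels into the counts $m_K$ and $m_a$: they produce zero tangent functions and hence drop out, so the resulting formula agrees with the counts as defined in the statement.
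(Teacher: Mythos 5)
Your proposal follows essentially the same route as the paper's proof: both treat each kernel--patch pair as a neuron with an induced padded weight vector in $\sR^{d^2}$, invoke Proposition~\ref{prop: linear independence of neurons} on that family, and then collapse $\pm$-equivalent kernels and vanishing output weights to obtain the counts $m_K$ and $m_a$ (your per-class identification of $\dim\rspan\{a_{l,:,:}\}_{l\in h(\mK)}$ is, if anything, a cleaner justification of the $m_a$ term than the paper's brief ``reduce two kernels to one iff the output weights are linearly dependent'' remark). The step you single out as the main obstacle---verifying that the induced vectors $\vw_{l,i,j}$ are pairwise non-equal and non-opposite across patches---is indeed the crux, and the paper's proof silently assumes it; note that your proposed fix (a nonzero kernel entry in the symmetric difference of the patch supports) does not go through for kernels supported on a proper sub-window (e.g.\ two single-entry kernels whose padded translates coincide), a degenerate case that neither your argument nor the paper's actually covers.
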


\begin{proof}
    Let $\sigma = \tanh$. We first consider the case in which there is no ineffective neuron (i.e., $a_{lij} \ne 0$ for all $l, i, j$) and $\mK_l \pm \mK_{l'} \neq \mzero$ for any distinct $l,l' \in [m]$. In this case the model rank is the dimension of the following function space (with respect to variable $\mI\in\sR^{d\times d}$) 
\begin{align*}
    &\,\,\,\,\,\,\,\rspan \left\{\frac{\partial f_{\vtheta}}{\partial a_{lij}} , \frac{\partial f_{\vtheta}}{\partial K_{l; \alpha, \beta}} \right\} \\ 
    &= \rspan \left\{ \sigma \left(\sum_{\alpha', \beta'} I_{i+s-\alpha', j+s-\beta'}K_{l;\alpha', \beta'} \right), \right. \\ 
    &\,\,\,\,\,\,\,\left.\sum_{i',j'=1}^{d+1-s} a_{li'j'} \sigma'\left(\sum_{\alpha', \beta'} I_{i'+s-\alpha', j'+s-\beta'}K_{l;\alpha', \beta'}\right) I_{i'+s-\alpha, j'+s-\beta} \right\}_{l, i, j,\alpha, \beta}, 
\end{align*}
where $l \in [m]$ and $\alpha, \beta \in [s]$. Next, we prove by contradiction that the set of functions 
\begin{equation*}
    \left\{ \sum_{i,j=1}^{d+1-s} a_{lij} \sigma' \left(\sum_{\alpha', \beta'} I_{i+s-\alpha', j+s-\beta'}K_{l;\alpha', \beta'} \right) I_{i+s-\alpha, j+s-\beta} \right\}_{l \in [m], \alpha,\beta \in [s]}
\end{equation*} 
are linearly independent.  If they are not linearly independent, there exist not all zero constants $\zeta_{l11}, ..., \zeta_{lss}$ for $l\in[m]$, such that 
\begin{equation*}
    \sum_{l=1}^m \sum_{\alpha, \beta=1}^s \zeta_{l\alpha\beta} \sum_{i,j=1}^{d+1-s} a_{lij} \sigma' \left(\sum_{\alpha', \beta'} I_{i+s-\alpha', j+s-\beta'}K_{l;\alpha', \beta'} \right) I_{i+s-\alpha, j+s-\beta}= 0, 
\end{equation*} 
which implies that the set of functions 
\begin{equation*}
    \left\{ a_{lij} \sigma'\left( \sum_{\alpha', \beta'} I_{i+s-\alpha', j+s-\beta'}K_{l;\alpha', \beta'}\right) I_{i+s-\alpha, j+s-\beta}\right\}_{l,i,j,\alpha,\beta}
\end{equation*} 
are linearly dependent, contradicting Proposition \ref{prop: linear independence of neurons}. 

In presence of null neurons and output-ineffective neurons, if $a_{lij} = 0$ for certain $l \in [m]$ and all $i,j \in \{1, ..., d+1-s\}$, 
\begin{equation*}
    \sum_{i,j=1}^{d+1-s} a_{lij} \sigma' \left(\sum_{\alpha', \beta'} I_{i+s-\alpha', j+s-\beta'}K_{l;\alpha', \beta'} \right) I_{i+s-\alpha, j+s-\beta}= 0
\end{equation*} 
for all $\alpha, \beta \in [s]$. If $\mK_{l} = \vzero$ for certain $l \in [m]$, then by Definition \ref{def: ineffective neurons/kernels} $a_{lij} = 0$ must hold for all $i, j$. Thus, we have 
$$ \sigma \left(\sum_{\alpha', \beta'} I_{i+s-\alpha', j+s-\beta'}K_{l;\alpha', \beta'} \right) I_{i+s-\alpha, j+s-\beta}= 0,$$
$$\sum_{i,j=1}^{d+1-s} a_{lij} \sigma' \left(\sum_{\alpha', \beta'} I_{i+s-\alpha', j+s-\beta'}K_{l;\alpha', \beta'} \right) I_{i+s-\alpha, j+s-\beta}= 0.$$

Moreover, notice that two kernels with $\mK_l = \pm\mK_{l'}$ can be reduced to one while maintaining model rank if and only if the corresponding output weights $a_{l,:,:}$ and $a_{l',:,:}$ are linearly dependent. Then, similar to Corollary~\ref{prop: linear independence of neurons}, we conclude that the model rank is $m_a s^2 + m_K(d+1-s)^2$. 
\end{proof}

\begin{prop}[model rank estimate for CNNs without weight sharing]\label{prop:CNN_ns}
Given $m \in \sN$, $d \in \sN$ and $s \in [d]$. For any $l \in [m]$ and $i,j \in [d+1-s]$, let $\mK_{lij}$ be a $(s \times s)$ matrices. 
Consider CNNs with stride = 1. 
For a $\tanh$ CNN $f_{\vtheta}$ without weight sharing, 
\begin{equation*}
    f_{\vtheta} (\mI) = \sum_{l=1}^m \sum_{i,j=1}^{d+1-s} a_{lij} \tanh\left(\sum_{\alpha, \beta} I_{i+s-\alpha, j+s-\beta}K_{lij; \alpha, \beta}\right), \mI \in \sR^{d\times d}, 
\end{equation*}
its model rank at $\vtheta = (a_{lij}, \mK_{lij})_{l, i, j}$ with no input-ineffective neuron is given by
$$\rrank_{f_{\vtheta}}(\vtheta)=m_a s^2 + m_K,$$
where $m_{K}=\frac{1}{2}|\{p(\mK_{lij}),-p(\mK_{lij})|l\in[m], i,j\in [d+1-s],\mK_{lij}\neq\vzero\}|$ indicating the number of independent kernels, $m_{a}=\frac{1}{2}|\{p(\mK_{lij}),-p(\mK_{lij})|l\in[m], i,j\in [d+1-s], a_{lij}\neq 0\}|$ indicating the number of independent effective neurons. Here $p$ is the padding function over kernels, i.e., for each ($s\times s$) kernel $\mK_{lij}$, $p(\mK_{lij}) \in \sR^{d\times d}$ s.t. $p(\mK_{lij})[i:i+s-1, j:j+s-1] = \mK_{lij}$ and the other elements of $p(\mK_{lij})$ are zero. $|\cdot|$ is the cardinality  of a set, i.e., number of different elements in a set. 
\end{prop}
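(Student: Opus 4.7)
The strategy is to view the CNN without weight sharing as a two-layer tanh neural network by identifying each neuron $(l,i,j)$ with an effective weight vector $\vw_{lij} := p(\mK_{lij}) \in \sR^{d \times d}$, so that $f_{\vtheta}(\mI) = \sum_{l,i,j} a_{lij}\tanh(\langle \vw_{lij}, \mI \rangle)$, where $\langle \cdot, \cdot \rangle$ is the Frobenius inner product. This recasts the problem as estimating the dimension of the tangent function space of a two-layer tanh NN whose input weights are constrained to be $s \times s$ patches embedded inside a $d \times d$ window. I would first compute the parameter gradients explicitly: $\partial_{a_{lij}} f = \tanh(\langle \vw_{lij}, \mI \rangle)$ and $\partial_{K_{lij;\alpha,\beta}} f = a_{lij}\tanh'(\langle \vw_{lij}, \mI \rangle)\, I_{i+s-\alpha, j+s-\beta}$. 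The crucial contrast with the weight-sharing case (Proposition~\ref{prop:CNN_ws}) is that this kernel-derivative is confined to the $(l,i,j)$-specific patch rather than summed across positions, so each active neuron contributes its own $s^2$ kernel-derivative directions.

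Next I would carry out a reduction to a generic configuration, paralleling the proof of Proposition~\ref{prop:CNN_ws}. Assume temporarily that every $a_{lij}\neq 0$, every $\mK_{lij}\neq \vzero$, and the padded kernels $\{\vw_{lij}\}$ are pairwise non-equivalent under $\vw \sim \pm\vw$. Proposition~\ref{prop: linear independence of neurons}, applied with weight vectors in $\sR^{d^2}$ to the $m(d+1-s)^2$ representatives, directly yields linear independence of all the tanh terms together with all the kernel-derivative terms, for a total dimension of $m(d+1-s)^2(1+s^2) = m_K + m_a s^2$. For the general case, the no-input-ineffective-neurons assumption forces $a_{lij}=0$ whenever $\vw_{lij}=\vzero$, so null neurons contribute nothing; output-ineffective neurons contribute a tanh term but no kernel derivatives; and the oddness of tanh collapses $\pm$-equivalent representatives so that each equivalence class contributes one tanh term (summing to $m_K$) and, whenever the class has at least one active $a_{lij}$, contributes $s^2$ kernel-derivative directions (summing to $m_a s^2$).

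The main obstacle is handling equivalence classes whose representatives come from neurons at distinct positions $(i,j)\neq (i',j')$ with coinciding padded kernels $p(\mK_{lij}) = \pm p(\mK_{l'i'j'})$. This can occur only when the $s\times s$ kernels contain enough zero entries to force their nonzero supports into the intersection of two distinct $s\times s$ windows. In such configurations, the naive union of the position-anchored patches may exceed $s^2$ cells, which would inflate the count of linearly independent $\tanh'(\langle \vw, \mI\rangle) I_{p,q}$ directions beyond $s^2$ per active class. Resolving this requires either an implicit genericity assumption on $\vtheta$ that excludes these coincidences, or a refined bookkeeping argument that absorbs the extra directions through the no-input-ineffective-neurons constraint. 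This is the natural analog of the $h(\mK)$-based refinement of $m_a$ used in the weight-sharing case and is where I expect the bulk of the technical work to lie.
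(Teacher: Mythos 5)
Your approach is the same one the paper takes: the paper's proof writes down the tangent functions $\partial f/\partial a_{lij}=\sigma(\langle p(\mK_{lij}),\mI\rangle)$ and $\partial f/\partial K_{lij;\alpha,\beta}=a_{lij}\sigma'(\langle p(\mK_{lij}),\mI\rangle)I_{i+s-\alpha,j+s-\beta}$, discards the contributions of null and output-ineffective neurons exactly as in your second paragraph, and then simply asserts that the dimension count $m_a s^2+m_K$ ``follows from Proposition \ref{prop: linear independence of neurons}.'' So your first two paragraphs reproduce the paper's entire argument. The difference is your third paragraph: the paper does not perform the $h(\mK)$-style bookkeeping you anticipate, nor does it address the coincidence of padded kernels at distinct positions.

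The obstacle you flag is genuine, and the paper's proof does not resolve it. Concretely, take $s=2$, $d=3$, and two active neurons at windows $(1,1)$ and $(2,1)$ whose kernels are supported only on the overlap row, arranged so that $p(\mK_{l11})=p(\mK_{l'21})=\vw\neq\vzero$ with $a_{l11},a_{l'21}\neq 0$. Neither neuron is input-ineffective, so the hypothesis of the proposition is satisfied, yet the tangent space contains $\sigma'(\langle\vw,\mI\rangle)I_{p,q}$ for every $(p,q)$ in the \emph{union} of the two windows ($6>s^2=4$ coordinates), and these are linearly independent by Proposition \ref{prop: linear independence of neurons} applied to the single weight $\vw\in\sR^{d^2}$. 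The class is counted once in $m_a$ and once in $m_K$, so the stated formula gives $s^2+1=5$ while the true contribution is at least $7$: the formula undercounts and the proposition as stated fails at such degenerate parameter points. This is a gap in the paper, not only in your sketch; the honest fix is either to add a hypothesis excluding coincidences $p(\mK_{lij})=\pm p(\mK_{l'i'j'})$ across distinct windows, or to restate the conclusion as $\rrank_{f_{\vtheta}}(\vtheta)\geqslant m_a s^2+m_K$, which is all that the downstream lower-bound argument in Theorem \ref{thm:Opt_ss_tanh-ns-CNN} actually uses (the upper bound there is evaluated at a specific $\vtheta^*_{\mathrm{NS}}$ where, for generic $\mH^*_l$, no such coincidence occurs). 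You should not expect to prove the equality as stated without one of these modifications.
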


\begin{proof}
Let $\sigma=\tanh$. The model rank is the dimension of the following function space 
\begin{align*}
    &\,\,\,\,\,\,\,\rspan \left\{\frac{\partial f_{\vtheta}}{\partial a_{lij}} , \frac{\partial f_{\vtheta}}{\partial K_{lij; \alpha, \beta}} \right\}_{l,i,j,\alpha,\beta} \\ 
    &= \rspan \left\{ \sigma \left(\sum_{\alpha', \beta'} I_{i+s-\alpha', j+s-\beta'} K_{lij;\alpha', \beta'} \right), \right.\\
    &\quad \left.a_{lij} \sigma' \left(\sum_{\alpha', \beta'} I_{i+s-\alpha', j+s-\beta'} K_{lij;\alpha', \beta'} \right) I_{i+s-\alpha, j+s-\beta}\right\}_{l,i,j,\alpha, \beta}, 
\end{align*}
where $l \in [m]$, $1 \leqslant i, j \leqslant d+1-s$, and $\alpha, \beta \in [s]$. Also note that if $a_{lij} = 0$ for some $l \in [m]$ and $i,j \in \{1, ..., d+1-s\}$, then 
\begin{equation*}
    a_{lij} \sigma' \left(\sum_{\alpha', \beta'} I_{i+s-\alpha', j+s-\beta'} K_{lij;\alpha', \beta'} \right) I_{i+s-\alpha, j+s-\beta}= 0
\end{equation*}
for all $\alpha, \beta \in [s]$. If $\mK_{lij} = \vzero$ for some $l \in [m]$ and $i,j \in \{1, ..., d+1-s\}$, because there is no input-ineffective neurons, we must have $a_{lij} = 0$. Then
$$\sigma \left(\sum_{\alpha', \beta'} I_{i+s-\alpha', j+s-\beta'} K_{lij;\alpha', \beta'} \right) I_{i+s-\alpha, j+s-\beta}= 0,$$
$$a_{lij} \sigma' \left(\sum_{\alpha', \beta'} I_{i+s-\alpha', j+s-\beta'} K_{lij;\alpha', \beta'} \right) I_{i+s-\alpha, j+s-\beta}= 0.$$
It follows from Proposition \ref{prop: linear independence of neurons} that this space has dimension $m_a s^2 + m_K$. 
\end{proof}

\subsection{Optimistic sample size estimates\label{appsec:FCNN_estimate}}

\begin{thm}[optimistic sample sizes for two-layer tanh-NN] \label{thm:opt_tanhFNN}
Given a two-layer NN $f_{\boldsymbol{\theta}}(\boldsymbol{x}) = \sum_{i=1}^{m}a_i\tanh(\boldsymbol{w}_i^\TT \boldsymbol{x}), \boldsymbol{x}\in \mathbb{R}^d, \boldsymbol{\theta} = (a_i, \boldsymbol{w}_i)_{i=1}^m,$
for any target function $f^* \in\fF_k^{\mathrm{NN}} \backslash \fF_{k-1}^{\mathrm{NN}}$ with $0\leqslant k\leqslant m$, the optimistic sample size 
$$O_{f_{\vtheta}}(f^*)=k(d+1).$$
Here $\mathcal{F}^{\mathrm{NN}}_k:=\{\sum_{i=1}^k a^*_i\sigma(\boldsymbol{w}_i^{*\TT}\boldsymbol{x})| a_i\in\sR,\vw_i\in\sR^d\}$ for $k\in \sN^+$, $\mathcal{F}^{\mathrm{NN}}_0:=\{0(\cdot)\}$ and $\mathcal{F}^{\mathrm{NN}}_{-1}:=\emptyset$.
\end{thm}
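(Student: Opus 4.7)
The plan is to invoke Theorem~\ref{thm:opt_sample_size}, which reduces the claim to $\rrank_{f_{\vtheta}}(f^*)=k(d+1)$, and then evaluate $\min_{\vtheta'\in\fM_{f^*}}\rrank_{f_{\vtheta}}(\vtheta')$ via the two-layer model-rank formula in Corollary~\ref{cor:mr_theta_2layerFNN} by proving matching upper and lower bounds.

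For the upper bound $\rrank_{f_{\vtheta}}(f^*)\leqslant k(d+1)$, I would first argue that every $f^*\in\fF_k^{\mathrm{NN}}\backslash\fF_{k-1}^{\mathrm{NN}}$ admits a representation $f^*(\vx)=\sum_{j=1}^k a_j^*\tanh((\vw_j^*)^\TT \vx)$ with $a_j^*\neq 0$, $\vw_j^*\neq\vzero$, and $\vw_j^*\neq\pm\vw_{j'}^*$ for $j\neq j'$. Any violation---a vanishing coefficient, a vanishing weight, or the oddness of $\tanh$ allowing merger of a $\pm$-pair---would collapse the sum to at most $k-1$ nontrivial terms, contradicting $f^*\notin\fF_{k-1}^{\mathrm{NN}}$. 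I then define $\vtheta^*\in\fM_{f^*}$ by assigning $(a_j^*,\vw_j^*)$ to the first $k$ neurons and the null pair $(0,\vzero)$ to the remaining $m-k$ neurons. Corollary~\ref{cor:mr_theta_2layerFNN} then yields $m_{\vw}=k$, $m_a=k$, and hence $\rrank_{f_{\vtheta}}(\vtheta^*)=k+kd=k(d+1)$.

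For the lower bound $\rrank_{f_{\vtheta}}(f^*)\geqslant k(d+1)$, I would take an arbitrary $\vtheta'=(a_i,\vw_i)_{i=1}^m\in\fM_{f^*}$ and partition the neuron indices into the doubly-effective set $S=\{i:\vw_i\neq\vzero,\,a_i\neq 0\}$, the input-ineffective set $S_0=\{i:\vw_i=\vzero,\,a_i\neq 0\}$, and the remainder (which contributes trivially). Since input-ineffective neurons satisfy $\tanh(\vzero^\TT\vx)=0$ and output-ineffective neurons have $a_i=0$, only indices in $S$ contribute to $f_{\vtheta'}$, yielding $f^*=\sum_{i\in S}a_i\tanh(\vw_i^\TT\vx)$. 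Using the oddness of $\tanh$ to merge $\pm$-pairs, this reduces to a sum of $k''$ terms with distinct weights modulo sign and nonzero coefficients; since $f^*\notin\fF_{k-1}^{\mathrm{NN}}$, we must have $k''\geqslant k$. Letting $k_{\mathrm{dis}}$ denote the count of distinct sign-classes of nonzero weights in $S$, we get $k_{\mathrm{dis}}\geqslant k''\geqslant k$, so Corollary~\ref{cor:mr_theta_2layerFNN} gives $m_{\vw}\geqslant k_{\mathrm{dis}}\geqslant k$ and $m_a\geqslant k_{\mathrm{dis}}\geqslant k$, whence $\rrank_{f_{\vtheta}}(\vtheta')=m_{\vw}+m_a\,d\geqslant k(d+1)$.

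The main obstacle is the lower bound step: one must exclude the possibility that some parameter choice $\vtheta'\in\fM_{f^*}$ shrinks $m_{\vw}$ or $m_a$ below $k$ through redundancies beyond weight-equality modulo sign, or that input-ineffective neurons somehow substitute for effective ones. This is resolved by observing, through Proposition~\ref{prop: linear independence of neurons}, that oddness of $\tanh$ is the only source of linear dependence among tangent components at distinct weights, so any reduction of the representation of $f^*$ to fewer than $k$ distinct weight sign-classes would directly contradict $f^*\notin\fF_{k-1}^{\mathrm{NN}}$.
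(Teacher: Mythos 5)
Your proposal is correct and follows the same overall strategy as the paper: reduce to the model rank via Theorem~\ref{thm:opt_sample_size}, then obtain matching bounds through Corollary~\ref{cor:mr_theta_2layerFNN} and Proposition~\ref{prop: linear independence of neurons}. The one genuine difference is the upper bound: the paper cites Theorem~\ref{thm:upper_rank_DNN} (the Embedding Principle bound $O_{f_{\vtheta}}(f^*)\leqslant M_{\rnarr}=k(d+1)$), whereas you construct the embedded point explicitly---$k$ effective neurons padded with $m-k$ null pairs $(0,\vzero)$---and evaluate its rank directly with Corollary~\ref{cor:mr_theta_2layerFNN}. Your route is more self-contained and avoids the embedding machinery (it is, in fact, the same ``walk-around'' the paper itself describes in a footnote for the CNN case), at the cost of not exhibiting the connection to critical embeddings; you also spell out more carefully than the paper both the canonical $k$-neuron representation of $f^*\in\fF_k^{\mathrm{NN}}\backslash\fF_{k-1}^{\mathrm{NN}}$ and the sign-class counting behind $m_{\vw}',m_a'\geqslant k$ in the lower bound, which the paper compresses into a single appeal to Proposition~\ref{prop: linear independence of neurons}.
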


\begin{proof}
Given any target function $f^*\in\fF_k^{\mathrm{NN}} \backslash \fF_{k-1}^{\mathrm{NN}}$, for $k=0$, $f^*=0(\cdot)$ and $O_{f_{\vtheta}}(f^*)=\rrank_{f_{\vtheta}}(f^*)=0=k(d+1)$. For $0<k\leqslant m$, we have
$$\fF_k^{\mathrm{NN}} \backslash \fF_{k-1}^{\mathrm{NN}}=\{\sum_{i=1}^k a_i\tanh(\boldsymbol{w}_i^{\TT}\boldsymbol{x}), a_i\neq0, \vw_i\neq\vzero, \vw_i\neq\pm\vw_j\}.$$ Therefore, there exists $\vtheta^* = (a_i^*, \vw_i^*)_{i=1}^{k}$ with $a_i^*\neq0$, $\vw^*_i\neq\vzero$, and $\vw^*_i\neq\pm\vw^*_j$ for $i\neq j$, such that $f^* = f_{\vtheta^*} := \sum_{i=1}^k a^*_i\tanh(\boldsymbol{w}_i^{*\TT}\boldsymbol{x})$.  By the upper bound estimate Theorem \ref{thm:upper_rank_DNN}, $O_{f_{\vtheta}}(f^*)\leqslant k(d+1)$.

By definition, the model rank of $f^*$ is the minimal model rank among all parameters recovering $f^*$ in the target set $\fM_{f^*}$. For any $\vtheta'=(a_i',\vw_i')_{i=1}^m \in \fM_{f^*}$, by Corollary \ref{cor:mr_theta_2layerFNN}, $\rrank_{f_{\vtheta}}(\vtheta')=m_{\vw}' + m_{a}' d,$
where $m_{\vw}'=\frac{1}{2}|\{\vw_i',-\vw_i'|\vw_i'\neq \vzero, i\in[m]\}|$, $m_{a}=\frac{1}{2}|\{\vw_i',-\vw_i'|\vw_i'\neq \vzero, a_i'\neq 0, i\in[m]\}| + |\{\vw_i'|\vw_i'= \vzero, a_i'\neq 0, i\in[m]\}|$. Because
$$\sum_{i=1}^m a_i'\tanh({\vw_i'}^\TT\vx)=\sum_{i=1}^k a_i^*\tanh(\vw_i^{*\TT}\vx)=f^*(\vx),$$
by the linear independence of neurons Proposition \ref{prop: linear independence of neurons}, $m_{\vw}'\geqslant k$ and $m_{a}'\geqslant k$. Then 
$\rrank_{f_{\vtheta}}(\vtheta')\geqslant k(d+1)$, which yields $O_{f_{\vtheta}}(f^*)=\rrank_{f_{\vtheta}}(f^*)\geqslant k(d+1)$. Therefore $O_{f_{\vtheta}}(f^*)=k(d+1)$.

\end{proof}

\begin{thm}[optimistic sample sizes for two-layer tanh-CNN] \label{thm:Opt_ss_tanhCNN}
Given a $m$-kernel two-layer CNN with weight sharing with $2$-d input $I\in\sR^{d\times d}$, $s\times s$ kernel and stride $1$
\begin{align*}
    f_{\vtheta} (\mI) = \sum_{l=1}^m \sum_{i,j=1}^{d+1-s} a_{lij} \tanh\left(\sum_{\alpha, \beta} I_{i+s-\alpha, j+s-\beta}K_{l; \alpha, \beta}\right), \quad \mI \in \sR^{d\times d},
\end{align*}
for any target function $f^* \in\fF_k^{\mathrm{CNN}} \backslash \fF_{k-1}^{\mathrm{CNN}}$ with $0\leqslant k\leqslant m$, the optimistic sample size 
$$O_{f_{\vtheta}}(f^*)=k(s^2+(d+1-s)^2).$$
Here $\mathcal{F}^{\mathrm{CNN}}_k$ indicates the function space of $k$-kernel CNN for $k\in \sN^+$, $\mathcal{F}^{\mathrm{CNN}}_0:=\{0(\cdot)\}$ and $\mathcal{F}^{\mathrm{CNN}}_{-1}:=\emptyset$.
\end{thm}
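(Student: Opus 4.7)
The plan is to follow the template of Theorem~\ref{thm:opt_tanhFNN}, sandwiching $O_{f_{\vtheta}}(f^*) = \rrank_{f_{\vtheta}}(f^*)$ (via Theorem~\ref{thm:opt_sample_size}) between matching upper and lower bounds. The case $k = 0$ is immediate: $f^* \equiv 0$ is realized by the zero parameter, which has model rank $0$. So I focus on $0 < k \leqslant m$ below.

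For the upper bound, since $f^* \in \fF_k^{\mathrm{CNN}}$, it is expressible by a $k$-kernel CNN whose total parameter count is exactly $k s^2 + k(d+1-s)^2 = k(s^2 + (d+1-s)^2)$. Per the discussion after Theorem~\ref{thm:embeddingPrinciple}, the embedding principle (hence Theorem~\ref{thm:upper_rank_DNN}) extends to CNNs via the splitting embedding on kernels, providing a critical mapping from the $k$-kernel CNN into the given $m$-kernel CNN. Lemma~\ref{lem:ub_rank} then yields $O_{f_{\vtheta}}(f^*) \leqslant k(s^2 + (d+1-s)^2)$.

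For the lower bound, I would invoke the remark preceding Proposition~\ref{prop:CNN_ws} to restrict attention to $\vtheta' \in \fM_{f^*}$ with no input-ineffective neurons (where the minimum model rank is attained). For such $\vtheta'$, Proposition~\ref{prop:CNN_ws} gives $\rrank_{f_{\vtheta}}(\vtheta') = m_a s^2 + m_K(d+1-s)^2$, so it suffices to prove both $m_K \geqslant k$ and $m_a \geqslant k$. To show $m_K \geqslant k$, I would fold together any pair of kernels $\mK_l, \mK_{l'}$ with $\mK_l = \pm \mK_{l'}$ using the oddness of $\tanh$, producing an equivalent representation of $f^*$ with at most $m_K$ distinct kernel directions (keeping only those with non-zero folded output weight array). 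If $m_K < k$, this would place $f^* \in \fF_{m_K}^{\mathrm{CNN}} \subseteq \fF_{k-1}^{\mathrm{CNN}}$, contradicting $f^* \notin \fF_{k-1}^{\mathrm{CNN}}$. To show $m_a \geqslant k$, I would observe that the folded representation assigns to each distinct kernel direction $\mK$ an effective output weight array $\tilde a_{:,:,\mK}$ that is a signed sum of $\{a_{l,:,:} : \mK_l = \pm \mK\}$; at least $k$ of these must be non-zero (else $f^* \in \fF_{k-1}^{\mathrm{CNN}}$), and each non-zero $\tilde a_{:,:,\mK}$ forces $\rdim(\rspan\{a_{l,:,:}\}_{l \in h(\mK)}) \geqslant 1$. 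Summing over the (at least $k$) distinct directions contributing to $m_a$ gives $m_a \geqslant k$, and hence $\rrank_{f_{\vtheta}}(\vtheta') \geqslant k s^2 + k(d+1-s)^2$.

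The main obstacle is the lower bound, specifically the simultaneous control of $m_K$ and $m_a$. Unlike the fully-connected case where each neuron carries a single scalar output weight, each CNN kernel $\mK_l$ is paired with a full array $a_{l,:,:}$ of output weights across spatial positions, and $m_a$ is defined in terms of the span of these arrays rather than directly in terms of effectiveness. The key insight driving the argument is that folding via $\tanh$-oddness preserves the represented function, so the irreducibility $f^* \notin \fF_{k-1}^{\mathrm{CNN}}$ simultaneously forces a minimum count of distinct kernel directions ($m_K \geqslant k$) and a minimum total dimension of the corresponding output-weight spans ($m_a \geqslant k$). Combining the two bounds yields $O_{f_{\vtheta}}(f^*) = k(s^2 + (d+1-s)^2)$ as claimed.
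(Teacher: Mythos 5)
Your proposal is correct and follows essentially the same route as the paper's proof: the upper bound via a critical embedding of the $k$-kernel CNN (the paper's footnote gives the equivalent zero-padding workaround), and the lower bound by restricting to parameters with no input-ineffective neurons, applying the rank formula of Proposition~\ref{prop:CNN_ws}, and deducing $m_K\geqslant k$ and $m_a\geqslant k$ from the irreducibility $f^*\notin\fF_{k-1}^{\mathrm{CNN}}$. Your folding argument via the oddness of $\tanh$ actually spells out the step that the paper compresses into a single citation of Proposition~\ref{prop: linear independence of neurons}, but it is the same underlying idea.
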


\begin{proof}
Let $\sigma = \tanh$. The above theorem obviously holds for $k=0$. For any target function $f^* \in\fF_k^{\mathrm{CNN}} \backslash \fF_{k-1}^{\mathrm{CNN}}$ with $0< k\leqslant m$, there exists $\vtheta^* = (a^*_{lij}, \mK^*_{l})_{l\in[k], i, j\in[d+1-s]}$ satisfying (i) $\mK^*_{l}\neq \pm \mK^*_{l'}$ for any $l\neq l'$ and (ii) $\forall l\in[k],  \exists a^*_{lij}\neq 0$, such that 
$$f^*(\mI) = f_{\vtheta^*}(\mI) = \sum_{l=1}^{k}\sum_{i,j=1}^{d+1-s} a^*_{lij} \sigma\left(\sum_{\alpha, \beta} I_{i+s-\alpha, j+s-\beta}K^*_{l; \alpha, \beta}\right).$$
By the upper bound estimate Theorem \ref{thm:upper_rank_DNN}, $O_{f_{\vtheta}}(f^*)\leqslant k(s^2+(d+1-s)^2)$ \footnote{Although Theorem \ref{thm:upper_rank_DNN} surely holds for CNNs, we do not prove it in our work because it requires proving the Embedding Principle for CNNs out of the focus of the current work. For the rigor of our proof, Theorem \ref{thm:upper_rank_DNN} can be walk around as follows. Considering $\vtheta'=(a_{lij}', \mK_{l}')_{l\in[m],i,j\in[d+1-s]}$ with $a_{lij}'=a^*_{lij}, \mK_{l}'=\mK^*_{l}$ for $l\in[k]$ and $a_{lij}'=0, \mK_{l}'=\vzero$ for $l>k$, then $O_{f_{\vtheta}}(f^*)=\rrank_{f_{\vtheta}}(f^*)\leqslant \rrank_{f_{\vtheta}}(\vtheta')=k(s^2+(d+1-s)^2)$.}.

Next we prove that $k(s^2+(d+1-s)^2)$ is also a lower bound of $\rrank_{f_{\vtheta}}(\vtheta')$ for $\vtheta'\in\fM_{f^*}$. Note that, we only need to consider the parameter points with no input-ineffective neuron. For any $\vtheta'=(a_{lij}', \mK_{l}')_{l\in[m], i, j\in[d+1-s]}\in\fM_{f^*}$ with no input-ineffective neuron, by Proposition \ref{prop:CNN_ns},  
$$\rrank_{f_{\vtheta}}(\vtheta')=m_a' s^2 + m_K'(d+1-s)^2,$$ 
where $m_{K}'=\frac{1}{2}|\fK|$, $m_{a}'=\frac{1}{2}\sum_{\mK\in \fK}\operatorname{dim}(\rspan\{a_{l,:,:}\}_{l\in h(\mK)})$ with $\fK = \{\mK_l', -\mK_l'|l\in [m],\mK_l'\neq\vzero\}$. Here $h$ is a function over $\fK$ such that for each $\mK\in \fK, h(\mK) = \{l|l\in [m], \mK_l' = \pm\mK\}$. Because
$$\sum_{l=1}^{k}\sum_{i,j=1}^{d+1-s} a^*_{lij} \sigma\left(\sum_{\alpha, \beta} I_{i+s-\alpha, j+s-\beta}K^*_{l; \alpha, \beta}\right)=\sum_{l=1}^{m}\sum_{i,j=1}^{d+1-s} a'_{lij} \sigma\left(\sum_{\alpha, \beta} I_{i+s-\alpha, j+s-\beta}K'_{l; \alpha, \beta}\right),$$
by the linear independence of neurons Proposition \ref{prop: linear independence of neurons}, $m_{K}\geqslant k$ and $m_{a}\geqslant k$. Therefore $\rrank_{f_{\vtheta}}(\vtheta')\geqslant k(s^2+(d+1-s)^2)$ for $\vtheta'\in\fM_{f^*}$, which yields $O_{f_{\vtheta}}(f^*)=\rrank_{f_{\vtheta}}(f^*)\geqslant k(s^2+(d+1-s)^2)$. Then we obtain $O_{f_{\vtheta}}(f^*)=k(s^2+(d+1-s)^2)$.
\end{proof}

\begin{thm}[optimistic sample sizes of CNN functions in two-layer CNN without weight sharing] \label{thm:Opt_ss_tanh-ns-CNN}
We consider a $m$-kernel two-layer CNN without weight sharing with $2$-d input $I\in\sR^{d\times d}$, $s\times s$ kernel and stride $1$
\begin{align*}
    f_{\vtheta} (\mI) = \sum_{l=1}^m \sum_{i,j=1}^{d+1-s} a_{lij} \tanh\left(\sum_{\alpha, \beta} I_{i+s-\alpha, j+s-\beta}K_{lij; \alpha, \beta}\right), \quad \mI \in \sR^{d\times d}.
\end{align*}
For any target function expressible by a CNN (with sharing) $f^* \in\fF_k^{\mathrm{CNN}} \backslash \fF_{k-1}^{\mathrm{CNN}}\subset \fF_k^{\mathrm{CNN-NS}}$ with $0\leqslant k\leqslant m$, then the optimistic sample size 
$$O_{f_{\vtheta}}(f^*)=(s^2+1)(k(d+1-s)^2-m_{\mathrm{null}}),$$
where $m_{\mathrm{null}}=|\{(l,i,j)|a_{lij}^*=0\}|$ counts the number of neurons with zero output weight.
\end{thm}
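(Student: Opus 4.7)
The plan is to apply Theorem~\ref{thm:opt_sample_size} to reduce the claim to computing $\rrank_{f_{\vtheta}}(f^*)$, and then to establish matching upper and lower bounds of $(s^2+1)(k(d+1-s)^2-m_{\mathrm{null}})$ via Proposition~\ref{prop:CNN_ns}. By the hypothesis $f^*\in\fF^{\mathrm{CNN}}_k\setminus\fF^{\mathrm{CNN}}_{k-1}$, one may fix a weight-sharing representation of $f^*$ with kernels $\mK^*_1,\dots,\mK^*_k$ (each nonzero and satisfying $\mK^*_l\neq\pm\mK^*_{l'}$ for $l\neq l'$) and output weights $a^*_{lij}$, exactly $k(d+1-s)^2-m_{\mathrm{null}}$ of which are nonzero.

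For the upper bound, I would embed this representation into the no-sharing parameter space by setting $(a'_{lij},\mK'_{lij})=(a^*_{lij},\mK^*_l)$ for $l\in[k]$ and $(a'_{lij},\mK'_{lij})=(0,\vzero)$ for $l>k$ or for positions with $a^*_{lij}=0$. The resulting $\vtheta'\in\fM_{f^*}$ has no input-ineffective neurons, and its $k(d+1-s)^2-m_{\mathrm{null}}$ effective triples are all indexed by pairwise non-parallel padded kernels $p(\mK^*_l)_{(i,j)}$. Proposition~\ref{prop:CNN_ns} then yields $m_a=m_K=k(d+1-s)^2-m_{\mathrm{null}}$ and hence $\rrank_{f_{\vtheta}}(\vtheta')=(s^2+1)(k(d+1-s)^2-m_{\mathrm{null}})$, giving $\rrank_{f_{\vtheta}}(f^*)\leqslant(s^2+1)(k(d+1-s)^2-m_{\mathrm{null}})$.

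For the lower bound, I would take any $\vtheta'\in\fM_{f^*}$ and, appealing to the remark preceding Proposition~\ref{prop:CNN_ns}, restrict without loss of generality to $\vtheta'$ with no input-ineffective neurons. Rewriting each no-sharing neuron as $\tanh(\langle p(\mK'_{lij}),\mI\rangle)$ with input $\mI\in\sR^{d\times d}$, the identity $f_{\vtheta'}=f^*$ expresses the canonical essential decomposition of $f^*$ as a linear combination of such tanh-ridges. Proposition~\ref{prop: linear independence of neurons} then forces each of the $k(d+1-s)^2-m_{\mathrm{null}}$ essential modes of $f^*$ to be matched by at least one effective triple in $\vtheta'$ whose padded kernel coincides up to sign with that mode, yielding $m_a\geqslant k(d+1-s)^2-m_{\mathrm{null}}$, hence $m_K\geqslant m_a\geqslant k(d+1-s)^2-m_{\mathrm{null}}$. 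Combined with Proposition~\ref{prop:CNN_ns}, this matches the upper bound.

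The principal obstacle is the combinatorial verification that the padded kernels $p(\mK^*_l)_{(i,j)}$ associated with distinct essential triples $(l,i,j)$ are pairwise non-parallel, since this claim underlies both directions. When the patches at $(i,j)$ and $(i',j')$ are disjoint, a $\pm$ relation would force the underlying kernels to vanish, contradicting $\mK^*_l\neq\vzero$; when the patches overlap, equating padded matrices up to sign imposes constraints on the kernel entries that clash with $\mK^*_l\neq\pm\mK^*_{l'}$. Once this bookkeeping is settled, both bounds follow mechanically from Propositions~\ref{prop: linear independence of neurons} and~\ref{prop:CNN_ns}.
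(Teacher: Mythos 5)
Your proposal follows essentially the same route as the paper's proof: reduce to model rank via Theorem~\ref{thm:opt_sample_size}, obtain the upper bound by evaluating Proposition~\ref{prop:CNN_ns} at the explicit no-sharing lift of a sharing representation of $f^*$ (which is exactly the workaround the paper itself gives in its footnote, since the Embedding Principle is not proved for CNNs there), and obtain the lower bound by applying Proposition~\ref{prop: linear independence of neurons} to an arbitrary preimage in $\fM_{f^*}$ with no input-ineffective neurons. The combinatorial point you flag as the principal obstacle --- that the padded kernels $p(\mK^*_{l})$ at distinct essential triples $(l,i,j)$ are pairwise non-parallel --- is silently assumed rather than verified in the paper's own proof, so your treatment is, if anything, slightly more careful on that point.
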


\begin{proof}
Let $\sigma = \tanh$. The above theorem obviously holds for $k=0$. For any target function $f^* \in\fF_k^{\mathrm{CNN}} \backslash \fF_{k-1}^{\mathrm{CNN}}$ with $0< k\leqslant m$, there is a point $\vtheta^*$ of CNN (with sharing) such that  
$$f^*(\mI) = f_{\vtheta^*}(\mI) = \sum_{l=1}^{k}\sum_{i,j=1}^{d+1-s} a^*_{lij} \sigma\left(\sum_{\alpha, \beta} I_{i+s-\alpha, j+s-\beta}H^*_{l; \alpha, \beta}\right),$$
where $\mH^*_{l}\neq \pm \mH^*_{l'}$ for any $l\neq l'$ and $\forall l\in[k],  \exists a^*_{lij}\neq 0$. Then
$$f^*(\mI) = f_{\vtheta^*_{\mathrm{NS}}}(\mI) = \sum_{l=1}^{k}\sum_{i,j=1}^{d+1-s} a^*_{lij} \sigma\left(\sum_{\alpha, \beta} I_{i+s-\alpha, j+s-\beta}K^*_{lij; \alpha, \beta}\right),$$
where $\mK^*_{lij}=\mH^*_{l}$ for all $(l,i,j)\in\{(l,i,j)|a_{lij}^*\neq 0\}$, $\mK^*_{lij}=\vzero$ for $(l,i,j)\in\{(l,i,j)|a_{lij}^*= 0\}$  and $\vtheta^*_{\mathrm{NS}}=(a^*_{lij},\mK^*_{lij})_{l\in[k],i,j\in[d+1-s]}$.
By Proposition \ref{prop:CNN_ws}, the model rank at $\vtheta^*_{\mathrm{NS}}$ for the $k$-kernel no-sharing CNN becomes $$\rrank_{\mathrm{CNN^{\mathrm{NS}}_k}}(\vtheta^*_{\mathrm{NS}})=m_a^* s^2 + m_K^*,$$
where 
\[
    m_{K}^*=\frac{1}{2}|\{p(\mK_{lij}^*),-p(\mK_{lij}^*)|\mK_{lij}^*\neq \vzero,l\in[k], i,j\in [d+1-s]\}|=k(d+1-s)^2-m_{\mathrm{null}}
\]
and 
\[
    m_{a}^*=\frac{1}{2}|\{p(\mK_{lij}^*),-p(\mK_{lij}^*)|l\in[k], i,j\in [d+1-s], a_{lij}\neq 0\}|=k(d+1-s)^2-m_{\mathrm{null}}, 
\]
and $p$ is the padding function over kernels, i.e., for each ($s\times s$) kernel $\mK_{lij}$, $p(\mK_{lij}) \in \sR^{d\times d}$ s.t. $p(\mK_{lij})[i:i+s-1, j:j+s-1] = \mK_{lij}$ and the other elements of $p(\mK_{lij})$ are zero. Similar to the proof of Theorem \ref{thm:Opt_ss_tanhCNN}, by the upper bound estimate Theorem \ref{thm:upper_rank_DNN}, 
$$O_{\mathrm{CNN^{\mathrm{NS}}_m}}(f^*)\leqslant O_{\mathrm{CNN^{\mathrm{NS}}_k}}(f^*) \leqslant \rrank_{\mathrm{CNN^{\mathrm{NS}}_k}}(\vtheta^*_{\mathrm{NS}}) = (s^2+1)(k(d+1-s)^2-m_{\mathrm{null}}).$$ 

Also similar to the proof of Theorem \ref{thm:Opt_ss_tanhCNN}, for any $\vtheta'_{\mathrm{NS}}=(a_{lij}',\mK_{lij}')_{l\in[m],i,j\in[d+1-s]}\in\fM_{f^*}$ with no input-ineffective neuron, by the linear independence of neurons Proposition \ref{prop: linear independence of neurons}, we have 
$$m_{K}'=\frac{1}{2}|\{p(\mK_{lij}'),-p(\mK_{lij}')|l\in[m], i,j\in [d+1-s],\mK_{lij}'\neq\vzero\}|\geqslant k(d+1-s)^2-m_{\mathrm{null}},$$
$$m_{a}'=\frac{1}{2}|\{p(\mK_{lij}'),-p(\mK_{lij}')|l\in[m], i,j\in [d+1-s], a_{lij}'\neq 0\}|\geqslant k(d+1-s)^2-m_{\mathrm{null}}.$$
Therefore $\rrank_{\mathrm{CNN^{\mathrm{NS}}_m}}(\vtheta')\geqslant (s^2+1)(k(d+1-s)^2-m_{\mathrm{null}})$ for $\vtheta'\in\fM_{f^*}$, which yields $O_{\mathrm{CNN^{\mathrm{NS}}_m}}(f^*)=\rrank_{\mathrm{CNN^{\mathrm{NS}}_m}}(f^*)\geqslant (s^2+1)(k(d+1-s)^2-m_{\mathrm{null}})$. Then we obtain $O_{\mathrm{CNN^{\mathrm{NS}}_m}}(f^*)=(s^2+1)(k(d+1-s)^2-m_{\mathrm{null}})$.
\end{proof}

\textbf{Costly expressiveness in connection for two-layer NNs.} Drawing from the aforementioned findings, we present a comparative analysis of optimistic sample sizes across various architectures, including CNNs with and without weight sharing, as well as fully-connected NNs, as depicted in Fig. \ref{fig:comp}. It is important to note that, to ensure an equitable comparison, the total number of hidden neurons is held constant $m(d+1-s)^2$ across the different architectures. As demonstrated in Table \ref{table:model_comparing}, for a typical image data with dimension $d=28$, if the target function is recoverable by a CNN with width $m$, then the model rank for different NN architectures exhibits significant variation, ranging from $685m$ for a CNN with weight sharing, to $6760m$ for a CNN without weight sharing, and up to $530660m$ for a fully-connected NN. This stark contrast underscores the vast disparities in their target recovery performance especially when the training data is limited.

\begin{figure}[htbp]
    \centering
    \includegraphics[width=0.8\textwidth]{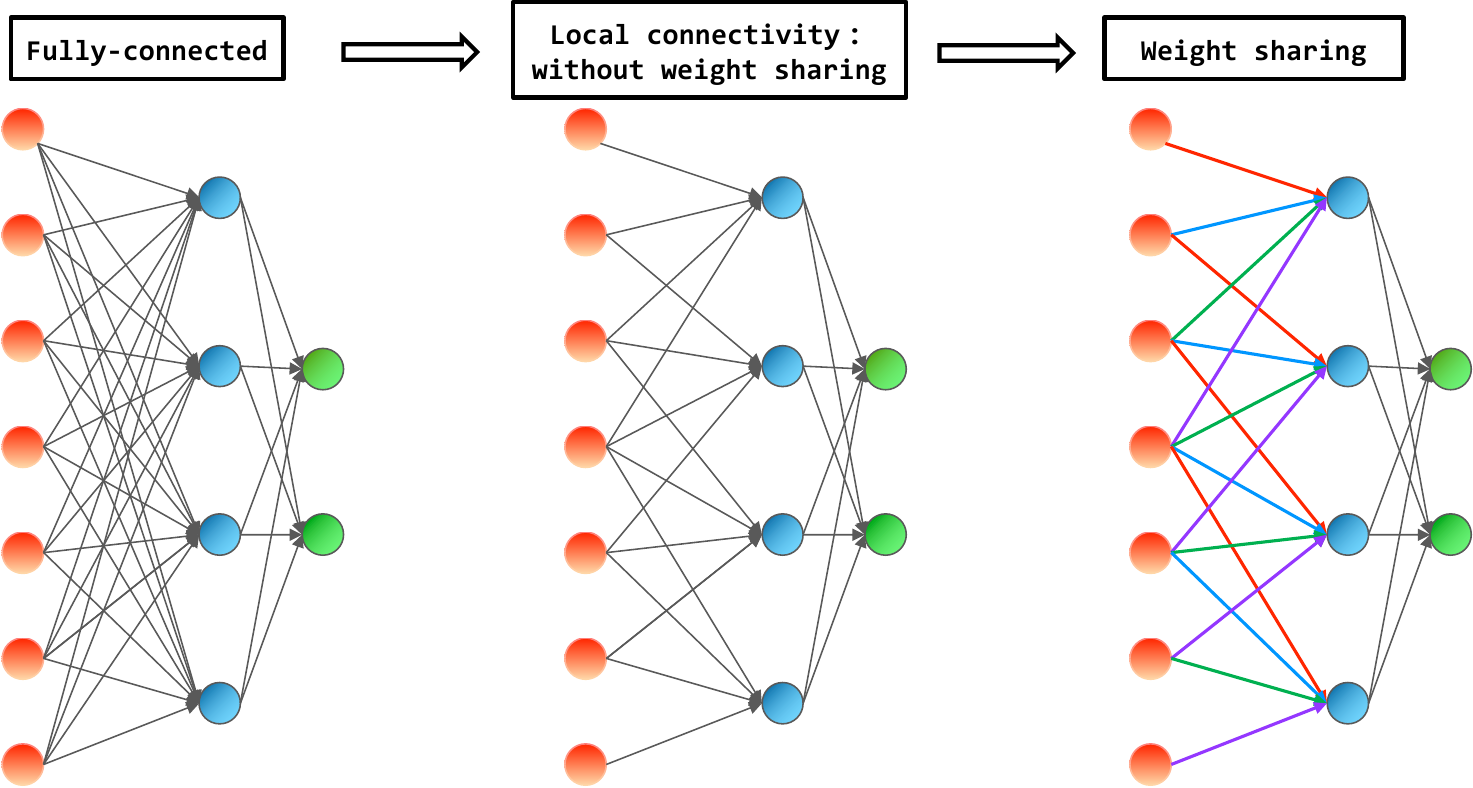}
    \caption{Illustration of architectures from fully-connected NN to CNN for comparison. \label{fig:comp}}
\end{figure}

\begin{table}[htb]
\caption{The optimistic sample size for two-layer tanh-CNN with $m$-kernels of size $s\times s$ and stride $1$. The input $\vx\in\sR^{d\times d}$. For functions in each CNN function set, we also present their model rank in the corresponding CNN without weight sharing and the corresponding fully-connected NN. Here $m_{\mathrm{n}}=m_{\mathrm{null}}=|\{(l,i,j)|a_{lij}^*=0\}|$ counts the number of zero output weights in the target function. \label{table:model_comparing}}
\centering
\renewcommand{\arraystretch}{1.6} 
\small
\begin{tabular}{|c|c|c|c|}
\hline
$f^*$                                                           & CNN & CNN (no sharing)                    & Fully-connected NN                            \\ \hline
$\{0\}$                                                             & $0$                     & $0$                                           & $0$                                           \\ \hline
$\mathcal{F}_1^{\mathrm{CNN}}\backslash\{0\}$                   & $s^2 + (d+1-s)^2$       & $(s^2+1)\left((d+1-s)^2-m_{\mathrm{n}}\right)$  & $(d^2+1)\left((d+1-s)^2-m_{\mathrm{n}}\right)$  \\ \hline
$\vdots$                                                        & $\vdots$                & $\vdots$                                      & $\vdots$                                      \\ \hline
$\mathcal{F}_k^{\mathrm{CNN}}\backslash\mathcal{F}_{k-1}^{\mathrm{CNN}}$ & $k(s^2 + (d+1-s)^2)$    & $(s^2+1)\left(k(d+1-s)^2-m_{\mathrm{n}}\right)$ & $(d^2+1)\left(k(d+1-s)^2-m_{\mathrm{n}}\right)$ \\ \hline
$\vdots$                                                        & $\vdots$                & $\vdots$                                      & $\vdots$                                      \\ \hline
$\mathcal{F}_m^{\mathrm{CNN}}\backslash\mathcal{F}_{m-1}^{\mathrm{CNN}}$ & $m(s^2 + (d+1-s)^2)$    & $(s^2+1)\left(m(d+1-s)^2-m_{\mathrm{n}}\right)$ & $(d^2+1)\left(m(d+1-s)^2-m_{\mathrm{n}}\right)$ \\ \hline
\end{tabular}
\end{table}

\section{Experimental results}
In Fig. \ref{fig: network_stab}, we conduct experiments to assess the practical significance of the previously estimated optimistic sample sizes in relation to the actual fitting performance of two-layer tanh neural networks (NNs) with varying architectures. Our experiments are centered around the target function defined as:
\begin{equation}\label{eq:NN_target}
    f^{*}(\boldsymbol{x}) = \boldsymbol{W}^{*[2]}\tanh(\boldsymbol{W}^{*[1]}\boldsymbol{x}),
\end{equation}
where $\boldsymbol{W}^{*[2]}=[1,1,1]$ and
$$\boldsymbol{W}^{*[1]}=\left[\begin{array}{cccccc}
0.6 &0.8 &1 & 0 &0 \\
0 &0.6 &0.8 &1 & 0 \\
0 &0 &0.6 &0.8 &1 \\
\end{array}\right].$$
We generate both training and test datasets by sampling input data from a standard normal distribution and computing the output using the target function. We employ two-layer tanh-NNs, each with a bias term for the hidden neurons, in a variety of architectures and with different kernels/widths, to fit training datasets of sizes ranging from $1$ to $63$.

It is noteworthy that for a single-kernel CNN, with or without weight sharing, or a fully-connected NN with width $3$ (denoted as $1\textnormal{x}$ in Fig. \ref{fig: network_stab}(b-d)), the model rank coincides with the number of model parameters. Under these conditions, as depicted in Fig. \ref{fig: network_stab}(a), the CNN demonstrates a notably earlier transition to almost-$0$ test error compared to other architectures. However, this finding is somewhat expected since recoveries occur within the traditional over-determined/underparameterized regime.

In Fig. \ref{fig: network_stab}(b-d), we scale up the kernels/widths of the NNs by a factor of $N$, indicated by $N\textnormal{x}$ for each architecture. For $N=100$, the parameter counts for the models are $700$, $1500$, and $2100$, respectively. According to Table \ref{table:model_comparing}, the model rank for the target remains at $7$, $15$, and $21$ (marked by yellow dashed lines), irrespective of the value of $N$. In Fig. \ref{fig: network_stab}(b-d), we observe a postponement in the transition to accurate target recovery for $N>1$, meaning that the test error decreases to nearly zero at a sample size larger than the model rank. Notably, the transition to almost-$0$ test error is much closer to the model rank than to the parameter count, particularly for large $N$ values. We acknowledge that various factors, such as suboptimal hyperparameter tuning, could contribute to the observed delay in recovery. Identifying an optimal training strategy and hyperparameters that enable recovery of a target function as close to its model rank as possible remains a significant challenge in the field.

\begin{figure}[htbp]
	\centering
 	\subfigure[different network types]{\includegraphics[width=0.49\textwidth]{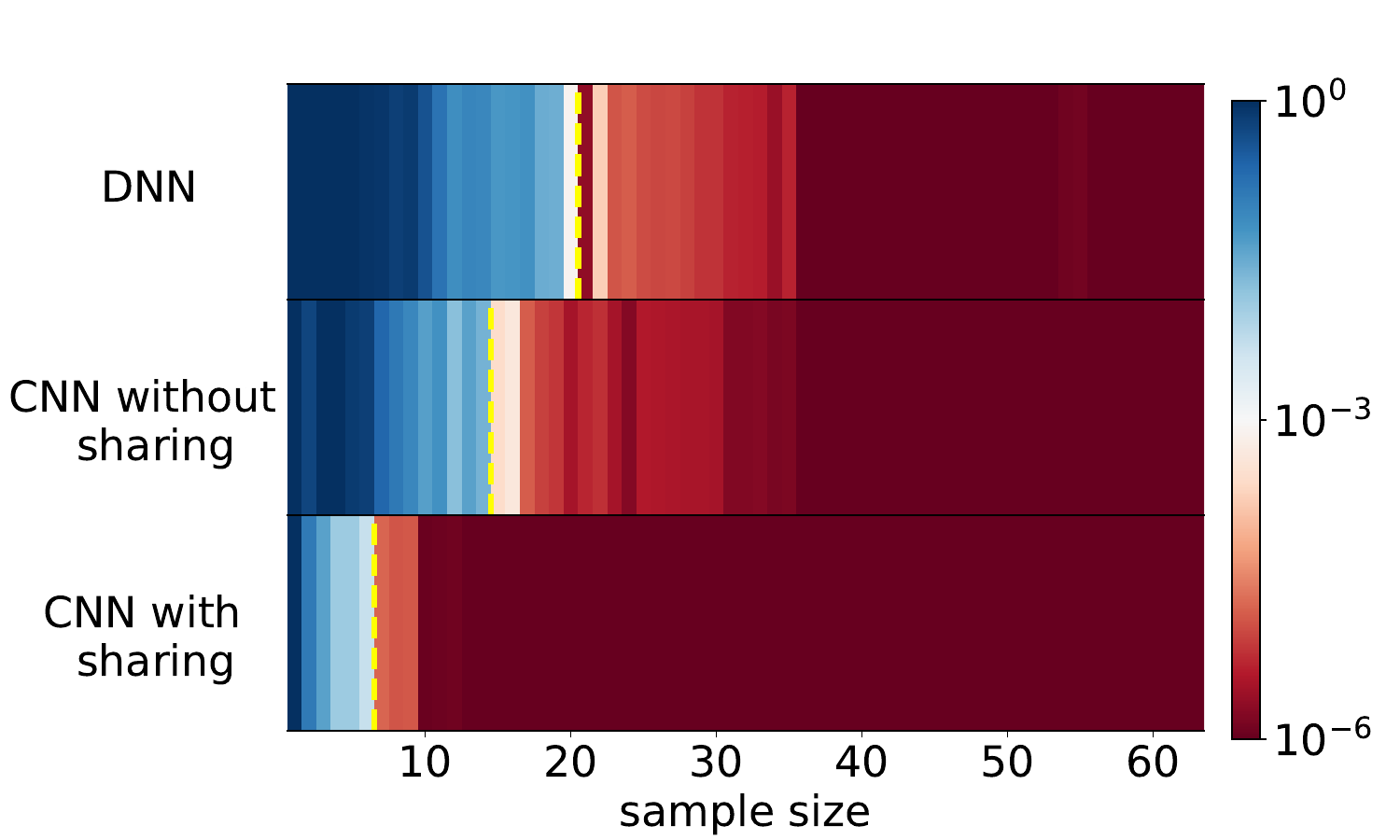}}
	\subfigure[DNN]{\includegraphics[width=0.49\textwidth]{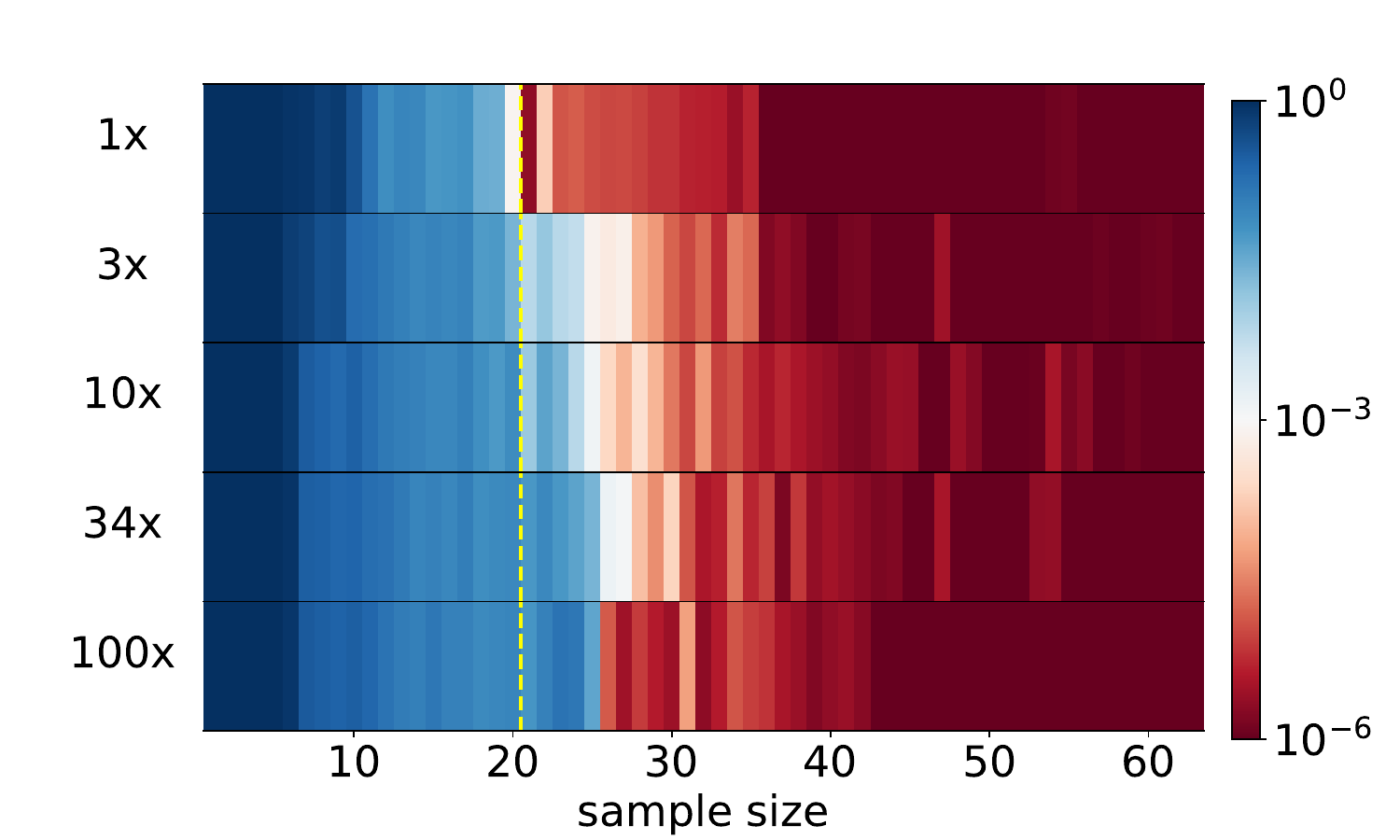}}\\
        \subfigure[CNN without weight sharing]
    {\includegraphics[width=0.49\textwidth]{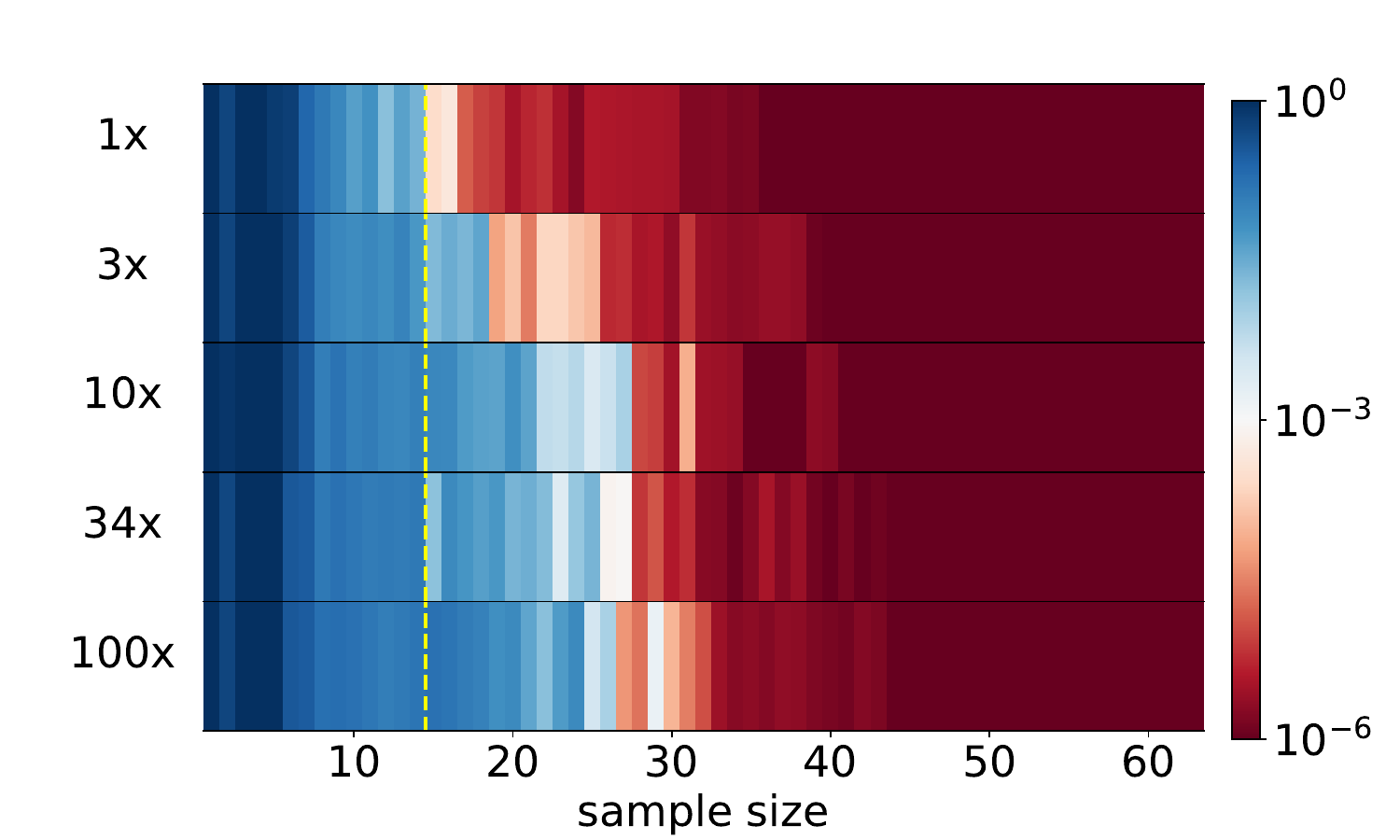}}
        \subfigure[CNN with weight sharing]
    {\includegraphics[width=0.49\textwidth]{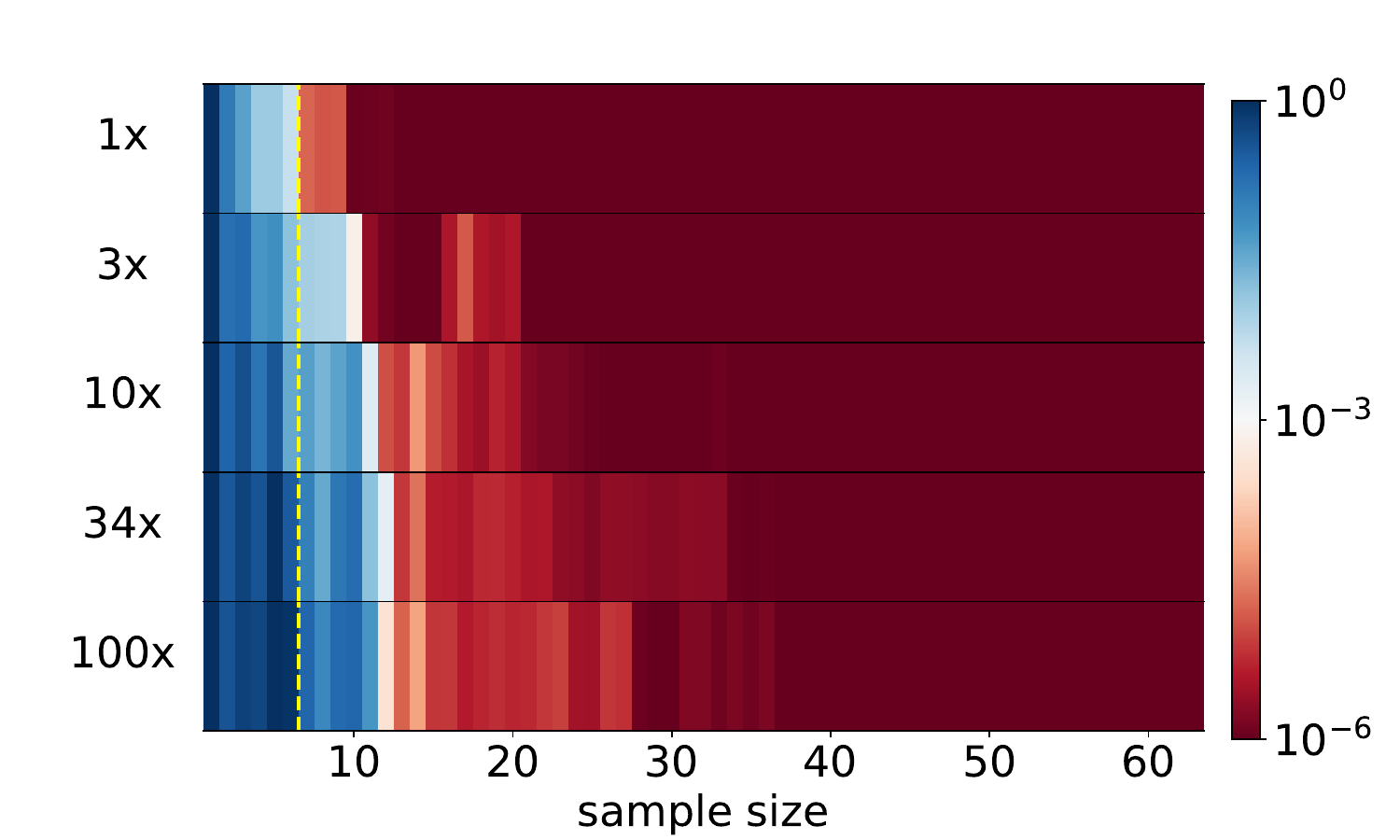}}

\caption{Average test error (color) for NNs of different architectures (ordinate)  and sample sizes (abscissa) in fitting the target function Eq. \eqref{eq:NN_target}. The yellow dashed line for each row indicates the model rank of the target in the corresponding NN. (a) Two-layer $1$-kernel tanh-CNN vs. two-layer $1$-kernel tanh-CNN without weight sharing vs. two-layer width-$3$ fully-connected tanh-NN. Note that these NNs are referred to as $1\textnormal{x}$ for each architecture in (b-d). (b) Two-layer $N$-kernel tanh-CNN, (c) two-layer $N$-kernel tanh-CNN without weight sharing, and (d) two-layer width-$3N$ fully-connected tanh-NN labeled by $N\textnormal{x}$ for $N=1,3,10,34,100$.
For all experiments, network parameters are initialized by a normal distribution with mean $0$ and variance $10^{-20}$, and trained by full-batch gradient descent with a fine-tuned learning rate. For the training dataset and the test dataset, we construct the input data through the standard normal distribution and obtain the output values from the target function. The size of the training dataset varies whereas the size of the test dataset is fixed to $1000$. The learning rate for the experiments in each setup is fine-tuned from $0.05$ to $0.5$ for a better generalization performance.
} 
\label{fig: network_stab}
\end{figure} 

\section{Conclusion}\label{sec:conclusion}

\begin{figure}
    \centering
    \includegraphics[width=0.9\textwidth]{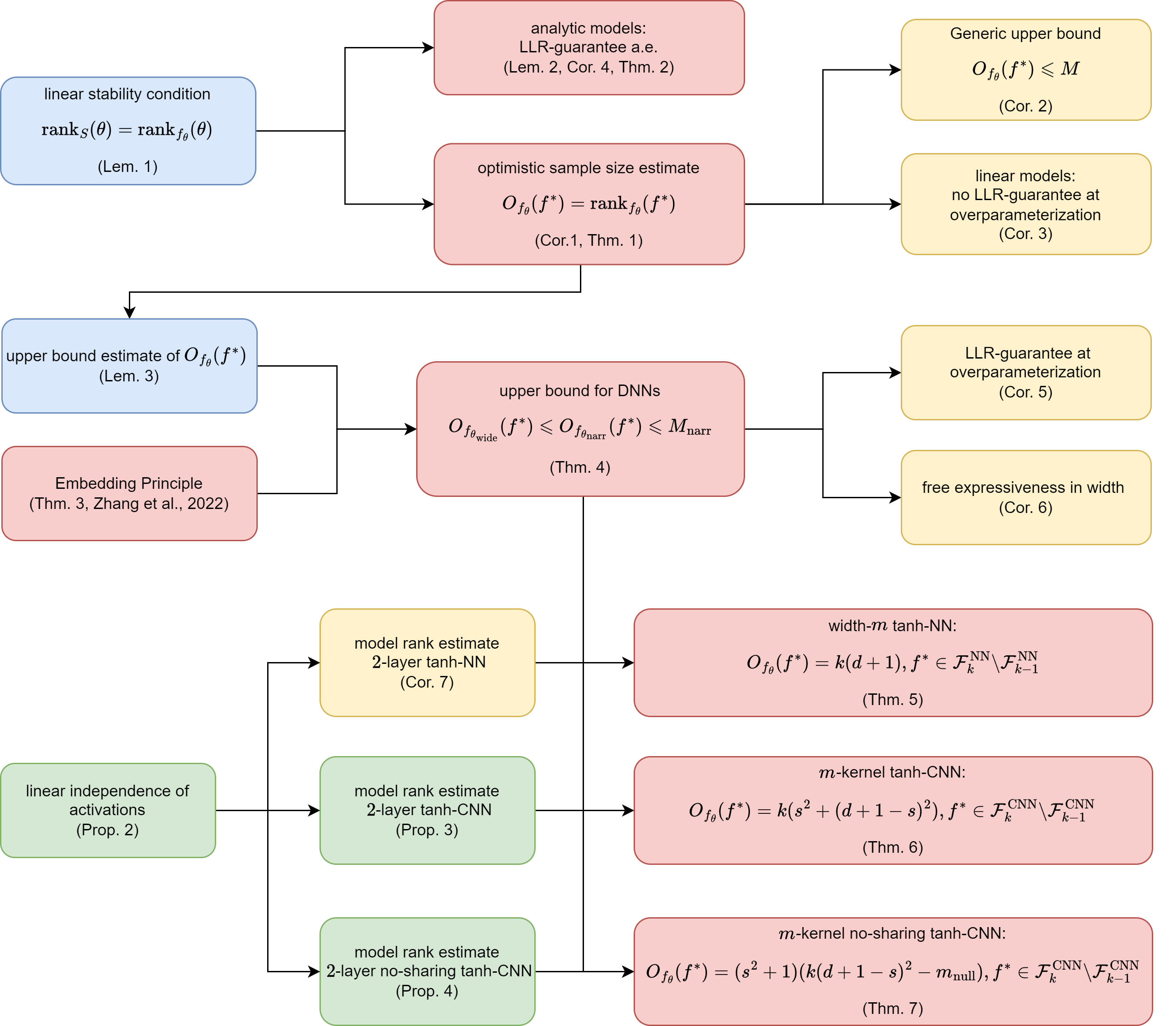}
    \caption{Schematic overview of our theoretical results and interconnections.}
    \label{fig:flow}
\end{figure}

In this study, we have established a local linear recovery (LLR) guarantee for deep neural networks (DNNs), demonstrating that all functions expressible by narrower DNNs possess an LLR-guarantee at overparameterization. Fig. \ref{fig:flow} presents a schematic overview of our theoretical results and their interconnections. Our work lays a solid groundwork for advancing the recovery theory of DNNs. Suggested by our results for two-layer NNs (Section \ref{sec: Optimistic sample sizes for two-layer NN}), the linear independence of neurons as in Proposition \ref{prop: linear independence of neurons} plays a key role in the exact calculation of model rank. For future research, three significant challenges remain open: (i) determining how to recover a target function with a sample size that approaches the optimistic estimate; (ii) investigating whether DNNs can offer stronger forms of recovery guarantees at overparameterization.  
(iii) exploring neuron independence in deeper networks to exactly estimate their model ranks.

More recently, progress has been made on these fronts. \cite{zhang2024implicit} presents empirical evidence suggesting that dropout techniques may enable the recovery of a target function with a sample size that aligns with its optimistic one. Furthermore, \cite{zhang2023structure} introduces a stronger form of recovery guarantee, namely the local recovery guarantee, for two-layer NNs at overparameterization. As we progress, we anticipate a significant deepening of our theoretical grasp of target recovery in DNNs.

\section*{Acknowledgement}
This work is sponsored by the National Key R\&D Program of China  Grant No. 2022YFA1008200, the National Natural Science Foundation of China Grant No. 12101402, the Lingang Laboratory Grant No.LG-QS-202202-08, Shanghai Municipal of Science and Technology Major Project No. 2021SHZDZX0102. We are grateful for the insightful comments provided by Zhi-Qin John Xu and Tao Luo.


\begin{thebibliography}{20}
\providecommand{\natexlab}[1]{#1}
\providecommand{\url}[1]{\texttt{#1}}
\expandafter\ifx\csname urlstyle\endcsname\relax
  \providecommand{\doi}[1]{doi: #1}\else
  \providecommand{\doi}{doi: \begingroup \urlstyle{rm}\Url}\fi

\bibitem[Arora et~al.(2019)Arora, Du, Hu, Li, and Wang]{arora19fine}
Sanjeev Arora, Simon Du, Wei Hu, Zhiyuan Li, and Ruosong Wang.
\newblock Fine-grained analysis of optimization and generalization for overparameterized two-layer neural networks.
\newblock In \emph{ICML}, 2019.

\bibitem[Bai et~al.(2022)Bai, Luo, Xu, and Zhang]{bai2022embedding}
Zhiwei Bai, Tao Luo, Zhi-Qin~John Xu, and Yaoyu Zhang.
\newblock Embedding principle in depth for the loss landscape analysis of deep neural networks.
\newblock \emph{arXiv preprint arXiv:2205.13283}, 2022.

\bibitem[Chizat et~al.(2019)Chizat, Oyallon, and Bach]{chizat2019lazy}
Lenaic Chizat, Edouard Oyallon, and Francis Bach.
\newblock On lazy training in differentiable programming.
\newblock In \emph{NeurIPS}, pages 2937--2947, 2019.

\bibitem[Cooper(2021)]{cooper2018loss}
Yaim Cooper.
\newblock Global minima of overparameterized neural networks.
\newblock \emph{SIAM Journal on Mathematics of Data Science}, 3\penalty0 (2):\penalty0 676--691, 2021.

\bibitem[Fukumizu et~al.(2019)Fukumizu, Yamaguchi, Mototake, and Tanaka]{fukumizu2019semi}
Kenji Fukumizu, Shoichiro Yamaguchi, Yoh-ichi Mototake, and Mirai Tanaka.
\newblock Semi-flat minima and saddle points by embedding neural networks to overparameterization.
\newblock \emph{NeurIPS}, 32:\penalty0 13868--13876, 2019.

\bibitem[Jacot et~al.(2018)Jacot, Gabriel, and Hongler]{jacot2018neural}
Arthur Jacot, Franck Gabriel, and Cl{\'e}ment Hongler.
\newblock Neural tangent kernel: convergence and generalization in neural networks.
\newblock In \emph{NeurIPS}, 2018.

\bibitem[Luo et~al.(2021)Luo, Xu, Ma, and Zhang]{luo2021phase}
Tao Luo, Zhi-Qin~John Xu, Zheng Ma, and Yaoyu Zhang.
\newblock Phase diagram for two-layer relu neural networks at infinite-width limit.
\newblock \emph{Journal of Machine Learning Research}, 22:\penalty0 71--1, 2021.

\bibitem[Mityagin(2020)]{Mityagin2020}
B.~S. Mityagin.
\newblock The zero set of a real analytic function.
\newblock \emph{Mathematical Notes}, 107\penalty0 (3):\penalty0 529--530, Mar 2020.
\newblock ISSN 1573-8876.

\bibitem[Shalev-Shwartz and Ben-David(2014)]{shalev2014understanding}
Shai Shalev-Shwartz and Shai Ben-David.
\newblock \emph{Understanding machine learning: From theory to algorithms}.
\newblock Cambridge university press, 2014.

\bibitem[Shannon(1984)]{shannon1984communication}
Claude~E Shannon.
\newblock Communication in the presence of noise.
\newblock \emph{Proceedings of the IEEE}, 72\penalty0 (9):\penalty0 1192--1201, 1984.

\bibitem[Simsek et~al.(2021)Simsek, Ged, Jacot, Spadaro, Hongler, Gerstner, and Brea]{csimcsek2021geometry}
Berfin Simsek, Fran{\c{c}}ois Ged, Arthur Jacot, Francesco Spadaro, Clement Hongler, Wulfram Gerstner, and Johanni Brea.
\newblock Geometry of the loss landscape in overparameterized neural networks: Symmetries and invariances.
\newblock In \emph{ICML}, pages 9722--9732. PMLR, 2021.

\bibitem[Sun et~al.(2020)Sun, Li, Liang, Ding, and Srikant]{sun2020global}
Ruoyu Sun, Dawei Li, Shiyu Liang, Tian Ding, and Rayadurgam Srikant.
\newblock The global landscape of neural networks: An overview.
\newblock \emph{IEEE Signal Processing Magazine}, 37\penalty0 (5):\penalty0 95--108, 2020.

\bibitem[Zhang et~al.(2017)Zhang, Bengio, Hardt, Recht, and Vinyals]{zhang2016understanding}
Chiyuan Zhang, Samy Bengio, Moritz Hardt, Benjamin Recht, and Oriol Vinyals.
\newblock Understanding deep learning requires rethinking generalization.
\newblock In \emph{ICLR}, 2017.

\bibitem[Zhang et~al.(2021{\natexlab{a}})Zhang, Bengio, Hardt, Recht, and Vinyals]{zhang2021understanding}
Chiyuan Zhang, Samy Bengio, Moritz Hardt, Benjamin Recht, and Oriol Vinyals.
\newblock Understanding deep learning (still) requires rethinking generalization.
\newblock \emph{Communications of the ACM}, 64\penalty0 (3):\penalty0 107--115, 2021{\natexlab{a}}.

\bibitem[Zhang et~al.(2023{\natexlab{a}})Zhang, Zhang, and Luo]{zhang2023structure}
Leyang Zhang, Yaoyu Zhang, and Tao Luo.
\newblock Structure and gradient dynamics near global minima of two-layer neural networks.
\newblock \emph{arXiv preprint arXiv:2309.00508}, 2023{\natexlab{a}}.

\bibitem[Zhang et~al.(2021{\natexlab{b}})Zhang, Zhang, Luo, and Xu]{zhang2021embedding}
Yaoyu Zhang, Zhongwang Zhang, Tao Luo, and Zhi-Qin~John Xu.
\newblock Embedding principle of loss landscape of deep neural networks.
\newblock \emph{NeurIPS}, 2021{\natexlab{b}}.

\bibitem[Zhang et~al.(2022)Zhang, Li, Zhang, Luo, and Xu]{zhang2022embedding}
Yaoyu Zhang, Yuqing Li, Zhongwang Zhang, Tao Luo, and Zhi-Qin~John Xu.
\newblock Embedding principle: a hierarchical structure of loss landscape of deep neural networks.
\newblock \emph{Journal of Machine Learning}, 1:\penalty0 1--45, 2022.

\bibitem[Zhang et~al.(2023{\natexlab{b}})Zhang, Zhang, Zhang, Bai, Luo, and Xu]{zhang2023optimistic}
Yaoyu Zhang, Zhongwang Zhang, Leyang Zhang, Zhiwei Bai, Tao Luo, and Zhi-Qin~John Xu.
\newblock Optimistic estimate uncovers the potential of nonlinear models.
\newblock \emph{arXiv preprint arXiv:2307.08921}, 2023{\natexlab{b}}.

\bibitem[Zhang and Xu(2024)]{zhang2024implicit}
Zhongwang Zhang and Zhi-Qin~John Xu.
\newblock Implicit regularization of dropout.
\newblock \emph{IEEE Transactions on Pattern Analysis and Machine Intelligence}, pages 1--13, 2024.

\bibitem[Zhou et~al.(2022)Zhou, Zhou, Jin, Luo, Zhang, and Xu]{zhou2022empirical}
Hanxu Zhou, Qixuan Zhou, Zhenyuan Jin, Tao Luo, Yaoyu Zhang, and Zhi-Qin~John Xu.
\newblock Empirical phase diagram for three-layer neural networks with infinite width.
\newblock \emph{NeurIPS}, 2022.

\end{thebibliography}

\end{document}